\documentclass[a4paper,12pt]{article}
\usepackage[ruled]{algorithm2e}
\usepackage{algorithmic}
\usepackage{amssymb,,amsmath,epsfig,amsthm,authblk,dsfont,bm}
\usepackage{enumitem}
\usepackage{smile}
\usepackage[numbers,sort&compress]{natbib}
\usepackage[margin=1in]{geometry}
\usepackage{setspace,color}
\usepackage{tikz}
\usepackage{epstopdf}
\usepackage{grffile}
\usepackage{makecell}
\usepackage{multirow}
\usepackage{graphbox}
\usepackage{lineno}

\usepackage{hyperref}
\hypersetup{
	colorlinks=true,
	linkcolor=blue,
	citecolor=blue,
	filecolor=blue,      
	urlcolor=blue
}

\providecommand{\keywords}[1]{\textbf{Key words:} #1}

\newcommand{\bone}{\mathbf{1}}

\newcommand{\Sig}{\mathrm{Sig}}

\allowdisplaybreaks
\begin{document}
	\title{PottsMGNet: A Mathematical Explanation of Encoder-Decoder Based Neural Networks }
	\date{}
	\author{
 Xue-Cheng Tai\thanks{NORCE Norwegian Research Centre, Nyg\r{a}rdstangen, NO-5838 Bergen, Norway Email: xtai@norceresearch.no, xuechengtai@gmail.com. The work of Xue-Cheng Tai is partially supported by NSFC/RGC grant N-HKBU214-19 and NORCE Kompetanseoppbygging program.}, 
 Hao Liu\thanks{Corresponding author. Department of Mathematics, Hong Kong Baptist University, Kowloon Tong, Hong Kong. Email: haoliu@hkbu.edu.hk. The work of Hao Liu is partially supported by HKBU 179356 and NSFC 12201530.}, 
 Raymond Chan\thanks{Department of Mathematics, City University of Hong Kong, Hong Kong. 
 Hong Kong Centre for Cerebro-Cardiovascular Health Engineering. Email:
 raymond.chan@cityu.edu.hk. The work of Raymond Chan is partially supported by
HKRGC GRF grants CityU1101120, CityU11309922 and CRF grant C1013-21GF.}
	}
	\maketitle
	
	\begin{abstract}
		For problems in image processing and many other fields, a large class of effective neural networks has encoder-decoder-based architectures. Although these networks have made impressive performances, mathematical explanations of their architectures are still underdeveloped. In this paper, we study the encoder-decoder-based network architecture from the algorithmic perspective and provide a mathematical explanation. We use the two-phase Potts model for image segmentation as an example for our explanations.  We associate the segmentation problem with a control problem in the continuous setting. Then, multigrid method and operator splitting scheme, the PottsMGNet, are used to discretize the continuous control model. We show that the resulting discrete PottsMGNet is equivalent to an encoder-decoder-based network. With minor modifications, it is shown that a number of the popular encoder-decoder-based neural networks are just instances of the proposed PottsMGNet.  By incorporating the Soft-Threshold-Dynamics into the PottsMGNet as a regularizer, the PottsMGNet has shown to be robust with the network parameters such as network width and depth and achieved remarkable performance on datasets with very large noise. In nearly all our experiments, the new network always performs better or as good on accuracy and dice score than existing networks for image segmentation.   
	\end{abstract}
	
	\keywords{Potts model, operator splitting, deep neural network, image segmentation}
	\section{Introduction}
	Deep neural networks have demonstrated great performances in many image processing tasks, such as image segmentation \cite{ronneberger2015u,long2015fully,chen2017deeplab, badrinarayanan2017segnet, liu2022deep,zhou2019unetpp}, image denoising \cite{zhang2017beyond,anwar2019real}, etc. Although deep neural networks have provided remarkable results, mathematical explanations for their success are still underdeveloped. 
	
	A popular explanation or intuition for network architecture is the encoder-decoder framework which  decomposes the network into two parts: the encoder part and the decoder part. When a high-dimensional variable is passed to the network, the encoder part first uses several computational layers and downsampling layers to extract useful low-dimensional features. Then these features are passed to the decoder to reconstruct the desired high-dimensional output. Such an architecture is widely used in networks for image segmentation, such as UNet \cite{ronneberger2015u}, UNet++ \cite{zhou2019unetpp}, DeepLab \cite{chen2017deeplab}, and SegNet \cite{badrinarayanan2017segnet}. While the encoder-decoder framework only provides a general picture of the network architecture, i.e., the decoder does a dimension reduction and the decoder does a data reconstruction, more detailed explanations, such as what is the role of each layer in the network, and what mathematical model we are solving by this network, is still unclear. 
	
	For image segmentation, a large class of models is based on theories of min cut/ max flow. The continuous max flow and min cut problems are studied in \cite{bae2011global,yuan2010study}. Stemming from Markov random fields, many efficient methods are designed based on graph cuts \cite{boykov2001fast}. A very important mathematical model for image segmentation is the Potts model  \cite{potts1952some}.  Application of the Potts model for segmentation and classification \cite{boykov2006graph,boykov2004experimental,yuan2010continuous} have been extensively studied for discrete graph settings and can be efficiently solved by max flow algorithms. The new survey \cite{tai2021potts} contains a comprehensive overview of the Potts model and its fast algorithms. The Potts model was first proposed for statistical mechanics in \cite{potts1952some}, and can be taken as a generalization of the two-state Ising model to lattice \cite{pock2009convex}. It is used for binary graph cuts in \cite{boykov2001fast}. In \cite{boykov2001fast,boykov2006graph}, efficient graph cut algorithms, equipped with fast graph cut techniques, are proposed for min cut problems such as image segmentation with the Potts model.
	It has been shown in \cite{bae2011global,yuan2010study} that the Potts model is equivalent to a continuous min cut and max flow problem: If the Potts model is discretized with certain approximations, it reduces to existing graph cut models, see \cite{wei2018new} for some more detailed explanations. In \cite{yuan2010study}, the alternating direction method of multipliers (ADMM) is used to solve the Potts model. However, the convergence of the algorithm is not guaranteed. Recently, in \cite{sun2021efficient}, based on the Eckstein-Bertsekas \cite{eckstein1992douglas} and Fortin-Glowinski \cite{fortin1983chapter} splitting techniques, two novel preconditioned ADMMs for the Potts model with guaranteed convergence are proposed.
	
	Operator-splitting methods are powerful tools for solving complicated optimization problems. In general, an operator-splitting method decomposes a complicated problem into several easy-to-solve subproblems so that each subproblem either has a closed-form solution or can be solved efficiently. Based on how these subproblems are solved, operator-splitting methods can be divided into parallel splitting methods \cite{lu1992parallel} and sequential splitting methods \cite{glowinski2003finite,marchuk1990splitting}. As indicated by the name, parallel splitting methods solve all subproblems in parallel and then combine the results by averaging; sequential splitting methods solve subproblems sequentially. In fact, ADMM is a special type of operator-splitting method. Operator-splitting methods have been applied in numerical methods for partial differential equations \cite{glowinski2019finite,liu2019finite}, inverse problems \cite{glowinski2015penalization}, computational fluid dynamics \cite{bonito2016operator,mrad2022splitting}, obstacle problems \cite{liu2022fast}, surface reconstructions \cite{he2020curvature} and image processing \cite{liu2021color,duan2022fast,deng2019new,liu2022operator}. Compared to ADMM, operator-splitting methods have fewer parameters and are not sensitive to the choices of parameters. For problems from image processing, it is shown in \cite{deng2019new,duan2022fast} that operator-splitting methods are more efficient. We refer readers to \cite{glowinski2017splitting,glowinski2019fast,glowinski2016some} for a comprehensive discussion on operator-splitting methods.
	
	In this paper, we provide a mathematical explanation of encoder-decoder-based convolutional neural networks for image segmentation from the perspective of mathematical models and algorithms. We use the two-phase image segmentation as an example, but the explanations can be generalized to multiphase and other classification problems as well. We solve the two-phase segmentation problem using the Potts model with the length represented by Soft-Threshold-Dynamics \cite{alberti1998non,merriman1992diffusion,liu2022deep}. We derive the Euler-Lagrange equation of the problem and associate it with an initial value problem with control variables. We propose a novel operator-splitting method, the hybrid splitting, which combines parallel splitting and sequential splitting to discretize the control problem. By incorporating the hybrid splitting method with the multigrid method and with proper choices of the control variables, we obtain a scheme we call PottsMGNet, which has the same architecture as an encoder-decoder-based neural network. Each layer in the network corresponds to a substep of the splitting scheme, and its convolutional kernels and biases correspond to the control variables.
	Our contributions can be summarized as follows:
	\begin{itemize}
		\item We provide a clear and concise mathematical explanation of a large class of encoder-decoder-based neural networks, which are essentially some operator-splitting schemes with multigrid method for some optimal control problems. These explanations are important for designing and improving neural networks, providing clear guidelines for the choice of the number of layers and neurons on each layer, and making the networks more explainable, with each part having a clear mathematical meaning.
		
		\item Our proposed PottsMGNet is a multigrid-based numerical splitting scheme for solving control problems. With proper settings, our framework recovers the widely used neural networks in the literature for image segmentation problems, just with different activation functions. Each layer in the network corresponds to a substep of the splitting scheme, and its convolutional kernels and biases are the control variables. 
		
		\item Numerical tests on different datasets show that PottsMGNet is robust with respect to network parameters and performs better or as well as existing popular encoder-decoder neural networks. It can handle data with large noise and is more robust than other networks with similar architectures.
		
		\item We also develop a novel hybrid splitting method for solving time-evolutional equations. Our hybrid splitting method incorporates parallel splitting and sequential splitting and we prove that it is first-order accurate. In the current work, they are used to get the neural networks. They can be used for solving other problems as well. 
	\end{itemize}
	
	This work has been inspired by a wealth of pioneering research seeking mathematical explanations for neural networks. In particular, we would like to mention several key references, including \cite{Weinan2017, Weinan2020,  sussillo2014opening, funahashi1989approximation, Masters1993, Zhou2020a, He2019a, ruthotto2020deep,  Haber2018, Haber2018b, Cheng2023a, Benning2019,celledoni2021structure,benning2019deep}. 
	It has been established that neural networks possess universal approximation properties \cite{Masters1993, Zhou2020a}. The work of E and co-authors \cite{Weinan2017, Weinan2020} proposed to view neural networks as continuous dynamical systems and as special discretizations of continuous problems. This has been a source of inspiration for our work. Sussillo and Barak \cite{sussillo2014opening} explored low-dimensional dynamics in high-dimensional recurrent neural networks. Ruthotto et al. \cite{ruthotto2020deep} introduced deep neural networks motivated by partial differential equations, and ODE concepts are used in Haber and Ruthotto \cite{Haber2018} to obtain stable architectures for deep neural networks. The relation between deep neural networks and control problems is investigated in \cite{benning2019deep}. Many researchers have observed that UNet and other encoder-decoder neural networks are related to multiscale techniques \cite{Haber2018b, Haber2018}. Haber and Ruthotto \cite{Haber2018b} proposed multiscale methods for convolutional neural networks. In \cite{He2019a}, MgNet is proposed as a unified framework that uses multigrid linear operators as feature extraction operators in traditional convolutional neural networks. The similarity between neural networks and operator splitting methods are mentioned and utilized in \cite{lan2022dosnet}. Continuous UNet using a second-order ODE has been proposed in \cite{Cheng2023a}.  A systematic review is provided in \cite{celledoni2021structure}. However, none of the aforementioned references have explained neural networks as an operator splitting discretization with a multigrid spatial approximation for the continuous Potts model considered as a control problem, as we have done in this work.
	
	This paper is structured as follows.
	In Section \ref{sec.PottsNet}, we introduce the Potts model. 
 In Section \ref{sec.control}, we formulate a control problem to solve the Potts model and our framework to learn the control variables. We introduce PottsMGNet, an operator splitting scheme, for the control problem and its connections to neural networks in Section \ref{sec.PottsMGNet}. In Section \ref{sec.dis}, we discuss the discretization and solution to each subproblem. In Section \ref{sec.relation}, we show that PottsMGNet is in fact a neural network and discuss how to modify it to recover existing popular neural networks for image segmentation. In Section \ref{sec.experiment}, we present our numerical experiment results, and we conclude this paper in Section \ref{sec.conclusion}.
	
	\paragraph{Notation:} Throughout this paper, we use lowercase letters to denote scalar variables and functions. Bold letters are used to denote vectors and vector-valued functions. Capital letters are used to denote tensors and operators. Calligraphic letters are used to denote sets.
	
	\section{Introduction to the Potts model}
	\label{sec.potts}
	\label{sec.PottsNet}
	
	In this section, we  give a brief introduction to the Potts model \cite{potts1952some,tai2021potts}. See Appendix \ref{sec.potts.derivation} for  more detailed derivations in getting this model. 
	
	We use image segmentation to present the Potts model and its numerical algorithms. Let $\Omega$ be the image domain. The continuous two-phase Potts model is of the form:
	\begin{align}
		\begin{cases}
			\min\limits_{\Omega_0,\Omega_1} \left\{ \sum_{k=0}^1 \displaystyle\int_{\Omega_k} g_k(\xb) d\xb+\frac{1}{2} \displaystyle\sum_{k=0}^1 |\partial \Omega_k| \right\},\\
			\Omega_0\cup \Omega_1=\Omega,\ 
			\Omega_0\cap \Omega_1=\emptyset,
		\end{cases}
		\label{eq.potts2.0}
	\end{align}
	where $|\partial\Omega_k|$ is the perimeter of $\Omega_k$, and $g_k$'s are nonnegative weight functions depending on the input image.  By solving (\ref{eq.potts2.0}), the image domain is segmented into two regions: $\Omega_0$ and $\Omega_1$.
	
	By utilizing a regularized softmax operator \cite{liu2022deep} and techniques from threshold dynamics \cite{ alberti1998non,merriman1992diffusion,esedog2015threshold,liu2022deep}, the two-phase Potts model (\ref{eq.potts2.0}) can be approximated by the following problem (see Appendix \ref{sec.potts.derivation} for the derivation)
	\begin{multline}
		\min_{v(\xb)\in [0,1]} \Bigg[\int_{\Omega} v gd\xb+ \varepsilon\int_{\Omega} (v\ln v +(1-v)\ln (1-v))d\xb  
		\\ 
		+ \eta\int_{\Omega} v(\xb)(G_\sigma\ast (1-v))(\xb) d\xb\Bigg],
		\label{eq.potts2.4}
	\end{multline}
	for some constant $\varepsilon>0$ and $\eta\geq0$, where $g=g_1-g_2$, $G_\sigma$ is the Gaussian kernel $
	G_\sigma(\xb)=\frac{1}{2\pi\sigma^2} \exp\left(-\frac{\|\xb\|^2}{2\sigma^2}\right)$ for some given $\sigma>0$.  As usual $\ast$ denotes the convolution operator. By minimizing (\ref{eq.potts2.4}) and denoting the minimizer by $u$, the image domain is segmented into two regions corresponding to $u\leq 0.5$ and $u>0.5$. In (\ref{eq.potts2.4}), the second integral is used to get the regularized softmax operator, see Appendix \ref{sec.potts.derivation} for detailed explanations about this. The third integral approximates the perimeter of the two regions. One can show that as $\varepsilon\rightarrow 0$, we have $u(\xb)\in\{0,1\}$ for any $\xb\in\Omega$, see Lemma \ref{lem.limit} in Appendix \ref{sec.potts.derivation}.
	
	If the minimizer $u$ exists, it can be proven that $u\in (0,1)$ and it satisfies the Euler-Lagrangian equation:
	\begin{align}
		\varepsilon \ln \frac {u}{1-u} + \eta G_\sigma*(1-2u) +g = 0, \quad \forall \xb \in \Omega.
		\label{eq.Potts.EL}
	\end{align} 
	The corresponding gradient flow equation on the time interval $(0,T]$ with a proper initial state $u_0(\xb)$ is given as:
	\begin{align}
		\begin{cases}
			u_t=-\varepsilon \ln \frac {u}{1-u} - \eta G_\sigma*(1-2u) -g, \ (\xb,t) \in \Omega\times (0,T], \\ 
			u(\xb, 0)      = u_0, \ \xb \in \Omega. 
		\end{cases}		
		\label{eq.Potts.EL.ivp}
	\end{align} 
	\begin{remark}
		Following the formulations in Appendix \ref{sec.potts.derivation}, we can easily extend our model and algorithms proposed later to multiphase segmentation and classification problems. To make the presentations clear, we will stay with the two-phase problem in this work. 
	\end{remark}

	\section{A control problem for the Potts model}
	\label{sec.control}

	For image segmentation, we assume $f$ is a given image defined on the image domain $\Omega$. We take the initial function $u_0=H(f)$ for some appropriate operator $H$ to be specified later. Based on (\ref{eq.Potts.EL}), we consider the following gradient flow equation for the Potts model with control variables $W(\xb,t)$ and $d(\xb,t)$: 
	\begin{equation}
		\left\{
		\begin{array}{l}
			\displaystyle \frac {\partial u}{\partial t}  = -\varepsilon \ln  \frac u {1-u} - \eta G_\sigma*( 1-2u) 
			+ W(\xb,t) \ast u +  d(\xb,t) , 
			\ (\xb,t) \in \Omega\times (0,T], \\ 
			u(\xb, 0)      = H(f), \ \xb \in \Omega. 
		\end{array}
		\right.  
		\label{eq:potts.control.k=2}
	\end{equation}
	To clarify, in (\ref{eq:potts.control.k=2}), we are considering a Potts model with two labels on a domain $\Omega\subset \mathbb{R}^2$. The function $u(\xb,t):\Omega \times [0,T]\rightarrow \mathbb{R}$ represents the label of each point $\xb$ in the domain at time $t$. The term $W(\xb,t) * u$ is the convolution of $u$ with a weight function $W(\xb,t): D \times [0,T]\rightarrow \mathbb{R} $ for some domain $D\subset \RR^2$, which essentially represents a weighted average of the labels of neighboring points. The support of $W(\xb,t)$ is normally small and the domain $D$ may differ from $\Omega$. The parameters $\varepsilon$ and $\eta$ control the approximation of the binary functions and are used to regularize the labels. Specifically, the length regularization with the term associated with this is used to ensure that neighboring points have similar labels, thereby promoting spatial coherence of the labeling. The parameter $\eta$ controls the strength of this regularization, with larger values promoting more spatial coherence. 
	
	The function $g(\xb,t)$ in (\ref{eq.Potts.EL}) is replaced by $W(\xb,t) * u + d(\xb,t)$ in (\ref{eq:potts.control.k=2}), where $W(\xb,t)$ and $d(\xb,t)$ are control functions that can be used to steer the final state $u(\xb,T)$ to some desirable state. In this way, $W(\xb,t)$ and $d(\xb,t)$ are treated as control variables that can be adjusted to achieve the desired labeling of the domain at the final time $T$. In (\ref{eq:potts.control.k=2}), the control variable $W(\xb,t)$ is applied to $u$ by convolution since convolution is widely used in deep learning methods for image processing. We remark that the convolution in (\ref{eq:potts.control.k=2}) (and those discussed in Section \ref{sec.PottsMGNet}) can be replaced by any other linear operations.
	
	In this context, we are considering an optimal control approach to obtain a segmentation operator for two-phase image segmentation. Specifically, we are given a set of $I$ images $f_i:\Omega \rightarrow \mathbb{R}^3$ that have been segmented into foreground and background regions, denoted by $v_i:\Omega \rightarrow \{0,1\}^2$ for $i=1,2,\dots,I$. Denote  $\cN_1$ as the mapping  from $ f\rightarrow u(\xb,T)$, i.e. $\cN_1(f,\theta_1) = u(x,T)$ with $u(x,T)$ being the solution of (\ref{eq:control}) at time $T$, where $\theta_1=(W(\xb,t), d(\xb,t))$ denotes the control variables. We seek to find a set of control variables $\theta_1$ that minimize the distance between the segmentation result obtained by applying the mapping $\mathcal{N}_1$ to each image $f_i$ and its corresponding ground truth segmentation $v_i$:
	\begin{equation}\label{eq:control}
		\min_{\theta_1} \sum_{i=1}^I \mathcal{L}(\mathcal{N}_1(f_i,\theta_1) , v_i ).
	\end{equation}
	The distance measure used is the cross entropy, which is a common choice for comparing probability distributions:
	$$
	\mathcal{L}(u,v) = -u  \ln v - (1-u)\ln(1-v).
	$$
	Formally, the optimization problem we solve is given by (\ref{eq:control}), where $\mathcal{L}(\cdot,\cdot)$ measures the distance between two segmentation results, and $\mathcal{N}_1(f_i,\theta_1)$ represents the segmentation result obtained by applying the mapping $\mathcal{N}_1$ to the image $f_i$ using the control variables $\theta_1$. Since the control variables $\theta_1$ appear in the mapping $\mathcal{N}_1$, we can view (\ref{eq:control}) as an optimal control problem with multiple targeting states $v_i$ and control variables $\theta_1$. The control equation (\ref{eq:potts.control.k=2}) contains nonlinear and bilinear terms, which makes the problem more challenging to solve. However, the controllability of a similar control equation has been shown for a single targeting state in \cite{Khapalov2001}, which justifies the suitability of our approach.
	
	\section{Our PottsMGNet}
	\label{sec.PottsMGNet}
	To solve the optimal control problem (\ref{eq:control}) numerically, we need two discretizations: one for the control variables $\theta_1=(W(\xb,t), b(\xb,t))$ and another one for the corresponding state variable $u$. To make the presentation clear, we will only consider the discretization of $\theta_1$ now and leave the discretization of the state variable later. So the state variables $u$ are continuous functions in this section and will be discretized in Section \ref{sec.dis}.
	
	\subsection{Multigrid discretizations}
	The image domain $\Omega$ and the convolution kernel domain $D$ may be different. We assume that both $\Omega$ and $D$ have been discretized in the multigrid setting as explained in Appendix \ref{app.multigrid}. As convolution is used, we need to extend the values of $u$ and $\theta_1$ outside the domains. Due to this, we will simply take $\Omega=D=\mathbb{R}^2$ and apply zero padding for $\theta_1$ and $u$.  Correspondingly, $\cT^j, \cV^j$ are the multigrids and spaces over $\mathbb{R}^2$ with the corresponding padding techniques, where $j$ denotes the grid level.  In this paper, we use $\cT^1$ to represent the finest grid (the image's original resolution). The grid gets coarser as $j$ increases. We will use the V-cycle multigrid technique, which consists of a left branch and a right branch. In the left branch, computations are conducted from fine grids to coarse grids sequentially. In the right branch, computations are conducted from coarse grids to fine grids sequentially. See Figure \ref{fig.V} for an illustration of a V-cycle of the multigrid method. Our notations for multigrid follow more from \cite{xu1992iterative, tai2002global,chan1994domain}. 
	\begin{figure}[t!]
		\centering
		\includegraphics[width=0.6\textwidth]{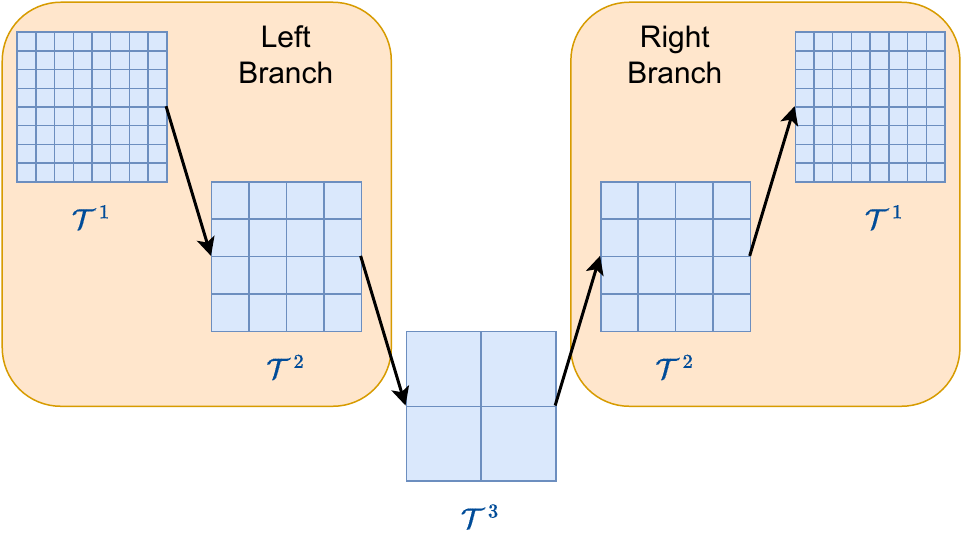}
		\caption{An illustration of a V-cycle of multigrid method.}
		\label{fig.V}
	\end{figure}

	\subsection{A basic decomposition for $\theta_1$}
	\label{sec.BasicAlg}
	
	The approach presented in \cite{tai2003rate,tai1998subspace,tai2002global} for solving the optimization problems in the discrete multigrid setting involves decomposing the search space into many subspaces or subsets. The goal is to search for the minimizer iteratively through these subspaces either in parallel or sequentially. This approach can be seen as decomposing the minimization variable into a sum of variables from different subspaces.
	The traditional V-cycle multigrid method is a sequential iterative procedure that operates over the multigrid subspaces defined in Appendix \ref{app.multigrid} in a specific order, as shown in Figure \ref{fig.V}.

	Traditionally, space decomposition and subspace correction are used to interpret multigrid and domain decomposition methods \cite{xu1992iterative,chan1994domain}.  We will apply the space decomposition ideas \cite{tai2003rate,tai1998subspace,tai2002global} for solving the optimization problems (\ref{eq:control}). This idea has been used to decompose a large space into the sum of smaller spaces and then solve the problems iteratively over the smaller spaces. But here, it is used differently. The control variable $\theta_1$ is decomposed into a sum of variables from subspaces. One purpose for this decomposition is to increase the number of unknowns for the control variables. This is different from earlier usages for the space decomposition ideas. Then we will use a hybrid splitting scheme to split the components of $\theta_1$ into several subproblems, which are solved in parallel or sequentially. The general idea is that all components of $\theta_1$ are gone through when we apply one iteration of the scheme. Details of the hybrid splitting scheme are discussed in Appendix \ref{sec.hybrid}. To make the presentation clear, we will present the ideas step by step in the following on how to decompose the control variable $\theta_1$. Our idea is to decompose $\theta_1$ so that the control variables and operators have a similar form as those in (\ref{eq.general.hybrid.relax}). When there is no ambiguity, we omit control variables' dependency on $\xb$ or $t$ for the simplicity of the presentation.
	
	\begin{enumerate}[label=(\roman*)]
		\item 
		First, we decompose the control variables $W(\xb,t)$ and $d(\xb,t)$ as in the following:
		\begin{align}\label{eq:decomp}
			W(\xb,t)=A(\xb,t)+\tilde{A}(\xb,t), \ d(\xb,t)=b(\xb,t)+\tilde{b}(\xb,t).
		\end{align}
		These variables will be further split next. Above, $A, b$ contain the control variables in the left branch of the multigrid V-cycle, and $\tilde{A}, \tilde b$ contain the control variables in the right branch of the multigrid V-cycle, see later steps for details. We also decompose the operator as follows: 
		\begin{align}
			-\varepsilon \ln  \frac u {1-u} - \eta K*( 1-2u)=S(u)+\tilde{S}(u).
			\label{eq.S.sum}
		\end{align}
		
		\item Second, the control variables are further decomposed as: 
		\begin{align}
			&A=\sum_{j=1}^J A^j,\quad b=\sum_{j=1}^{J-1} b^j, \quad S=\sum_{j=1}^J S^j,\\
			&\tilde{A}=\sum_{j=1}^{J-1} \tilde{A}^j+A^*,	\quad \tilde{b}=\sum_{j=1}^{J-1} \tilde{b}^j+ b^*,\quad \tilde{S}=\sum_{j=1}^{J-1} \tilde{S}^j+S^*.
			\label{eq.decompose.2}
		\end{align}
		Above, $A^j,b^j,S^j, \widetilde{A}^j, \widetilde{b}^j, \widetilde{S}^j$ contain control variables at grid level $j$, $A^*$, $b^*$, and $S^*$ contain control variables that are applied to the output of the V-cycle at the finest mesh, i.e. $A^j, \tilde A^j\in \cV^j, A^*\in \cV^1, b^j, \tilde b^j, b^* \in \mathbb{R}$.
		\item At grid level $j$ in the left branch, we further decompose 
		\begin{align}
			A^{j}=\sum_{k=1}^{c_j} A^{j}_k, \quad b^{j}=\sum_{k=1}^{c_j} b^{j}_k, \quad  S^{j}=\sum_{k=1}^{c_j} S^{j}_k.
		\end{align}
		Above, $c_j$'s are positive integers which are often called the channel number at grid level $j$ and they are fixed parameters for the network. The reason to do these decompositions is to increase the number of unknowns in the control variables which enables us to handle large datasets. We compute $c_j$ intermediate outputs with the $A_k^j$'s. Variables $A_k^j \in \cV^j , b_k^j \in \mathbb{R}$ contain control variables producing the $k$-th intermediate output. This is accomplished via a hybrid splitting scheme with $c_j$ parallel splittings, see Appendix \ref{sec.hybrid} for details. We do the same decomposition for the right branch, i.e.
		\begin{align}
			\tilde A^{j}=\sum_{k=1}^{c_j} \tilde A^{j}_k, \quad \tilde b^{j}=\sum_{k=1}^{c_j} \tilde b^{j}_k, \quad  \tilde S^{j}=\sum_{k=1}^{c_j} \tilde S^{j}_k.
		\end{align}

		\item 	At grid level $j$ for the $k$-th intermediate output, we again further decompose
		\begin{align}
			A^{j}_k=\sum_{s=1}^{c_{j-1}} A^{j}_{k,s}.
		\end{align} 
		The purpose is also to increase the number of unknowns for the control variables.  The $k$-th intermediate output at grid level $j$ is computed using all intermediate outputs from the previous grid level. Variable $A_{k,s}^j$ is used to convolve with the $s$-th intermediate output from grid level $j-1$. We do the same decomposition for the right branch.
		\item The $A^*$ variable is also further decomposed as
		\begin{align}
			A^*=\sum_{s=1}^{c_1} A_s^*,
		\end{align}
		where $A_s^*$ is used to convolve with the $s$-th output from level $1$ of the right branch.
	\end{enumerate}	
	
	After these decompositions, we see that the control variables are decomposed as: 
	\begin{align}
		& A(\xb, t) = \sum_{j=1}^J \sum_{k=1}^{c_j} \sum_{s=1}^{c_{j-1}}  A_{k,s}^j(\xb, t), \quad &&
		\tilde A(\xb, t)  = \sum_{j=1}^{J-1} \sum_{k=1}^{c_j} \sum_{s=1}^{c_{j-1}}  \tilde A _{k,s}^j(\xb, t) +\sum_{s=1}^{c_1} A^*_s(\xb, t), \\ 
		& b(\xb, t) =\sum_{j=1}^J \sum_{k=1}^{c_j}   b_{k}^j(\xb, t), 
		&& \tilde b (\xb, t)= \sum_{j=1}^{J-1} \sum_{k=1}^{c_j} \tilde b_{k}^j (\xb, t)+ \tilde b^*(\xb, t), 
	\end{align}
	and the operators $S(u), \tilde S(u)$ are decomposed as: 
	\begin{align}
		S(u) = \sum_{j=1}^J \sum_{k=1}^{c_j} S_{k}^j (u), \ \tilde{S}(u)=\sum_{j=1}^{J-1} \sum_{k=1}^{c_j} \tilde{S}_{k}^j (u) +S^*(u).
	\end{align}
	The Potts gradient flow is transferred into: 
	\begin{align}
		\begin{cases}
			\frac{\partial u}{\partial t}=A*u+\tilde{A}*u + b +\tilde{b} + S(u)+\tilde{S}(u), \ (\xb,t)\in \Omega\times [0,T],\\
			u(\xb,0)=H(f), \ \xb\in \Omega.
		\end{cases}
		\label{eq.control.full}
	\end{align}
	
	With the decomposition discussed above, (\ref{eq.control.full}) has a similar form as (\ref{eq.general.hybrid.relax}) with $M_j=1, c_{j,1}=c_{2J-j}=c_j$ for $1\leq j \leq J$.  We solve problem (\ref{eq.control.full}) by the hybrid splitting method proposed in Appendix \ref{sec.hybrid}.  Let us divide the time interval $[0,T]$ into $N$ subintervals with time step size $\Delta t = T/N$. We denote our numerical solution at time $t^n=n\Delta t$ by $U^n$. In our scheme, we use $u$ and $v$ to denote intermediate variables. Their superscript $j$ indicates that they are discretized on $\cT^j$, i.e. $v^j_k,u^j_k, \bar{u}^j\in \cV^j$. The resulting scheme by applying Algorithm \ref{alg.hybrid.general} to (\ref{eq.control.full}) for updating $U^{n}$ to $U^{n+1}$ is summarized in Algorithm \ref{alg:alg3}, where the dependency of control variables on $\xb$ is omitted. 
	
	The architecture of Algorithm \ref{alg:alg3} is illustrated in Figure \ref{fig.alg.basic}. The explanations of all indices for operators and variables of the left branch are summarized in Table \ref{tab.V.full}. In Algorithm \ref{alg:alg3}, for each grid level, a relaxation is used to pass information from the left branch to the right branch, as indicated by the green arrows.
	
	\begin{figure}[t!]
		\centering
		\includegraphics[width=0.7\textwidth]{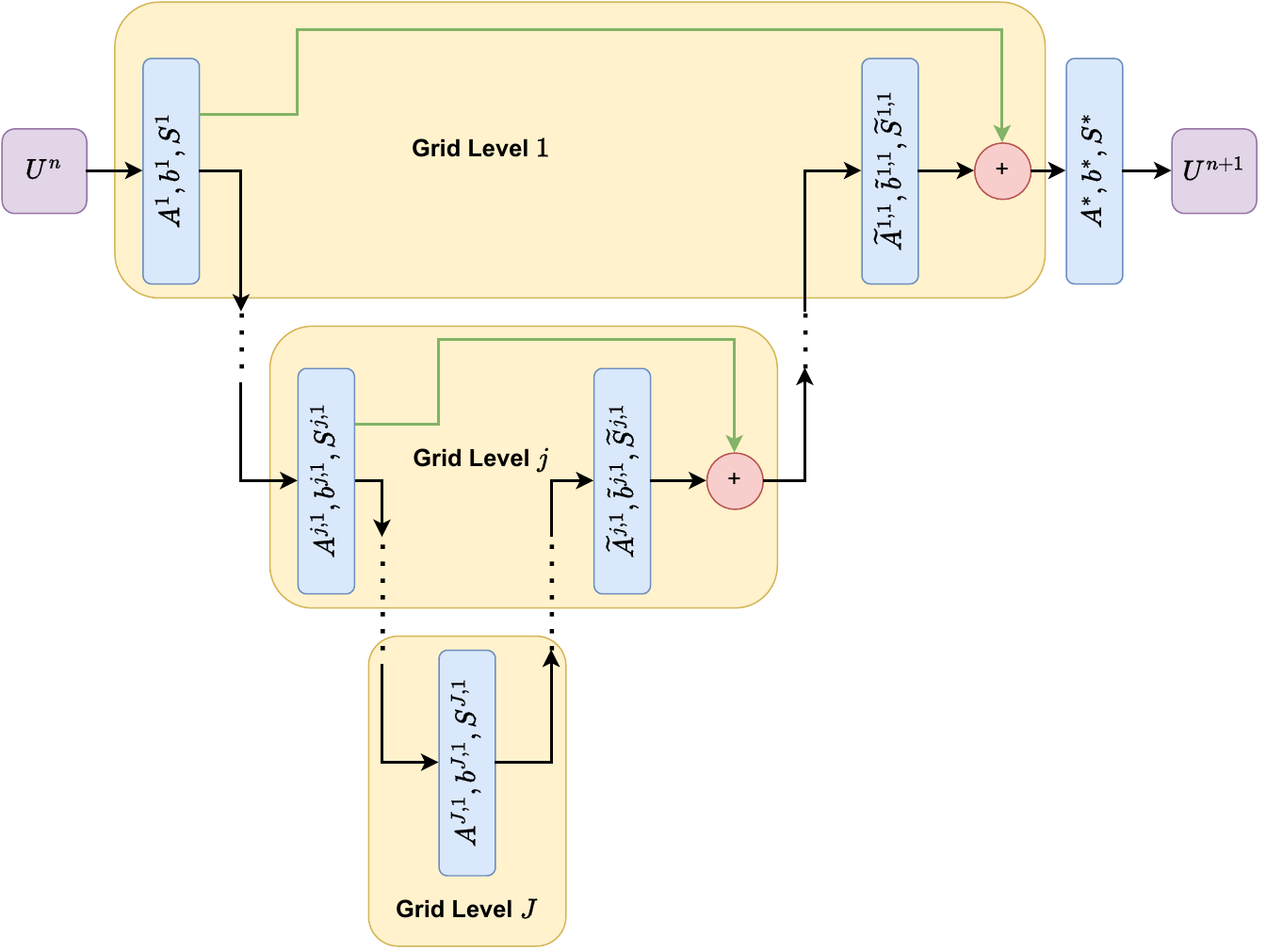}
		\caption{Illustration of Algorithm \ref{alg:alg3}.}
		\label{fig.alg.basic}
	\end{figure}
	
	Let us denote $\theta_2=\{\theta_2^n\}_{n=1}^N$ with 
	\begin{align}
		&\theta_2^n =\left(\{ A^j_{k,s}(\xb, t^n)\}_{j,k,s},\{ \tilde A^j_{k,s}(\xb, t^n)\}_{j,k,s}, \{A_s^*(\xb, t^n)\}_s, \{b_k^j(\xb, t^n)\}_{j,k}, \{ \widetilde{b}_k^j(\xb, t^n)\}_{j,k}, \widetilde{b}^*(\xb, t^n)\right).  
		\label{eq.theta2}
	\end{align}
	We also denote $\cN_2$ as the mapping:
	$$
	\cN_2:f\rightarrow H(f)\rightarrow U^1 \rightarrow \cdots \rightarrow U^N, 
	$$
	which maps $f$ to $U^N$ by applying Algorithm \ref{alg:alg3} $N$ times. Parameters $\theta_2$ are learned by solving
	\begin{equation}
		\min_{\theta_2} \sum_{i=1}^I \mathcal{L}(\mathcal{N}_2(f_i,\theta_2) , v_i ).
		\label{eq.min.basic}
	\end{equation}
	In (\ref{eq.min.basic}), $\theta_2$ is a space decomposition representation for a discretization of $\theta_1$. The operation procedure $\cN_2$ is a numerical scheme solving (\ref{eq:potts.control.k=2}). We can see that problem (\ref{eq.min.basic}) is a discretization of the continuous Potts model (\ref{eq:control}) with some proper decomposition of the control variables. 
	
	\begin{algorithm}[th!]
		\caption{A basic V-Cycle multigrid control algorithm}\label{alg:alg3}
		\begin{algorithmic}
			\STATE \textbf{Data:} The solution $U^n$ at time step $t^n$.
			\STATE \textbf{Result:} The computed solution $U^{n+1}$ at time step $t^{n+1}$.
			\STATE {\bf Set} $c_{0} = 1, v^{0} = U^n, v_1^0 = U^n$. \\
			\FOR {$j = 1,  \cdots, J$}
			\FOR{$k=1,2,\cdots c_j$}
			\STATE Compute $v_k^{j}$ by solving
			\begin{align}
				\frac {v_k^{j} - v^{j-1}} {2^{j-1}c_{j} \Delta t } =   
				\sum_{s=1}^{c_{j-1}} 
				A_{k,s}^{j}(t^n)  * v_s^{j-1}   + b_k^j(t^n)   + S_{k}^{j}(v_k^{j} ).
				\label{eq.basic.v}
			\end{align}
			
			\ENDFOR
			
			Compute $v^{j}$ as
			\[
			v^{j} = \frac 1 {c_{j}} \sum_{k=1}^{c_{j}} v_k ^ {j} . 
			\]
			\ENDFOR
   
			{\bf Set} $\bar{u}^J= v^J$ and $ \bar{u}_k^J = v_k^J, k =1, 2, \cdots c_J$.\\
			\FOR {$j = J-1,  \cdots, 1$}
			\FOR {$k=1,2,\cdots c_j$}
			\STATE Compute $u_k^{j}$ by solving 
			\begin{align}
				&\frac {u_k^{j} - \bar{u}^{j+1}} {2^j c_{j} \Delta t } =   
				\sum_{s=1}^{c_{j+1}} 
				\tilde A_{k,s}^{j}(t^n)  * \bar{u}_s^{j+1}   +\tilde{ b}_k^j(t^n)   + \tilde S_{k}^j(u_k^{j} ).
				\label{eq.basic.u}
			\end{align}

			\ENDFOR

			Compute $\bar{u}^j_k$ and $\bar{u}^j$ as
			\begin{align}
				\bar{u}^j_k =\frac{1}{2} u^j_k + \frac{1}{2} v_k^j
			\end{align}
			for $k=1,...,c_j$, and 
			
			\[
			\bar{u}^{j} = \frac{1}{c_j}\sum_{j=1}^{c_j} \bar{u}^{j}_k.
			\]
			\ENDFOR
			
			Compute $U^{n+1}$ by solving
			\[
			\frac {U^{n+1} - \bar{u}^{1}} {\Delta t } =   
			\sum_{s=1}^{c_{1}} 
			A_{s}^{*}(t^n)  * u_s^{1}   + b^*(t^n)   +  S^*(U^{n+1} ).
			\]
		\end{algorithmic}
	\end{algorithm}

	\subsection{Relationship to neural networks:}  
	For readers familiar with convolutional neural networks, it is immediately evident that the first two terms on the right-hand side of (\ref{eq.basic.v}) and (\ref{eq.basic.u}) are computing the same thing as is done by the `conv2d' function in PyTorch. 
	The architecture of Algorithm \ref{alg:alg3} is equivalent to a simple encoder-decoder-based neural network with $2J$ layers:
	\begin{enumerate}[label=(\roman*)]
		\item The left and right branches of the V-cycle correspond to the encoder and decoder in neural networks, respectively. In the left branch, computations from fine grids to coarse grids are conducted, which is an encoding process. In the right branch, computations from coarse grids to fine grids are conducted, which is a decoding process. In general, Algorithm \ref{alg:alg3} is equivalent to a neural network with $2J$ layers: $2J-1$ layers for the encoder and decoder, and 1 final layer.
		\item Computations at grid level $j$ in the left branch corresponds to the $j$-th layer of the corresponding network. At this grid level, the number of parallel splittings $c_j$ corresponds to the width of the $j$-th layer of the network. The index $k$ corresponds to the $k$-th channel of this layer.
		\item The relaxation in the right branch corresponds to the skip pathways between encoders and decoders in the network. 
	\end{enumerate}
	
	\begin{table}[ht!]
		\centering
		\begin{tabular}{c|c|c|c|c}
			\hline
			\makecell{For $A^{j}_{k,s},b^{j}_{k},S^{j}_k$,\\ $A^{j,l}_{k,s},b^{j,l}_{k},S^{j,l}_k$} & $j$ & $l$ & $k$ & $s$\\
			\hline
			\makecell{ Index meaning: \\index of} & grid levels & \makecell{sequential \\splittings} & \makecell{parallel \\splittings} & \makecell{output from \\ the previous substep}\\
			\hline
			\makecell{For $u^{j}_k,v^{j}_k$,\\ $u^{j,l}_k,v^{j,l}_k$} &$j$ & $l$ & $k$ &-\\
			\hline
			\makecell{ Index meaning: \\index of} & grid levels & \makecell{sequential \\splittings} & \makecell{parallel \\splittings} & -\\
			\hline
		\end{tabular}
		\caption{Explanation of indices for kernels and variables in the left branch of Algorithm \ref{alg:alg3} and \ref{alg.V.full}.}
		\label{tab.V.full}
	\end{table}
	
	\subsection{A general decomposition for $\theta_1$}
	\label{sec.FullAlg}
	
	In Algorithm \ref{alg:alg3}, excluding the final substep, we only have two sequential steps at each grid level: one in the left branch and one in the right branch. As a consequence, the equivalent network has two layers for every grid level: one layer in the encoder and one layer in the decoder. In practice, many popular networks use several layers for each grid level, for example, UNet has four layers for each grid level. We then modify the splitting strategy of Algorithm \ref{alg:alg3} to generalize it so that it is equivalent to more general networks. 
	We decompose the control variables $\theta_1$ as follows:
	\begin{enumerate}[label=(\roman*)]
		\item[(i)--(ii)] The first two decompositions are the same as the first two decompositions in Section \ref{sec.BasicAlg}, after which we get the decomposition of $A,b,S$ and $\tilde{A},\tilde{b},\tilde{S}$.
		
		\setcounter{enumi}{2}
		\item At grid level $j$, let $L_j$ be a positive integer representing the number of substeps we want to perform at grid level $j$ in the left (and right) branch. We decompose 
		\begin{align}
			A^{j}=\sum_{l=1}^{L_j} A^{j,l}, \quad b^j=\sum_{l=1}^{L_j} b^{j,l},\quad  S^j=\sum_{l=1}^{L_j} S^{j,l}.
		\end{align}
		We use a sequential splitting to divide the operators into $L_j$ sets, where $A^{j,l},b^{j,l}$ and $S^{j,l}$ are the sets of operators used at the $l$-th sequential substep. We do the same decomposition for the right branch.
		\item At grid level $j$ and the $l$-th sequential substep, we decompose 
		\begin{align}
			A^{j,l}=\sum_{k=1}^{c_j} A^{j,l}_k, \quad b^{j,l}=\sum_{k=1}^{c_j} b^{j,l}_k, \quad  b^{j,l}=\sum_{k=1}^{c_j} S^{j,l}_k.
		\end{align}
		We use a hybrid splitting with $c_j$ parallel splittings to treat all operators, where operators $A^{j,l}_k,b^{j,l}_k$ and $S^{j,l}_k$ are used in the $k$-th parallel splitting. We do the same decomposition for the right branch.
		\item At grid level $j$, the $l$-th sequential step and the $k$-th parallel splitting, we take the $c_j$ outputs from the $(l-1)$-th sequential step as inputs, and use kernels from $A^{j,l}_k$ to convolve with them. Therefore, we decompose $A^{j,l}_k$ into $c_j$ kernels:
		\begin{align}
			A^{j,l}_k=\sum_{s=1}^{c_j} A^{j,l}_{k,s} \quad \mbox{ with } \quad 
			c_{j,l}=\begin{cases}
				c_{j-1} & \mbox{ if } l=1,\\
				c_j & \mbox{ if } l>1.
			\end{cases}
			\label{eq.c}
		\end{align} 
		We do the same decomposition for the right branch:
		\begin{align}
			\widetilde{A}^{j,l}_k=\sum_{s=1}^{\widetilde{c}_j} \widetilde{A}^{j,l}_{k,s} \quad \mbox{ with } \quad 
			\tilde{c}_{j,l}=\begin{cases}
				c_{j+1} & \mbox{ if } l=1,\\
				c_j & \mbox{ if } l>1.
			\end{cases}
			\label{eq.ctilde}
		\end{align}

		\item We decompose $A^*$ in the same way as the decomposition step (v) in Section \ref{sec.BasicAlg}.
	\end{enumerate}
	
	After the decompositions, the control variables and operations are decomposed as:
	\begin{align}
		&A(\xb,t)=\sum_{j=1}^J \sum_{l=1}^{L_j}\sum_{k=1}^{c_j} \sum_{s=1}^{c_{j,l}}  A_{k,s}^{j,l}(\xb,t), \quad &&\tilde{A}(\xb,t)=\sum_{j=1}^J \sum_{l=1}^{L_j}\sum_{k=1}^{c_j} \sum_{s=1}^{\tilde{c}_{j,l}}  A_{k,s}^{j,l}(\xb,t)+ \sum_{s=1}^{c_1} A^*_s(\xb,t),  \label{eq.full.A}\\
		&b(\xb,t)=\sum_{j=1}^J \sum_{l=1}^{L_j}\sum_{k=1}^{c_j}   b_{k}^{j,l}(\xb,t), \quad &&\tilde{b}(\xb,t)=\sum_{j=1}^J \sum_{l=1}^{L_j}\sum_{k=1}^{c_j}  b_{k,s}^{j,l}(\xb,t)+  b^*(\xb,t),
		\label{eq.full.b}\\
		&S(u)=\sum_{j=1}^J  \sum_{l=1}^{L_j} \sum_{k=1}^{c_j} S_{k}^j (u), \quad && \tilde{S}(u)=\sum_{j=1}^{J-1}  \sum_{l=1}^{L_j} \sum_{k=1}^{c_j} \tilde{S}_{k}^j (u) +S^*(u).
		\label{eq.full.S}
	\end{align}
	
	The decomposed variables $A(\xb,t),\widetilde{A}(\xb,t), b(\xb,t),\widetilde{b}(\xb,t)$ and the operations $S$ and $\widetilde{S}$ have similar forms as those variables and operations in Appendix \ref{sec.hybrid}. Thus we can use the hybrid splitting method Algorithm \ref{alg.hybrid.general} to solve the control problem.  The resulting scheme for updating $U^{n}$ to $U^{n+1}$ is summarized in Algorithm \ref{alg.V.full}, where the dependency of the control variables on $\xb$ is omitted. Note that Algorithm \ref{alg.V.full} is a special case of Algorithm \ref{alg.hybrid.general} by setting $M_j=M_{2J-j}=L_j$ for $1\leq j\leq J$, $d_{j,m}=c_{j,m-1}, c_{j,m}=c_{2J-j,m}=c_j $ for $1\leq j\leq J, 1\leq m\leq L_j$.  The architecture of Algorithm \ref{alg.V.full} is illustrated in Figure \ref{fig.alg}. The explanations of all indices for operators and variables of the left branch are summarized in Table \ref{tab.V.full}. 
	
	\begin{algorithm}
		\caption{A general V-Cycle  multigrid control  algorithm}\label{alg.V.full}
		\begin{algorithmic}
			\STATE {\bf Data:} The solution $U^n$ at time $t^n$.
			\STATE {\bf Result:} The computed solution $U^{n+1}$ at time step $t^{n+1}$.
			
			{\bf Set} $c_{0} = 1, L_0=1, v^{0} = U^n, v_1^{0,1} = U^n$. \\
			\FOR {$j = 1,  \cdots, J$}
			\STATE Set $v^{j,0}=v^{j-1,L_{j-1}}$ and $v_k^{j,0}=v_k^{j-1,L_{j-1}}, \ k=1,...,c_{j-1}$, \\
			\FOR {$l=1,...,L_{j}$}
			\FOR {$k=1,...,c_{j}$}
			\STATE Compute $v_k^{j,l}$ by solving
			\begin{align}
				\frac {v_k^{j,l} - v^{j,l-1}} {2^{j-1}c_j \Delta t } =   
				\sum_{s=1}^{c_{j,l}} 
				A_{k,s}^{j,l}(t^n)  * v_s^{j,l-1}   + b_{k}^{j,l}(t^n)   + S_{k}^{j,l}(v_k^{j,l} ),
				\label{eq.full.v}
			\end{align}
			where $c_{j,l}$ is defined in (\ref{eq.c}) and (\ref{eq.ctilde}).
			\ENDFOR 
			
			Compute $v^{j+1,l}$ as
			\begin{equation*}
				v^{j,l} = \frac 1 {c_{j}} \sum_{k=1}^{c_{j}} v_k ^ {j,l} . 
			\end{equation*}
			\ENDFOR 
			
			\ENDFOR
			
			{\bf Set} $\bar{u}^{J,L_J}= v^{J,L_J}$ and $ u_k^{J,L_J} = v_k^{J,L_J}$ for $k =1, 2, \cdots c_J$.\\
			\FOR {$j = J-1,  \cdots, 1$}
			\STATE Set $u^{j,0}=\bar{u}^{j+1,L_{j+1}}$ and $u_k^{j,0}=u_k^{j+1,L_{j+1}}$, $k=1,...,c_{j+1}$, \\
			\FOR {$l=1,...,L_j$}
			\FOR {$k=1,2,\cdots c_j$}
			\STATE Compute $u_k^{j,l}$ by solving 
			\begin{align}
				&\frac {u_k^{j,l} - u^{j,l-1}} {2^{j} c_{j} \Delta t } =   
				\sum_{s=1}^{\tilde{c}_{j,l}} 
				\tilde A_{k,s}^{j}(t^n)  * u_s^{j,l-1}   + \tilde b_{k}^{j}(t^n)   + \tilde S_{k}^j(u_k^{j} ),
				\label{eq.full.u}
			\end{align}					
			where $\tilde{c}_{j,l}$ is defined in (\ref{eq.c}) and (\ref{eq.ctilde}).
			\ENDFOR
			
			Compute $u^{j,l}$ as
			\[
			u^{j,l} = \frac 1 {c_{j}} \sum_{k=1}^{c_j} u_k ^ {j,l}.
			\]
			
			\ENDFOR
			
			Compute $\bar{u}^{j,L_j}_k, \bar{u}^{j,L_j}$ as 
			\begin{align} 
				\bar{u}^{j,L_j}_k=\frac{1}{2} u^{j,L_j}_k+ \frac{1}{2} v^{j,L_j}_k, \quad
				\bar{u}^{j,L_j}=\frac{1}{c_j} \sum_{k=1}^{c_j} \bar{u}_k^{j,L_j}
				\label{eq.full.relax}
			\end{align}

			\ENDFOR
			
			Compute $U^{n+1}$ by solving
			\begin{align}
				\frac {U^{n+1} - \bar{u}^{1,L_1}} {\Delta t } =   
				\sum_{s=1}^{c_{1}} 
				A_{s}^{*}(t^n)  * u_s^{1,L_1}   + b^*(t^n)   +  S^*(U^{n+1} ).
				\label{eq.full.final}
			\end{align}
		\end{algorithmic}
	\end{algorithm}
	
	\begin{figure}
		\centering
		\includegraphics[width=\textwidth]{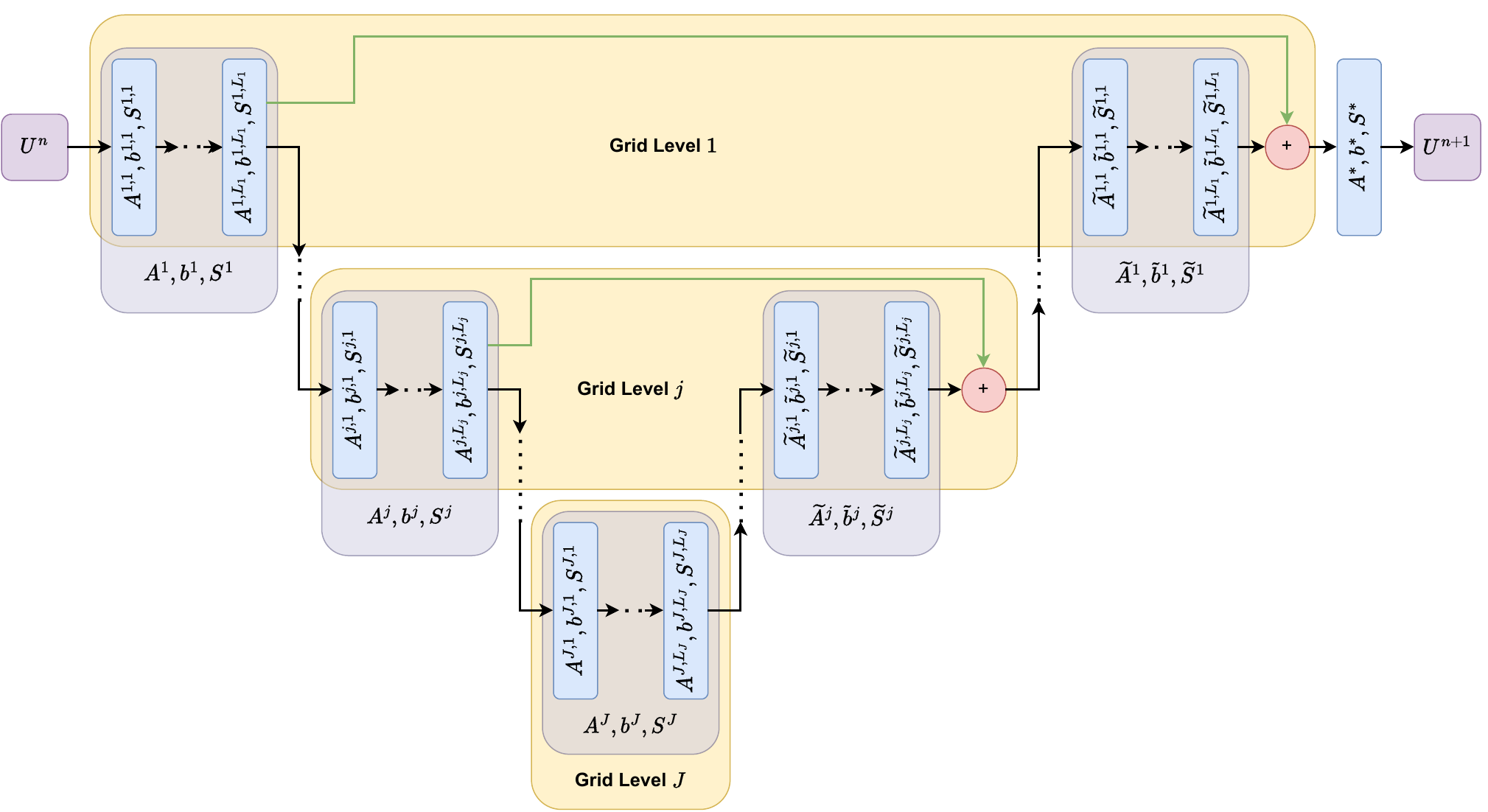}
		\caption{An illustration of Algorithm \ref{alg.V.full}.}
		\label{fig.alg}
	\end{figure}
	Similar to (\ref{eq.theta2}), denote the collection of all parameters discussed in this subsection by $\theta_3$, and denote
	$$
	\cN_3: f\rightarrow H(f)\rightarrow U^1 \rightarrow \cdots \rightarrow U^N
	$$
	as the mapping that maps $f$ to $U^N$ by applying Algorithm \ref{alg.V.full} for $N$ times. Parameters $\theta_3$ are learned by solving
	\begin{equation}
		\min_{\theta_3} \sum_{i=1}^I \mathcal{L}(\mathcal{N}_3(f_i,\theta_3) , v_i ).
		\label{eq.min.general}
	\end{equation}

	\paragraph{Connections to neural networks:} Algorithm \ref{alg.V.full} gives a mathematical explanation of many popular encoder-decoder-based network architectures. In addition to the connections between Algorithm \ref{alg:alg3} with neural networks, Algorithm \ref{alg.V.full} has one more connection: the number of sequential splitting for each grid level in Algorithm \ref{alg.V.full} corresponds to the number of layers for each data resolution in an encoder-decoder-based network. Compared to Algorithm \ref{alg:alg3}, this additional connection makes Algorithm \ref{alg.V.full}  a mathematical explanation of a more general class of encoder-decoder-based networks.
	
	Note that Algorithm \ref{alg:alg3} is a special case of Algorithm \ref{alg.V.full} by setting $L_j=1$ for $j=1,...,J$. In the rest of this paper, we focus on Algorithm \ref{alg.V.full}.
	
	\section{Numerical discretization}
	\label{sec.dis}
	
	Algorithm \ref{alg.V.full} is semi-constructive as we still need to discretize $u,v$ in space and solve the semi-implicit problems (\ref{eq.full.v}), (\ref{eq.full.u}) and (\ref{eq.full.final}). We discuss spatial discretization in this Section.

	\subsection{On the choices of $S,\tilde{S}$}
	\label{sec.select}
	
	In Algorithm \ref{alg.V.full}, one needs to solve (\ref{eq.full.v}), (\ref{eq.full.u}) and (\ref{eq.full.final}), which includes components of $S,\tilde{S}$. We discuss the choices of $S,\tilde{S}$ and present how to solve (\ref{eq.full.v}), (\ref{eq.full.u}) and (\ref{eq.full.final}) in the next subsection.
	
	According to (\ref{eq.S.sum}), $S+\tilde{S}$ consists of two terms: (i) The first term is $-\varepsilon\ln \frac{u}{1-u}$ which is resulted from the regularized softmax. This term enforces $u$ to be between 0 and 1. (ii) The second term $-\eta G*(1-2u)$ results from the length penalty, which promotes smooth boundaries in the segmentation $u$. As all substeps except the last substep in Algorithm \ref{alg.V.full} are used to extract and reconstruct important features of the input image, we set 
	\begin{align}
		S^{j,l}_k= -\frac{(2^{j-1}c_j)^{-1}}{\kappa} \varepsilon\ln \frac{u}{1-u}, \quad \tilde{S}^{j,l}_k= -\frac{(2^{j}c_j)^{-1}}{\kappa}\varepsilon\ln \frac{u}{1-u}
		\label{eq.S.choice}
	\end{align}
	with the normalization term
	\begin{align}
		\kappa=\sum_{j=1}^J  \sum_{l=1}^{L_j} \sum_{k=1}^{c_j} (2^{j-1}c_j)^{-1}+ \sum_{j=1}^{J-1}  \sum_{l=1}^{L_j} \sum_{k=1}^{c_j}(2^{j}c_j)^{-1} +1.
	\end{align}
	
	At the last substep, the segmentation is reconstructed, for which we will use the length penalty term to regularize the segmentation. We set
	\begin{align} \label{eq:sstar}
		S^*= -\frac{1}{\kappa} \varepsilon\ln \frac{u}{1-u}-\eta G_{\sigma}*(1-2u).
	\end{align}
	In (\ref{eq.S.choice}), the factor on the right-hand side is chosen based on two considerations: (i) the functions $S$ and $\tilde{S}$ should satisfy (\ref{eq.S.sum}), and (ii) the effect of  $-\varepsilon\ln \frac{u}{1-u}$ are expected to be the same for every intermediate variable $v^{j,l}_k$ and $u^{j,l}_{k}$, i.e., it is not affected by the factors on the left-hand side of (\ref{eq.full.v}) and (\ref{eq.full.u}) which are resulted from parallel splitting and the relaxation step.

	\subsection{On the solution to (\ref{eq.full.v}), (\ref{eq.full.u}) and (\ref{eq.full.final})}
	\label{sec.solver}
	In (\ref{eq.full.v}) and (\ref{eq.full.u}) when $l=1$, the problems involve computations between functions discretized on different grid levels. We will first convert all functions to $\cT^j$ and then solve the problems.
	
	For (\ref{eq.full.v}) when $l=1$, $v^{j,0}=v^{j-1,L_j}$ and $v^{j,0}_k=v^{j-1,L_j}_k$ are defined on grid level $j-1$. We use the average pooling (\ref{eq.downsampling.ave}) or the max pooling (\ref{eq.downsampling.max}) to downsample these functions to grid level $j$ and then assign them to $v^{j,0}$ and $v^{j,0}_k$. In our experiment, (\ref{eq.downsampling.max}) is used.
	
	For (\ref{eq.full.u}) when $l=1$, $u^{j,0}=\bar{u}^{j+1,L_j}$ and $u^{j,0}_k=\bar{u}^{j+1,L_j}_k$ are defined on grid level $j+1$. We use the piecewise constant upsampling (\ref{eq.upsampling}) to upsample these functions to grid level $j$ and then assign them to $u^{j,0}$ and $u^{j,0}_k$.
	
	Observe that (\ref{eq.full.v}), (\ref{eq.full.u}) and (\ref{eq.full.final}) are in the form of
	\begin{align}
		\frac{u-u^*}{\gamma \Delta t}=\sum_{s=1}^c \hat{A}_s*u^*_s +\hat{b} +\hat{S}(u),
		\label{eq.basic.form}
	\end{align}
	where $\gamma$ is some constant, $c$ is some integer, $u^*=\frac{1}{c} \sum_{s=1}^c u^*_s$ for some functions $u^*_s$'s, $\hat{A}_s$'s are some convolution kernels, $\hat{b}$ is some bias function and $\hat{S}$ is some nonlinear function. The solution to (\ref{eq.basic.form}) can be computed using two substeps:
	\begin{align}
		\begin{cases}
			\bar{u}=u^*+\gamma\Delta t \left(\sum_{s=1}^c \hat{A}_s*u^*_s +\hat{b}\right),\\
			u=(I_{\rm id}-\gamma\Delta t \hat{S})^{-1} (\bar{u}),
		\end{cases}
		\label{eq.basic.form.sub}
	\end{align}
	where $I_{\rm id}$ denotes the identity operator, $(I_{\rm id}-\gamma\Delta t \hat{S})^{-1}$ is the resolvent operator of $(I_{\rm id}-\gamma\Delta t \hat{S})$.  When $\hat{S}$ is a nonlinear operator, (\ref{eq.basic.form.sub}) is the building block of neural networks: the first substep is a linear layer, and the second substep corresponds to some activation function. Therefore, Algorithm \ref{alg.V.full} is a convolutional network with $(I_{\rm id}-\Delta t S)^{-1}$ being the activation function.

	In (\ref{eq.basic.form.sub}), there is no difficulty in solving for $\bar{u}$ as it is an explicit step. For $u$ in (\ref{eq.basic.form.sub}), when $\hat{S}=S^{j,l}_k, \tilde{S}^{j,l}_k$ or $S^*$  as in (\ref{eq.S.choice})  and (\ref{eq:sstar}) , we need to solve a problem of the following form:
	\begin{align}
		\frac{u-\bar{u}}{C_1\Delta t} + C_2G_\sigma*(1-2u)=-\varepsilon\ln \frac{u}{1-u}
		\label{eq.activate}
	\end{align}
	where 
	\begin{align*}
		&C_1=1/\kappa,\ C_2= 0 \quad \mbox{ for } \hat{S}=S^{j,l}_k,\tilde{S}^{j,l}_k,\\
		&C_1=1/\kappa,\ C_2=\eta \quad \mbox{ for } \hat{S}=S^*.
	\end{align*}
	We use a fixed point method to solve (\ref{eq.activate}).
	
	First initialize $p^0=\bar{u}$. From $p^k$, we update $p^{k+1}$ by solving
	\begin{align}
		\frac{p^k-\bar{u}}{C_1\Delta t} + C_2\eta G_\sigma*(1-2p^k)=-\varepsilon\ln \frac{p^{k+1}}{1-p^{k+1}},
	\end{align}
	for which we have the closed-form solution
	\begin{align}
		p^{k+1}=\Sig\left(-\frac{1}{\varepsilon}\left(\frac{p^k-\bar{u}}{C_1\Delta t} + C_2 \eta G*(1-2p^k)\right)\right),
		\label{eq.p.update}
	\end{align}
	where $\Sig(x)=\frac{1}{1+e^{-x}}$ is the sigmoid function. By repeating (\ref{eq.p.update}) so that $p^{k+1}$ converges to some function $p^*$, we set $u=p^*$.
	
	As $U^n$ evolves during iterations, it is not necessary to repeat (\ref{eq.p.update}) until convergence for every intermediate variable. Instead, one may only use a few steps of (\ref{eq.p.update}). In particular, if only two steps of (\ref{eq.p.update}) are used, the resulting formula is a relaxed and regularized version of the sigmoid activation function.
	
	\begin{remark}
		In (\ref{eq.activate}), if $C_2=0$, i.e., there is no length penalty term, at least two steps of (\ref{eq.p.update}) should be used. In fact, when $C_2=0$ and we initialize $p^0=u^*$, the first step is  trivial as one always has $p^1=0.5$.
	\end{remark}
	
	\section{Relations of Algorithm \ref{alg.V.full} to existing networks}
	\label{sec.relation}
	
	Encoder-decoder-based neural networks have been widely used in image segmentation, such as UNet \cite{ronneberger2015u}, UNet++  \cite{zhou2019unetpp} and SegNet  \cite{badrinarayanan2017segnet}. In Sections \ref{sec.BasicAlg} and \ref{sec.FullAlg}, we showed that Algorithm \ref{alg.V.full} has an encoder-decoder architecture and its analogy to neural networks. In this section, we show that with minor modifications, Algorithm \ref{alg.V.full} can recover the architectures of most encoder-decoder-based neural networks, just with different activation functions. 
	
	We order the image resolution (corresponding to the grid levels in Algorithm \ref{alg.V.full}) from the finest to coarsest by 1 to $J$, where $J$ is the total levels of image resolution. With this ordering, level 1 corresponds to the finest resolution and level $J$ corresponds to the coarsest resolution.
	
	\subsection{Relations to general convolutional neural networks}
	\label{sec.buildingblock}
	For a convolutional neural network, the building block is
	\begin{align}
		\begin{cases}
			\bar{v}_k=\sum_{s=1}^c W_{k,s}*v^*_s+b_k,\\
			v_k=\chi(\bar{v}_k),
		\end{cases}
		\label{eq.network.block}
	\end{align}
	where $v_s^*$'s are the outputs from the previous layer, $v_k$ is the output of the $k$-th channels of the current layer, $W_{k,s}$'s are convolutional kernels and $\chi$ is an activation function. For Algorithm \ref{alg.V.full}, the building block is (\ref{eq.basic.form}), which is solved by (\ref{eq.basic.form.sub}).  In fact, (\ref{eq.network.block}) and (\ref{eq.basic.form.sub}) have the same form. With the proper choice of $\sigma$, they are equivalent to each other.
	
	In the following, we show that if we choose $\chi=(I_{\rm id}-\gamma\Delta t \hat{S})^{-1}$, $v^*_s=u^*_s$ and the proper convolution kernels $W_{k,s}$'s, we have $v_k=u$. 
	
	In the first equation of (\ref{eq.basic.form.sub}), substitute the expression of $u^*$, and we have
	\begin{align}
		\bar{u}=&\frac{1}{c}\sum_{s=1}^c u^*_s + \gamma\Delta t\left(\sum_{s=1}^c \hat{A}_s*u_s^* +\hat{b}\right)=\sum_{s=1}^c \left(\frac{1}{c} \mathds{1}+\gamma\Delta t \hat{A}_s\right)*u_s^* + \gamma\Delta t\hat{b},
	\end{align}
	where $\mathds{1}$ denotes the identity kernel satisfying $\mathds{1}*g=g$ for any function $g$. Set 
	\begin{align}
		W_{k,s}=\frac{1}{c} \mathds{1}+\gamma\Delta t \hat{A}_s, \quad b_k=\hat{b}.
	\end{align}
	We have $\bar{v}_k=\bar{u}$. By choosing $\chi=(I_{\rm id}-\gamma\Delta t \hat{S})^{-1}$, we have $v_k=u$. Essentially, Algorithm \ref{alg.V.full} and convolutional neural networks have the same building block. Most encoder-decoder-based neural networks are instances of Algorithm \ref{alg.V.full}, i.e., an operator-splitting scheme for some control problems.
	
	\subsection{Relations to networks with skip pathways}
	Some popular encoder-decoder networks have skip pathways to improve performance, such as UNet \cite{ronneberger2015u} and UNet++ \cite{zhou2019unetpp}.  In Algorithm \ref{alg.V.full}, skip pathways are realized using the relaxation step (\ref{eq.full.relax}). The implementation of skip pathways in Algorithm \ref{alg.V.full} may be slightly different from that in UNet and other networks. While one can make minor modifications to Algorithm \ref{alg.V.full} to exactly recover the architecture of these networks, here for simplicity, we only address the modification needed to recover UNet's architecture. For other architectures, one can revise Algorithm \ref{alg.V.full} in a similar manner.
	
	In UNet at resolution $j$, the skip pathway copies features from the encoder part to the upsampled features from resolution $j+1$ in the decoder part, which is just before computations at resolution $j$ in the decoder part are conducted. In our Algorithm \ref{alg.V.full}, the intermediate variables $v_k^{j,L_j}$'s at grid level $j$ in the encoder part are added to the output of grid level $j$ in the decoder part, which is after all computations at grid level $j$ in the decoder part are finished.
	
	To recover the architecture of UNet, we replace the initialization of $u^{j,0}$ and $u^{j,0}_k$ by
	\begin{align}
		u^{j,0}_k= \begin{cases}
			\frac{1}{2} u_k^{j+1,L_{j+1}} + \frac{1}{2} v_k^{j,L_j} & \mbox{ for } 1\leq k \leq \min\{c_j,c_{j+1}\},\\
			\frac{1}{2} u_k^{j+1,L_{j+1}} + \frac{1}{2} v^{j,L_j} & \mbox{ for } c_j<k\leq c_{j+1} \mbox{ if } c_{j+1}>c_j,\\
			\frac{1}{2} u^{j+1,L_{j+1}} + \frac{1}{2} v_k^{j,L_j} & \mbox{ for } c_{j+1}<k\leq c_{j} \mbox{ if } c_{j}>c_{j+1},
		\end{cases},\quad
		u^{j,0}=\frac{1}{c_j} \sum_{j=1}^{c_j} u^{j,0}_k,
	\end{align}
	replace (\ref{eq.full.u}) by
	\begin{align}
		&\frac {u_k^{j,l} - u^{j,l-1}} {2^{j-1} c_{j} \tau } =   
		\sum_{s=1}^{\tilde{c}_{j,l}} 
		\tilde A_{k,s}^{j}(t^n)  * u_k^{j,l-1}   + \tilde b_{k}^{j}(t^n)   + \tilde S_{k}^j(u_k^{j} ),
	\end{align}	
	and remove (\ref{eq.full.relax}).
	
	One can show that the modified algorithm is an operator splitting scheme for the control problem (\ref{eq.control.full}) with proper choices of convolution kernels, biases and nonlinear operators, and it has the same architecture as UNet. Putting this property with the one discussed in Section \ref{sec.buildingblock}, we see that the modified algorithm is nothing else but UNet with different activation functions.
	
	Compared to UNet, UNet++ has additional nested, dense skip pathways. For UNet, the skip pathways are realized by using relaxation steps. To realize the additional skip pathways in UNet++, one needs to add more relaxation steps correspondingly. With the proper decomposition of $A,\tilde{A},b,\tilde{b},S,\tilde{S}$ and the introduction of new relaxations, one can easily modify Algorithm \ref{alg.V.full} so that it has the same architecture with UNet++.

	\subsection{Relations to networks without skip pathways}
	SegNet \cite{badrinarayanan2017segnet} is another popular encoder-decoder network for image segmentation. Compared to UNet and UNet++, SegNet does not have any skip pathways and only contains an encoder and a decoder. To recover the architecture of SegNet, we only need to make two changes to Algorithm \ref{alg.V.full}: (i) in the relaxation step (\ref{eq.full.relax}), directly set $\bar{u}_{k}^{j,L_j}=u_k^{j,L_j}$; (ii) remove the factors $2^{j-1}$ in (\ref{eq.full.v}) and $2^j$ in (\ref{eq.full.u}). The revised algorithm has the same architecture as SegNet and one can show that it is an operator splitting scheme for a control problem.
	
	\section{Experiments}
	\label{sec.experiment}
	We compare Algorithm \ref{alg.V.full} with popular networks. We show that on using a single network to segment images with various noise levels, Algorithm \ref{alg.V.full} has better performance.
	
	\subsection{Implementation details}
	\label{sec.experiments.implementation}
	The building block of Algorithm \ref{alg.V.full} is (\ref{eq.basic.form.sub}). We call the first and second steps as convolution step and activation step, respectively. Our implementation details are as follows:
	\begin{itemize}
		\item Choices of $b$: In the control problem for Potts model (\ref{eq:potts.control.k=2}), there are two control variables: $W$ and $d$. Variable $W$ is a kernel function used to compute the convolution with $u$. Variable $d$ is independent to $u$ nor directly applied to $u$. While in the Euler Lagrange equation of Potts model (\ref{eq.Potts.EL}), there is a term $g$ which is independent of $u$ and only depends on the input image $f=\{f_1,f_2,f_3\}$, it is natural to set $d$, and thus $b$ and $\tilde{b}$ (see (\ref{eq.decompose.2})), as a function of $f$. In our implementation, we set 
		\begin{align}
			b^{j,l}_{k}=\begin{cases}
				\sum_{s=1}^3B^{j,l}_{k,s}*f_s & \mbox{ if } l=1,\\
				\beta_{k}^{j,l} & \mbox{ if } l>1,
			\end{cases}
			\quad
			\tilde{b}^{j,l}_{k}=\begin{cases}
				\sum_{s=1}^3\tilde{B}^{j,l}_{k,s}*f_s & \mbox{ if } l=1,\\
				\tilde{\beta}_{k}^{j,l} & \mbox{ if } l>1,
			\end{cases}
			\label{eq.bias}
		\end{align}
		for some learnable kernels $B^{j,l}_{k,s},\tilde{B}^{j,l}_k$ defined on grid level $j$, and $\beta^{j,l}_k,\tilde{\beta}^{j,l}_k$ being some bias constants to be learned.
		
		\item Convolution step in (\ref{eq.basic.form.sub}): The convolution step is an explicit step that can be easily implemented using discretized convolution.
		\item Activation step in (\ref{eq.basic.form.sub}): The activation step is implemented as (\ref{eq.p.update}) with two iterations. For simplicity, we set $C_1=1,C_2=0$ when computing $u_{s}^{k,l}$ (or $u_{s}^{k,l}$). At the last substep (of Algorithm \ref{alg.V.full}), we set $C_1=1,C_2=\lambda$. 
		\item Batch normalization: To improve the stability of the training process, a batch normalization step is added before each activation step except for the last substep. 
		\item Downsampling and upsampling: For downsampling and upsampling, maxpooling (\ref{eq.downsampling.max}) and piecewise constant upsampling (\ref{eq.upsampling}) are used, respectively.
		
		\item Number of time steps: One iteration of Algorithm \ref{alg.V.full} is one-time step. In our implementation, four steps are used.
		
		\item Skip pathways: We replace the weight $1/2$ in (\ref{eq.full.relax}) by parameters that will be learned in training. 
		\item Initial condition: We set $U^0=\sum_{s=1}^3 W_s^0 *f_s$ for some learnable kernel $W_s^0$'s. 
		\item Network hyper parameters: Without specification, we test Algorithm \ref{alg.V.full} with five grid levels ($J=5$) and the following hyper parameters
		\begin{align*}
			\{L_1,L_2,L_3,L_3,L_5\}=\{3,3,3,5,5\}, \quad \{c_1,c_2,c_3,c_4,c_5\}=\{32, 32, 64, 128, 256\}.
		\end{align*}
		In the activation function step (\ref{eq.p.update}), except for the parameters $C_1,C_2$ arising from the skip pathways, this step only depends on $\varepsilon\Delta t$ and $\eta/\varepsilon$. In our experiments, we set $\Delta t=0.5, \varepsilon\Delta t=1, \eta/\varepsilon=40$ and $N=4$ i.e., 4 time steps. We set $\sigma=0.5$ in $G_\sigma$. 
		For the kernel size in the convolution layers, we use $3\times 3$ kernels in the first layer (initialization) and the coarsest grid level. In other layers, we use $5\times5$ kernels. 
	\end{itemize}
	
	\subsection{Models and datasets}
	Algorithm \ref{alg.V.full} is a mathematical explanation of encoder-decoder-based neural networks. We compare the performance of Algorithm \ref{alg.V.full}  with popular encoder-decoder-based networks, such as Unet \cite{ronneberger2015u}, UNet++ \cite{zhou2019unetpp}, DeepLabV3+ \cite{chen2018encoder} and SegNet \cite{badrinarayanan2017segnet}. The implementation of UNet++ and DeepLabV3+ used the segmentation models PyTorch package \cite{Iakubovskii2019}. 
	
	We test all networks on two data sets: the cell segmentation dataset (CSD) \cite{dsb2018}, and the MSRA10K dataset \cite{cheng2014global}. CSD is used in the 2018 data science bowl competition. The stage 1 dataset contains 536 training images and 134 test images. We resize all images to a size of $96\times 96$. MSRA10K contains 10,000 salient object images with manually annotated masks. We choose 800 images for training and 200 images for testing. We resize all images to a size of $128\times 192$.

	\subsection{Training strategy}
	\label{sec.experiment.train}
	We aim to train our networks that are robust to several noise levels. If a network is trained on data with a certain noise level and tested on a data set with higher noise, the performance can be poor; but if the network is tested on a data set with lower noise, it will still provide good results. So we will train our network with a high noise level. To make the training more stable, we use a progressive training strategy: we first train the network on clean data set and then gradually increase the noise level. We add Gaussian noise with standard deviation (SD) $ \sigma \in \{0, 0.3, 0.5, 0.8, 1\}$. For each noise level, we use 500 epochs to train our network and use the trained parameters as initial values to train the network for the next noise level. In our experiments, we normalize all images so that all pixel values are between 0 and 1. 
	
	In our training, we tried two settings. In the first setting, for each noise level with SD=$a$, each pixel has a noise SD that is randomly generated from $[0,a]$. In the second setting, for each noise level with SD=$a$, all pixels have noise with SD=$a$. The comparisons of both settings are visualized in Figure \ref{fig.CSD.sd}, in which various models are tested on images with Gaussian noise varying from 0 to 1. In the figure, PottsMGNet is trained under the first setting, PottsMGNetSD is trained under the second setting, SD=$a$ denotes the model that is progressively trained with the largest SD=$a$, i.e., the training process stops at SD=$a$. For the second setting, if the model stops training at SD=0.5, then the model has the highest test accuracy on noise images with SD around 0.5. Further training the model on higher noise only makes the test accuracy better when tested on higher noise, but makes the test accuracy worse on low noise levels. While the first setting always gives robust results, similar phenomena are observed for other models.
	
	In the following comparisons, the first setting is always used with the highest SD being 1.
	
	\begin{figure}[t!]
		\centering
		\begin{tabular}{cc}
			accuracy & dice\\
			\includegraphics[width=0.4\textwidth]{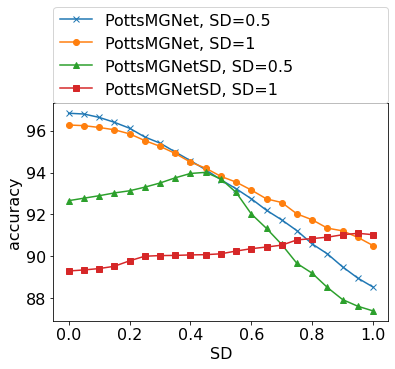} &
			\includegraphics[width=0.4\textwidth]{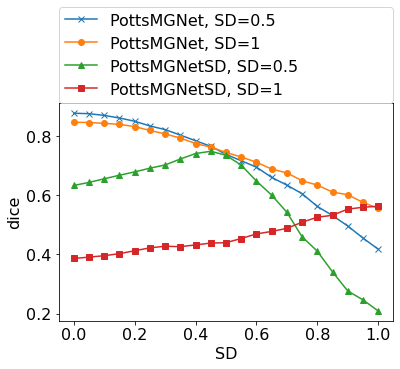} 
		\end{tabular}
		\caption{Results on CSD. Comparisons of different training strategies. PottsMGNet is trained under the first setting, PottsMGNetSD is trained under the second setting, SD=$a$ denotes the model that are progressively trained with largest SD=$a$.}
		\label{fig.CSD.sd}
	\end{figure}

	\subsection{Robustness to hyper parameters}
	\label{sec.experiment.para}
	In this section, we show that PottsMGNet is not sensitive to hyper parameters. We compare the performances of PottsMGNet with hyper parameters specified in Table \ref{tab.CSD.para}.
	\begin{table}[ht!]
		\centering
		\begin{tabular}{c|c|c|c}
			\hline \hline
			& $N$ & $\{L_1,L_2,L_3,L_3,L_5\}$ & $\{c_1,c_2,c_3,c_4,c_5\}$\\
			\hline
			Model 1 & 4 & $\{3,3,3,5,5\}$ & $\{32, 32, 64, 128, 256\}$\\
			\hline
			Model 2 &3 & $\{3,3,3,5,5\}$ & $\{32, 32, 64, 128, 256\}$\\
			\hline
			Model 3 &4 & $\{3,3,3,3,3\}$ & $\{32, 32, 64, 128, 256\}$\\
			\hline
			Model 4 &4 & $\{5,5,5,5,5\}$ & $\{32, 32, 64, 128, 256\}$\\
			\hline
			Model 5 &4 & $\{3,3,3,5,5\}$ & $\{32, 64, 128, 256, 256\}$\\
			\hline \hline
		\end{tabular}
		\caption{PottsMGNet with different hyper parameters considered in Section \ref{sec.experiment.para}.}
		\label{tab.CSD.para}
	\end{table}
	
	In Table \ref{tab.CSD.para}, Model 1 is the default model specified in Section \ref{sec.experiments.implementation}. With the training strategy described in Section \ref{sec.experiment.train} and for CSD, the testing accuracy and dice score of all models are presented in Figure \ref{fig.CSD.para}. All models have similar performances, implying that PottsMGNet is not sensitive to hyper parameters.
	
	
	\begin{figure}[t!]
		\centering
		\begin{tabular}{cc}
			accuracy & dice\\
			\includegraphics[width=0.4\textwidth]{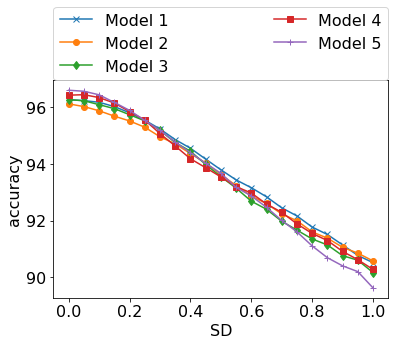} &
			\includegraphics[width=0.4\textwidth]{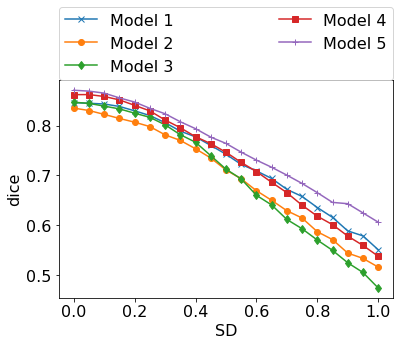} 
		\end{tabular}
		\caption{Results on CSD. Comparisons of different PottsMGNet with hyper parameters specified in Table \ref{tab.CSD.para}.}
		\label{fig.CSD.para}
	\end{figure}
	
	\subsection{CSD results}
	On CSD, we test the trained models on several noise levels. The comparisons with SD=0, 0.5, 0.8 and 1 are shown in Table \ref{tab.CSD}. Comparisons on more noise levels are visualized in Figure \ref{fig.CSD.comparison}. We observe that for all noise levels, PottsMGNet always provides the highest accuracy and dice score. Some sample comparisons are presented in Figure \ref{fig.CSD.reuslts}. The columns from left to right correspond to noise SD= 0, 0.3, 0.5 and 0.7. Even with very high noise (SD=0.7), PottsMGNet still identifies a large portion of cells. 
	\begin{table}[t!]
		\centering		
		\begin{tabular}{c|c|c|c|c|c}
			\hline\hline
			& PottsMGNet & UNet & UNet++ & DeepLab V3+ & SegNet\\
			\hline
			\multicolumn{6}{c}{SD=0}\\
			\hline
			accuracy& 96.27\% & 94.24\% & 94.10\% & 93.88\% & 94.48\\
			\hline
			dice & 0.8460 & 0.7401 & 0.7332 & 0.7231 & 0.7474\\
			\hline
			\hline
			\multicolumn{6}{c}{SD=0.5}\\
			\hline
			accuracy& 93.85\% & 91.11\% & 91.12\% & 91.87\% & 91.66\\
			\hline
			dice & 0.7445 & 0.5659 & 0.5571 & 0.5996 & 0.5591\\
			\hline
			\hline
			\multicolumn{6}{c}{SD=0.8}\\
			\hline
			accuracy& 91.78\% & 89.73\% & 89.08\% & 90.50\% & 90.31\\
			\hline
			dice & 0.6353 & 0.4938 & 0.4652 & 0.4988 & 0.4622\\
			\hline
			\hline
			\multicolumn{6}{c}{SD=1}\\
			\hline
			accuracy& 90.50\% & 89.10\% & 87.48\% & 89.79\% & 89.57\\
			\hline
			dice & 0.5554 & 0.4625 & 0.4224 & 0.4582 & 0.4087\\
			\hline
			\hline
		\end{tabular}
		\caption{Results on CSD. Comparisons of accuracy and dice score of different models for noise SD=0, 0.5, 0.8, 1. }
		\label{tab.CSD}
	\end{table}
	
	\begin{figure}[t!]
		\centering
		\begin{tabular}{cc}
			accuracy & dice\\
			\includegraphics[width=0.4\textwidth]{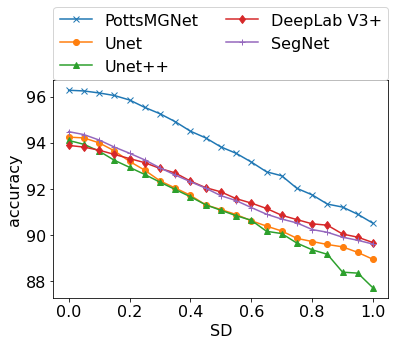} &
			\includegraphics[width=0.4\textwidth]{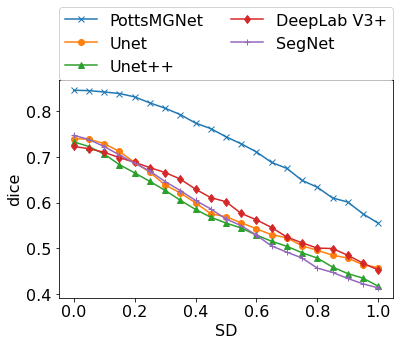} 
		\end{tabular}
		\caption{Results on CSD. Comparisons of accuracy and dice score of different models on for noise SD varying from 0 to 1. }
		\label{fig.CSD.comparison}
	\end{figure}
	
	\begin{figure}
		\begin{tabular}{ccccccc}
			Clean Image & 		\includegraphics[align=c,width=0.18\textwidth]{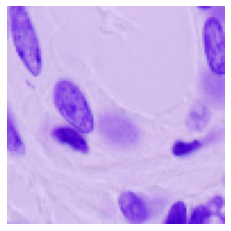}&
			\includegraphics[align=c,width=0.18\textwidth]{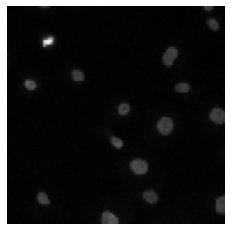}&
			\includegraphics[align=c,width=0.18\textwidth]{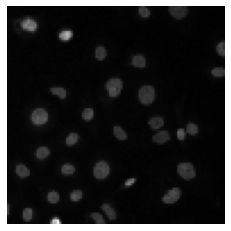}&
			\includegraphics[align=c,width=0.18\textwidth]{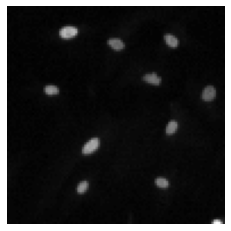}\\
			Noisy Image & 
			\includegraphics[align=c,width=0.18\textwidth]{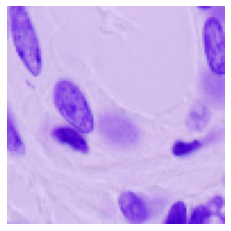}&
			\includegraphics[align=c,width=0.18\textwidth]{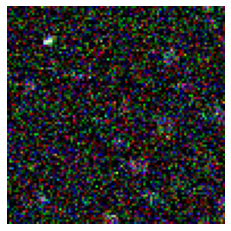}&
			\includegraphics[align=c,width=0.18\textwidth]{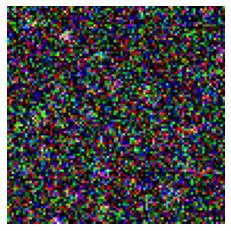}&
			\includegraphics[align=c,width=0.18\textwidth]{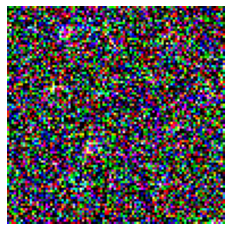}\\
			Mask & 
			\includegraphics[align=c,width=0.18\textwidth]{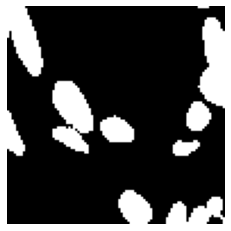}&
			\includegraphics[align=c,width=0.18\textwidth]{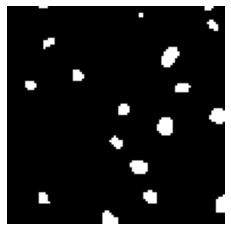}&
			\includegraphics[align=c,width=0.18\textwidth]{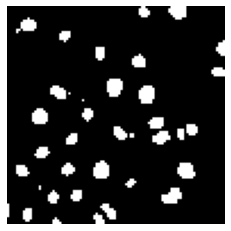}&
			\includegraphics[align=c,width=0.18\textwidth]{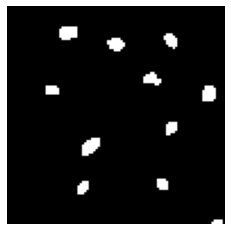}\\
			PottsMGNet & 
			\includegraphics[align=c,width=0.18\textwidth]{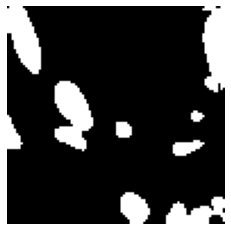}&
			\includegraphics[align=c,width=0.18\textwidth]{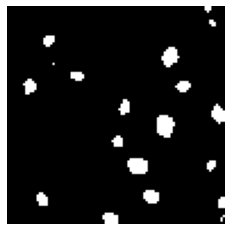}&
			\includegraphics[align=c,width=0.18\textwidth]{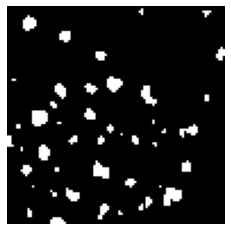}&
			\includegraphics[align=c,width=0.18\textwidth]{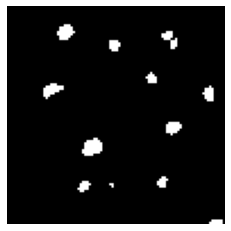}\\
			UNet & 
			\includegraphics[align=c,width=0.18\textwidth]{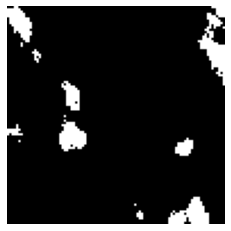}&
			\includegraphics[align=c,width=0.18\textwidth]{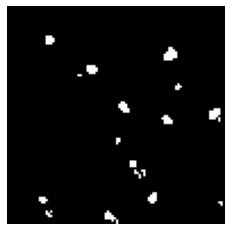}&
			\includegraphics[align=c,width=0.18\textwidth]{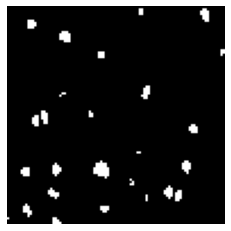}&
			\includegraphics[align=c,width=0.18\textwidth]{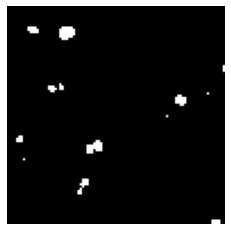}\\
			UNet++ & 
			\includegraphics[align=c,width=0.18\textwidth]{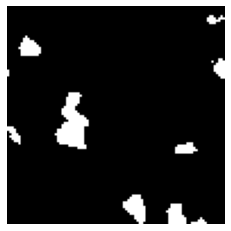}&
			\includegraphics[align=c,width=0.18\textwidth]{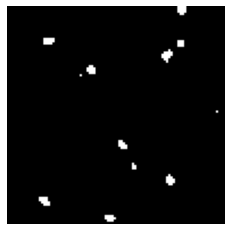}&
			\includegraphics[align=c,width=0.18\textwidth]{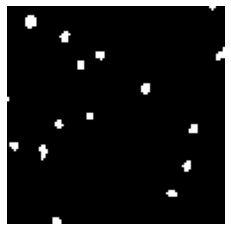}&
			\includegraphics[align=c,width=0.18\textwidth]{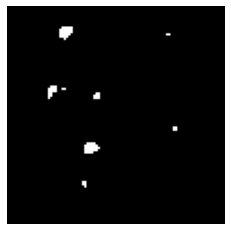}\\
			DeepLabV3+ & 
			\includegraphics[align=c,width=0.18\textwidth]{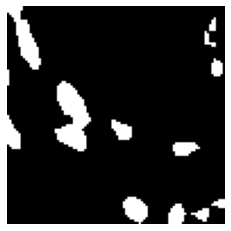}&
			\includegraphics[align=c,width=0.18\textwidth]{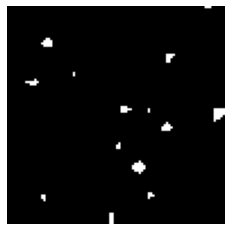}&
			\includegraphics[align=c,width=0.18\textwidth]{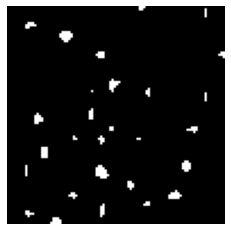}&
			\includegraphics[align=c,width=0.18\textwidth]{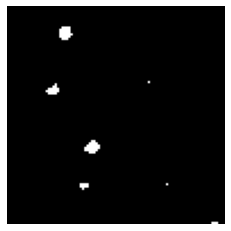}\\
			SegNet & 
			\includegraphics[align=c,width=0.18\textwidth]{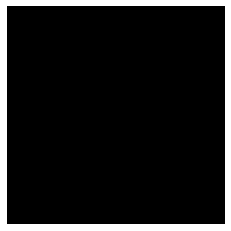}&
			\includegraphics[align=c,width=0.18\textwidth]{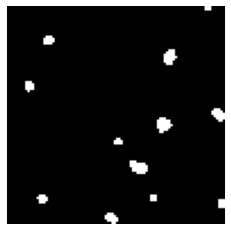}&
			\includegraphics[align=c,width=0.18\textwidth]{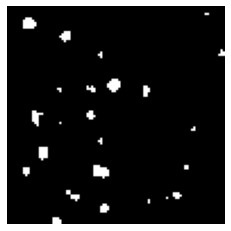}&
			\includegraphics[align=c,width=0.18\textwidth]{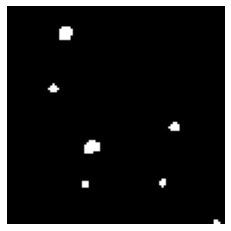}\\
		\end{tabular}
		\caption{Results on CSD. Examples of results by different models. Columns 1--4 correspond to noise SD=0, 0.3, 0.5, 0.7.}
		\label{fig.CSD.reuslts}
	\end{figure}
	
	\subsection{MSRA10K}
	For MSRA10K, we train all models with noise standard deviation (SD) $\{0, 0.3, 0.5, 0.8\}$. For each noise level, 500 epochs are used. we test the trained models on several noise levels. The comparisons with SD=0, 0.5 and 0.8 are shown in Table \ref{tab.MA}. Comparisons on more noise levels are visualized in Figure \ref{fig.MA.comparison}. Similar to our observation for CSD, for most noise levels, PottsMGNet always provides the highest accuracy and dice score. Some sample comparisons are presented in Figure \ref{fig.MA.reuslts}. The columns from left to right correspond to noise SD= 0, 0.3, 0.5 and 0.8. In all examples, PottsMGNet can better segment the target object. The segmented images are presented in Figure \ref{fig.MA.reuslts.seg}.
	\begin{table}[t!]
		\centering		
		\begin{tabular}{c|c|c|c|c|c}
			\hline\hline
			& PottsMGNet & UNet & UNet++ & DeepLab V3+ & SegNet\\
			\hline
			\multicolumn{6}{c}{SD=0}\\
			\hline
			accuracy& 93.28\% & 92.14\% & 92.49\% & 92.47\% & 92.20\\
			\hline
			dice & 0.8417 & 0.8012 & 0.8190 & 0.8170 & 0.8108\\
			\hline
			\hline
			\multicolumn{6}{c}{SD=0.5}\\
			\hline
			accuracy& 92.69\% & 91.83\% & 91.77\% & 91.96\% & 91.99\\
			\hline
			dice & 0.8278 & 0.7976 & 0.8016 & 0.8074 & 0.8071\\
			\hline
			\hline
			\multicolumn{6}{c}{SD=0.8}\\
			\hline
			accuracy& 91.46\% & 90.94\% & 90.28\% & 90.89\% & 91.23\\
			\hline
			dice & 0.7969 & 0.7726 & 0.7728 & 0.7802 & 0.7907\\
			\hline
			\hline
		\end{tabular}
		\caption{Results on MSRA10K. Comparisons of accuracy and dice score of different models for noise SD=0, 0.5, 0.8.}
		\label{tab.MA}
	\end{table}
	
	\begin{figure}[t!]
		\centering
		\begin{tabular}{cc}
			accuracy & dice\\
			\includegraphics[width=0.4\textwidth]{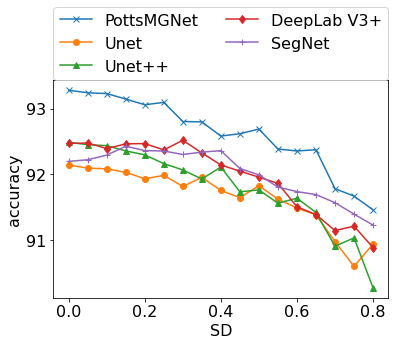} &
			\includegraphics[width=0.4\textwidth]{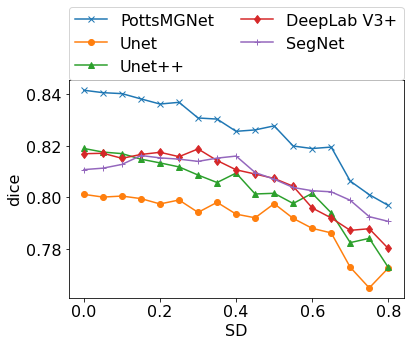} \\
		\end{tabular}
		\caption{Results on MSRA10K. Comparisons of accuracy and dice score of different models on for noise SD varying from 0 to 1.}
		\label{fig.MA.comparison}
	\end{figure}
	
	\begin{figure}
		\begin{tabular}{ccccccc}
			Clean Image & 
			\includegraphics[align=c,width=0.18\textwidth]{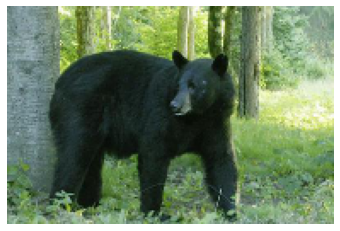}&
			\includegraphics[align=c,width=0.18\textwidth]{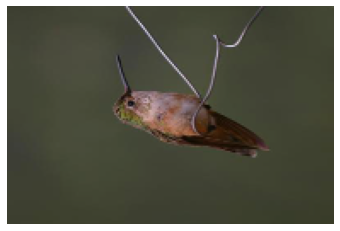}&
			\includegraphics[align=c,width=0.18\textwidth]{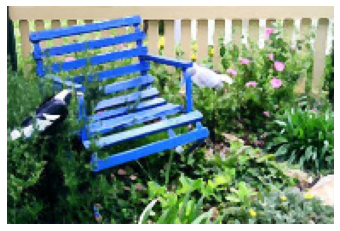}&
			\includegraphics[align=c,width=0.18\textwidth]{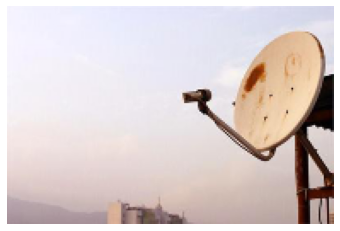}\\
			Noisy Image & 
			\includegraphics[align=c,width=0.18\textwidth]{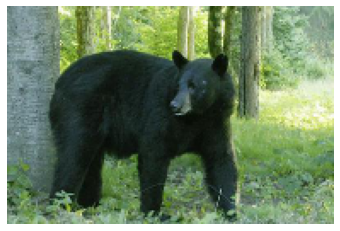}&
			\includegraphics[align=c,width=0.18\textwidth]{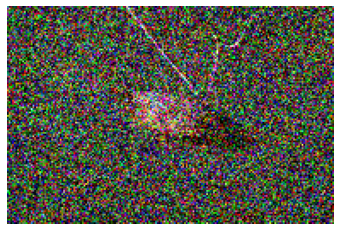}&
			\includegraphics[align=c,width=0.18\textwidth]{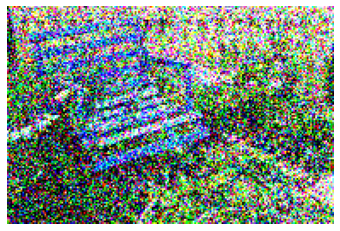}&
			\includegraphics[align=c,width=0.18\textwidth]{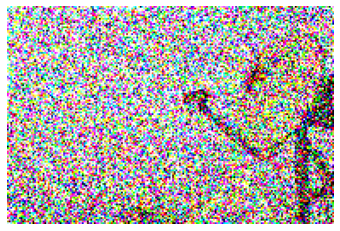}\\
			Mask & 
			\includegraphics[align=c,width=0.18\textwidth]{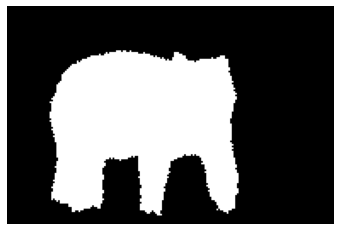}&
			\includegraphics[align=c,width=0.18\textwidth]{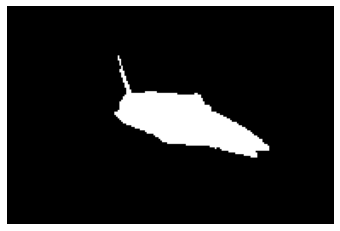}&
			\includegraphics[align=c,width=0.18\textwidth]{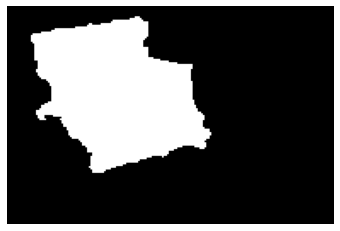}&
			\includegraphics[align=c,width=0.18\textwidth]{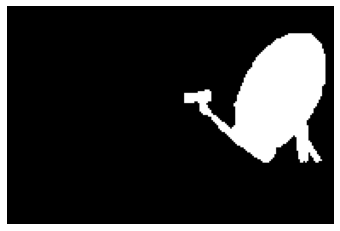}\\
			PottsMGNet & 
			\includegraphics[align=c,width=0.18\textwidth]{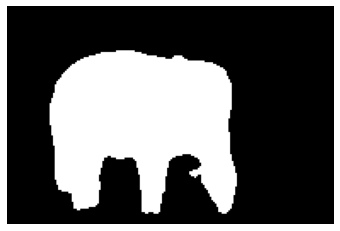}&
			\includegraphics[align=c,width=0.18\textwidth]{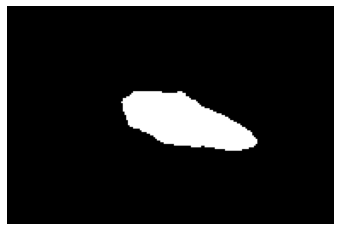}&
			\includegraphics[align=c,width=0.18\textwidth]{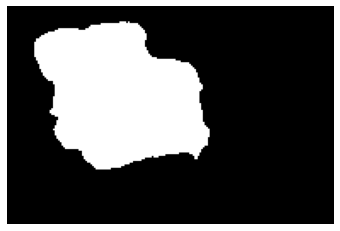}&
			\includegraphics[align=c,width=0.18\textwidth]{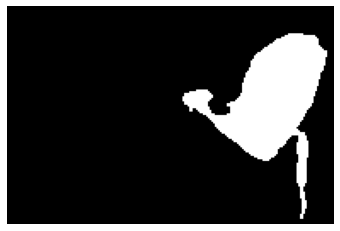}\\
			UNet & 
			\includegraphics[align=c,width=0.18\textwidth]{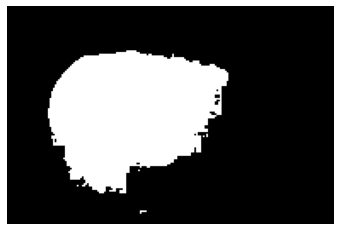}&
			\includegraphics[align=c,width=0.18\textwidth]{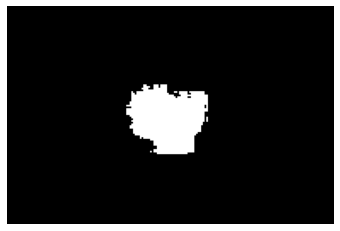}&
			\includegraphics[align=c,width=0.18\textwidth]{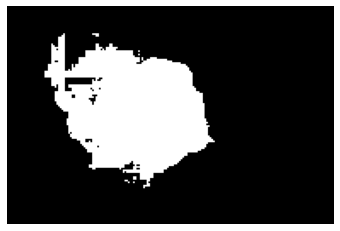}&
			\includegraphics[align=c,width=0.18\textwidth]{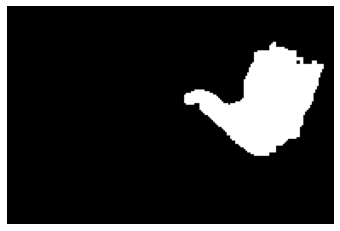}\\
			UNet++ & 
			\includegraphics[align=c,width=0.18\textwidth]{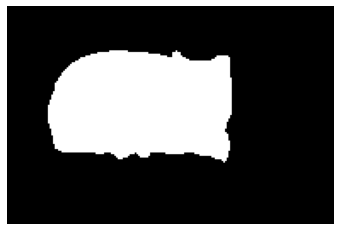}&
			\includegraphics[align=c,width=0.18\textwidth]{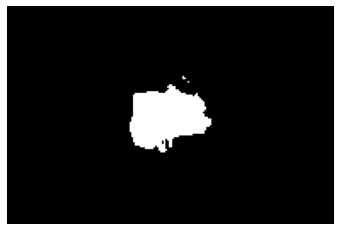}&
			\includegraphics[align=c,width=0.18\textwidth]{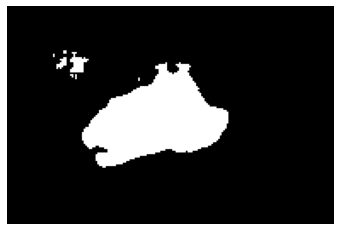}&
			\includegraphics[align=c,width=0.18\textwidth]{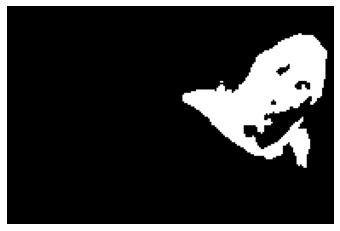}\\
			DeepLabV3+ & 
			\includegraphics[align=c,width=0.18\textwidth]{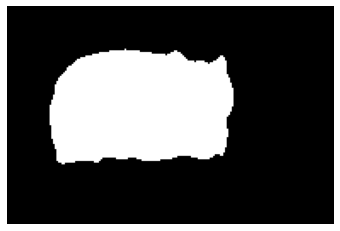}&
			\includegraphics[align=c,width=0.18\textwidth]{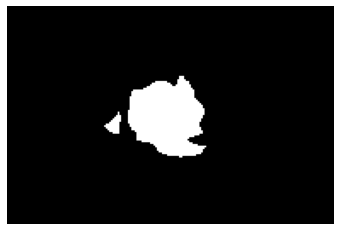}&
			\includegraphics[align=c,width=0.18\textwidth]{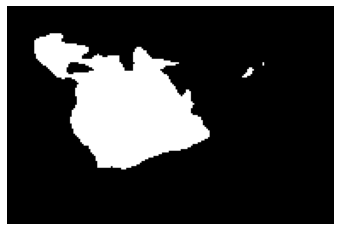}&
			\includegraphics[align=c,width=0.18\textwidth]{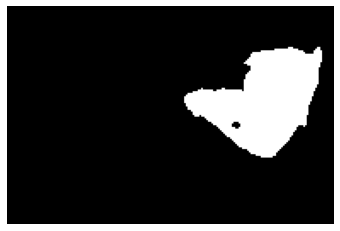}\\
			SegNet & 
			\includegraphics[align=c,width=0.18\textwidth]{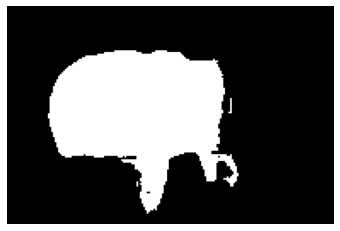}&
			\includegraphics[align=c,width=0.18\textwidth]{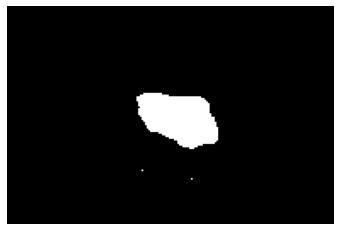}&
			\includegraphics[align=c,width=0.18\textwidth]{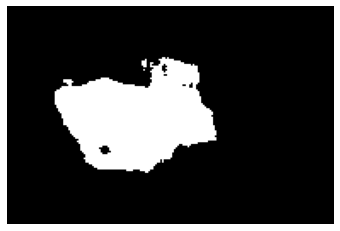}&
			\includegraphics[align=c,width=0.18\textwidth]{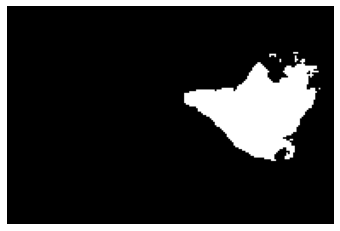}\\
		\end{tabular}
		\caption{Results on MSRA10K. Examples of results by different models. Columns 1--4 correspond to noise SD=0, 0.3, 0.5, 0.8.}
		\label{fig.MA.reuslts}
	\end{figure}
	
	\begin{figure}
		\begin{tabular}{ccccccc}
			Clean Image & 
			\includegraphics[align=c,width=0.18\textwidth]{figures/MA-all-SD00-1-7-img0}&
			\includegraphics[align=c,width=0.18\textwidth]{figures/MA-all-SD03-249-7-img0}&
			\includegraphics[align=c,width=0.18\textwidth]{figures/MA-all-SD05-81-0-img0}&
			\includegraphics[align=c,width=0.18\textwidth]{figures/MA-all-SD08-99-5-img0}\\
			Noisy Image & 
			\includegraphics[align=c,width=0.18\textwidth]{figures/MA-all-SD00-1-7-img}&
			\includegraphics[align=c,width=0.18\textwidth]{figures/MA-all-SD03-249-7-img}&
			\includegraphics[align=c,width=0.18\textwidth]{figures/MA-all-SD05-81-0-img}&
			\includegraphics[align=c,width=0.18\textwidth]{figures/MA-all-SD08-99-5-img}\\
			Mask & 
			\includegraphics[align=c,width=0.18\textwidth]{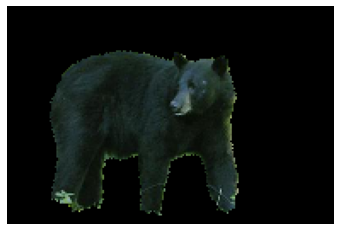}&
			\includegraphics[align=c,width=0.18\textwidth]{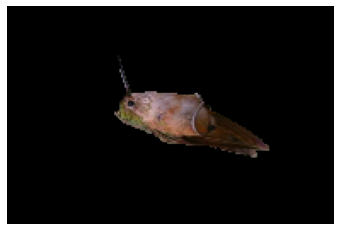}&
			\includegraphics[align=c,width=0.18\textwidth]{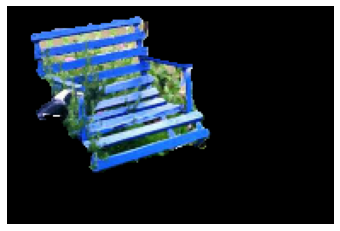}&
			\includegraphics[align=c,width=0.18\textwidth]{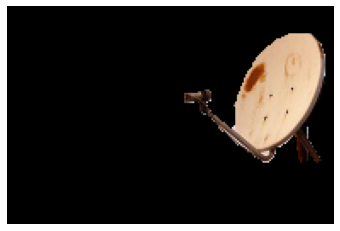}\\
			PottsMGNet & 
			\includegraphics[align=c,width=0.18\textwidth]{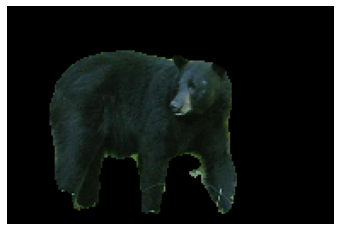}&
			\includegraphics[align=c,width=0.18\textwidth]{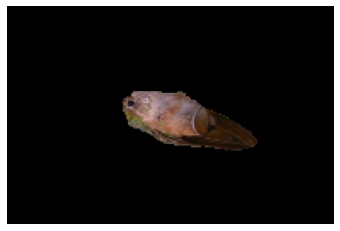}&
			\includegraphics[align=c,width=0.18\textwidth]{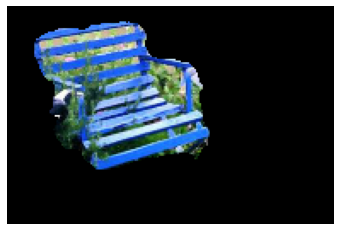}&
			\includegraphics[align=c,width=0.18\textwidth]{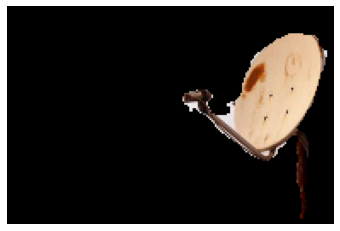}\\
			UNet & 
			\includegraphics[align=c,width=0.18\textwidth]{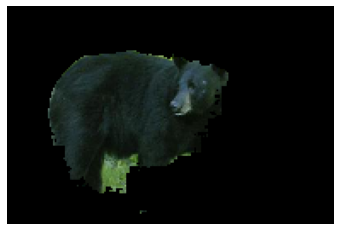}&
			\includegraphics[align=c,width=0.18\textwidth]{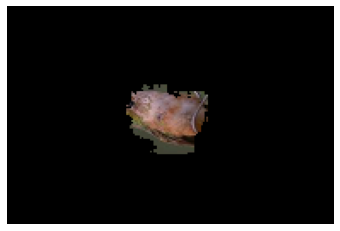}&
			\includegraphics[align=c,width=0.18\textwidth]{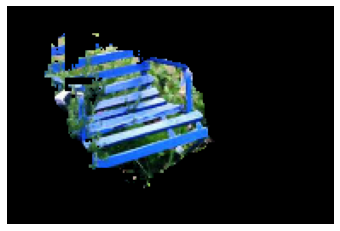}&
			\includegraphics[align=c,width=0.18\textwidth]{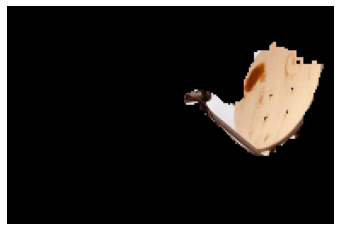}\\
			UNet++ & 
			\includegraphics[align=c,width=0.18\textwidth]{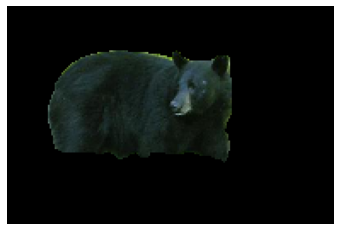}&
			\includegraphics[align=c,width=0.18\textwidth]{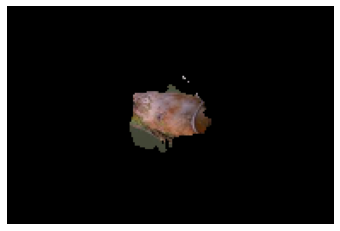}&
			\includegraphics[align=c,width=0.18\textwidth]{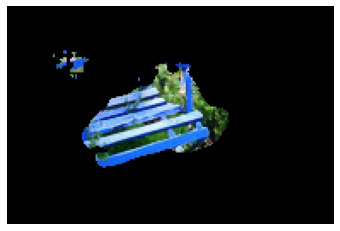}&
			\includegraphics[align=c,width=0.18\textwidth]{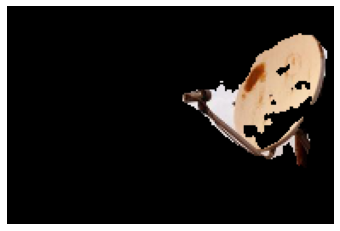}\\
			DeepLabV3+ & 
			\includegraphics[align=c,width=0.18\textwidth]{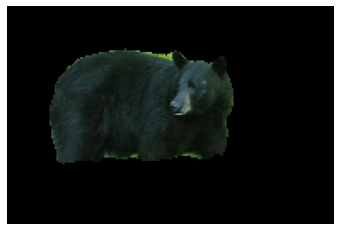}&
			\includegraphics[align=c,width=0.18\textwidth]{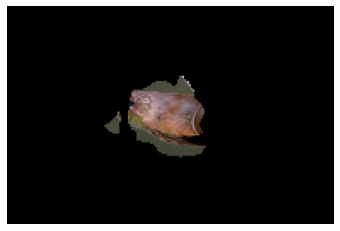}&
			\includegraphics[align=c,width=0.18\textwidth]{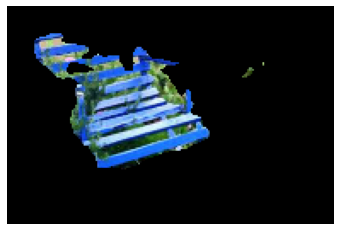}&
			\includegraphics[align=c,width=0.18\textwidth]{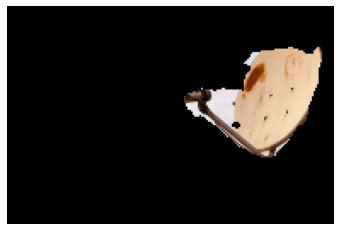}\\
			SegNet & 
			\includegraphics[align=c,width=0.18\textwidth]{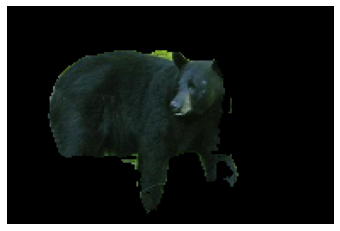}&
			\includegraphics[align=c,width=0.18\textwidth]{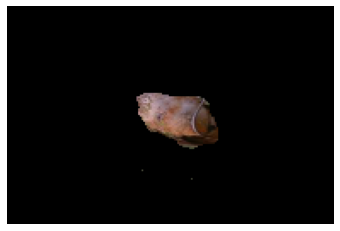}&
			\includegraphics[align=c,width=0.18\textwidth]{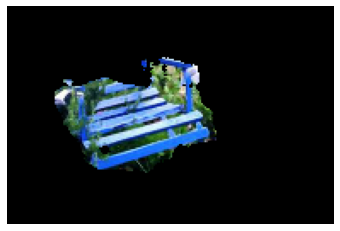}&
			\includegraphics[align=c,width=0.18\textwidth]{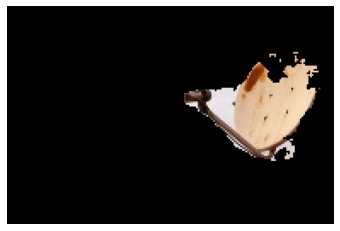}\\
		\end{tabular}
		\caption{Results on MSRA10K. Segmented images in Figure \ref{fig.MA.reuslts}.}
		\label{fig.MA.reuslts.seg}
	\end{figure}

	\section{Conclusion}
	\label{sec.conclusion}
	In this paper we propose the PottsMGNet algorithm for image segmentation. Starting from the two-phase Potts model, a control problem is considered. 
	The proposed Potts net is a first-order operator splitting algorithm that solves the control problem. The Potts net has two ingredients: (i) it is based on a novel hybrid splitting strategy for initial value problems, and (ii) it uses a multigrid idea to split all computations into several substeps according to the grid level. The PottsMGNet essentially is a neural network. From the algorithmic perspective, it provides mathematical explanations for most encoder-decoder type neural networks for image segmentation: these networks are operator-splitting schemes for some control problems. Our numerical experiments show that PottsMGNet is more robust to different noise levels compared to existing networks.

	\appendix
	
	\section*{Appendix}
	
	\section{Gradient flow for the Potts model}
	\label{sec.potts.derivation}
	The Potts model is a popular approach used in image segmentation and classification tasks with ideas originating from \cite{potts1952some}.  It extends the Ising model to multiple states, allowing it to be used for classification tasks \cite{geman1984stochastic,boykov2001fast}.
	We discuss the derivation of the gradient flow of the Potts model in a multi-phase setting. The flow for the two-phase model used in (\ref{sec.potts}) is a special case.
	
	Let $\Omega$ be the image domain. The continuous $K$--phase Potts model is in the form of 
	\cite{yuan2010study,tai2021potts}:
	\begin{align}
		\begin{cases}
			\min\limits_{\Omega_k,\  k=1,2\cdots K}  
			\displaystyle \sum_{k=1}^K \left ( \displaystyle\int_{\Omega_k} f_k(\xb) d\xb   + \lambda |\partial \Omega_k| \right),\\
			\cup_{k=1}^K  \Omega_k =\Omega,\ 
			\Omega_{k_1}\cap \Omega_{k_2} =\emptyset, \forall k_1 \not = k_2,  
		\end{cases}
		\label{eq.potts.0}
	\end{align}
	where $|\partial\Omega_k|$ is the perimeter of $\partial \Omega_k$, $\lambda\ge 0$ is a given constant and $f_k$'s are some nonnegative given functions. A popular choice of $f_k$ is  $f_k(\xb)= (f(\xb)-c_k)^2/\alpha$ 	in which $\alpha$ is a scaling parameter and $c_k$ is the estimated mean density of $f(\xb)$ on $\Omega_k$. With such a choice, the Potts model (\ref{eq.potts.0}) reduces to the piecewise Mumford--Shah functional, which is also known as Chan--Vese model \cite{chan2001active}. 
	
	For any set $\Omega_k\in \Omega$, define the indicator function as
	\begin{align}
		\bone_{\Omega_k}(\xb)=\begin{cases}
			1 & \mbox{ if } \xb\in \Omega_k,\\
			0 & \mbox{ if } \xb\notin \Omega_k.
		\end{cases}
	\end{align}
	If we set 
	$\vb  =(v_1,v_2,\cdots, v_K)$,  
	$v_k(\xb)=\bone_{\Omega_k}(\xb)$, 
	$\fb  =(f_1,f_2,\cdots, f_K)$, then solving the Potts model (\ref{eq.potts.0}) is equivalent to solving 
	\begin{align}
		\min_{\vb (\xb)\in \hat{{\cal S}}} \left[\int_{\Omega} \vb (\xb)  \cdot \fb(\xb) d\xb +\sum_{k=1}^K | \partial \Omega_k | \right],
		\label{eq.potts.1}
	\end{align}
	where
	\begin{equation}
		\hat{{\cal S}} = \bigg \{ \vb : \vb  =(v_1,v_2,\cdots, v_K), \sum_{k=1}^K v_k(\xb) =1, 
		v_k (\xb) \in \{ 0, 1 \} \bigg   \} .
	\end{equation}
	The minimizer of (\ref{eq.potts.1}), denoted by $\ub$, segments the image into $K$ regions: region $\Omega_k$ is represented by  $\{\xb: u_k(\xb)=1\}$. 	The term for the perimeter regularization  can be approximated by a smoothed version using threshold dynamics  \cite{ alberti1998non,merriman1992diffusion,esedog2015threshold,liu2022deep}:
	\begin{align}
		\sum_{k=1}^K 	|\partial \Omega_k|  \equiv  \sum_{k=1}^K  	\sqrt{\frac{\pi}{\sigma}} \int_{\Omega} u_k (\xb)(G_\sigma*(1-u_k ))(\xb)d\xb,
		\label{eq.per.dynamic}
	\end{align}
	where $G_\sigma$ is the Gaussian kernel 	
 $
	G_\sigma(\xb)=\frac{1}{2\pi\sigma^2} \exp\left(-\frac{\|\xb\|^2}{2\sigma^2}\right).
	$ 	
 One can show that the right-hand term in (\ref{eq.per.dynamic}) converges to $\sum_{k=1}^K | \partial \Omega_k | $ as $\sigma\rightarrow 0$ \cite{alberti1998non,merriman1992diffusion,miranda2007short}. Then the functional we are minimizing is 
	\begin{align}
		\min_{\vb \in \hat{{\cal S}}} \left[\int_{\Omega} 
		\vb(\xb)  \cdot f(\xb)  d \xb + \frac \lambda 2 \int_{\Omega} \vb (\xb)\cdot (G_\sigma *(1-\vb  ))(\xb) d\xb\right],
		\label{eq.potts.3}
	\end{align}
	where 
	\begin{align}
		G_{\sigma}*(1-\vb)=\begin{bmatrix}
			G_{\sigma}*(1-v_1) & \cdots & G_{\sigma}*(1-v_K)
		\end{bmatrix}^{\top}.
	\end{align}
	
	We then relax the constraint $\vb(\xb)\in\{0,1\}$ to $\vb(\xb)\in [0,1]$  and consider the following minmization problem with a smoothing parameter $\varepsilon$:
	\begin{align}
		\min_{\vb \in \cS } \left[ 
		\int_{\Omega} \vb\cdot  \fb d\xb + \varepsilon\int_{\Omega} \vb \cdot \ln \vb  d\xb +\frac \lambda 2 \int_{\Omega} \vb (\xb)\cdot  (G_\sigma *(1-\vb ))(\xb) d\xb\right],
		\label{eq.potts.4}
	\end{align}
	with 
	\begin{equation}
		\cS = \bigg \{ \vb : \vb  =(v_1,v_2,\cdots, v_K), \sum_{k=1}^K v_k(\xb) =1, 
		v_k (\xb) \ge 0  \bigg   \} .
	\end{equation}
	If $\ub \in \cS$ is a minimizer of the above energy functional, we have the following lemma 
	\begin{lemma} \label{lem.limit}
		When $\varepsilon \mapsto 0$, we have $\ub\rightarrow \{0, 1\}^K$ for any $\xb\in\Omega$. 
		
	\end{lemma}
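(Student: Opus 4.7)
The plan is to study $\ub_\varepsilon$ through its first-order optimality conditions, obtain a softmax/Gibbs representation in the inverse temperature $1/\varepsilon$, establish uniform bounds that let me pass to a weak limit, and then show that the softmax concentrates on an argmin so the limit is a.e.\ $\{0,1\}^K$-valued.

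First, I would compute the Euler--Lagrange equation for (\ref{eq.potts.4}). The entropic term $\varepsilon v_k \ln v_k$ acts as an interior barrier that automatically forces $u_{\varepsilon,k}(\xb) > 0$, so the positivity constraints are never active and only the simplex constraint $\sum_k v_k = 1$ needs a multiplier $\mu(\xb)$. Using symmetry of $G_\sigma$ to differentiate the self-interaction $\sum_k v_k(G_\sigma \ast (1 - v_k))$, I would derive, for each $k$,
$$f_k(\xb) + \varepsilon(\ln u_{\varepsilon,k}(\xb) + 1) + \tfrac{\lambda}{2}(G_\sigma \ast (1 - 2 u_{\varepsilon,k}))(\xb) = \mu(\xb).$$
Eliminating $\mu$ by summing over $k$ and using $\sum_k u_{\varepsilon,k} = 1$ gives the softmax representation
$$u_{\varepsilon,k}(\xb) = \frac{\exp(-\psi_{\varepsilon,k}(\xb)/\varepsilon)}{\sum_{j=1}^K \exp(-\psi_{\varepsilon,j}(\xb)/\varepsilon)}, \quad \psi_{\varepsilon,k} := f_k + \tfrac{\lambda}{2}\, G_\sigma \ast (1-2 u_{\varepsilon,k}).$$

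Next, I would pass to the limit. Since $0 \le u_{\varepsilon,k} \le 1$ uniformly in $\varepsilon$, weak-$*$ compactness in $L^\infty(\Omega)^K$ extracts a subsequence $\ub_{\varepsilon_n} \rightharpoonup \ub^*$, with $\ub^*(\xb)$ still in the simplex for a.e.\ $\xb$. Convolution against the smooth bounded kernel $G_\sigma$ is compact, so $G_\sigma \ast (1 - 2 u_{\varepsilon_n,k}) \to G_\sigma \ast (1 - 2 u^*_k)$ uniformly, and therefore $\psi_{\varepsilon_n,k} \to \psi^*_k := f_k + \tfrac{\lambda}{2}\, G_\sigma \ast (1 - 2 u^*_k)$ uniformly. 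At every $\xb$ where $k^*(\xb) := \arg\min_k \psi^*_k(\xb)$ is unique, the standard softmax-to-hardmax limit gives $u_{\varepsilon_n,k}(\xb) \to \bone_{\{k = k^*(\xb)\}}$ as $\varepsilon_n \to 0$, so $\ub^*(\xb) \in \{0,1\}^K$.

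The main obstacle is controlling the exceptional tie set $\mathcal{E} := \{\xb : \arg\min_k \psi^*_k(\xb) \text{ is not unique}\}$, on which the softmax has a positive spread and the limit need not be a vertex of the simplex. A clean remedy is a mild nondegeneracy assumption on $\fb$ ensuring $|\mathcal{E}| = 0$; a more robust alternative is a $\Gamma$-convergence argument using the uniform bound $-|\Omega|\ln K \le \int \vb \cdot \ln \vb \le 0$ to conclude $F_\varepsilon \xrightarrow{\Gamma} F_0$ on $\cS$, together with the observation that $F_0 = \int \vb\cdot\fb + \tfrac{\lambda}{2}\int \vb\cdot(G_\sigma\ast\bone) - \tfrac{\lambda}{2}\int \vb\cdot(G_\sigma\ast\vb)$ is linear plus a concave self-interaction and therefore attains its minimum on the convex set $\cS$ at an extreme point, which is precisely a $\{0,1\}^K$-valued map. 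Either route yields $\ub^*(\xb) \in \{0,1\}^K$ a.e., which is the claim.
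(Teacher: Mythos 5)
Your second route ($\Gamma$-convergence to the $\varepsilon=0$ functional plus concavity of the nonlocal term) is essentially the strategy the paper uses, so the overall architecture is sound. The paper establishes $\Gamma$-convergence by the same continuity observation and then shows that the $\varepsilon=0$ functional has only binary minimizers. Your first route, via the Euler--Lagrange/softmax representation, is a genuinely different idea, but as you yourself note it does not close: on the tie set $\mathcal{E}$ the softmax need not concentrate, and your proposed fix of assuming $|\mathcal{E}|=0$ is an extra nondegeneracy hypothesis on $\fb$ (in fact on the unknown limit $\psi^*_k$, which depends on $\ub^*$ itself) that the lemma does not grant you. So the softmax route should be regarded as heuristic, not a proof.

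The remaining gap is in how you finish the second route. Invoking that a concave functional on a convex set ``attains its minimum at an extreme point'' (Bauer's principle) only yields the \emph{existence} of a binary minimizer of $F_0$. What you need is that \emph{every} minimizer of $F_0$ is binary, because the weak-$*$ limit $\ub^*$ of the minimizers $\ub_{\varepsilon_n}$ is merely \emph{some} minimizer of the $\Gamma$-limit, and nothing in your argument prevents it from being a non-extreme one. (When $\lambda=0$ and the $f_k$ tie on a set of positive measure, non-binary minimizers genuinely exist, so this is not a vacuous worry.) The paper closes exactly this gap by a quantitative strict-concavity argument: if a minimizer were non-binary on a set $\cK$ of positive measure, one can perturb along indicator directions $\bchi$ supported on $\cK$ while staying in the feasible set for $|t|\le c$, and the second derivative of the energy along this segment equals $-2\lambda\int_\Omega \delta\, G_\sigma*\delta\, d\xb<0$, contradicting minimality. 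To repair your proof, replace the extreme-point citation by this strict negativity of the quadratic form $\delta\mapsto\int\delta\,G_\sigma*\delta\,d\xb$ on nontrivial indicator perturbations (which uses $\lambda>0$ and the positivity of the Gaussian kernel); with that in hand, your $\Gamma$-convergence step correctly delivers $\ub^*(\xb)\in\{0,1\}^K$ a.e.
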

	
	\begin{proof}
		Denote 
		\begin{align}
			F_{\varepsilon}(\vb)=\int_{\Omega} \vb\cdot  \fb d\xb + \varepsilon\int_{\Omega} \vb \cdot \ln \vb  d\xb +\frac \lambda 2 \int_{\Omega} \vb (\xb)\cdot  (G_\sigma *(1-\vb ))(\xb) d\xb.
		\end{align}
		Let $\{F_{\varepsilon(n)}\}_{n=1}^{\infty}$ be any sequence so that $\lim_{n\rightarrow \infty} \varepsilon(n)=0$. Let $\vb^*$ be any elements in $\cS$ and $\{\vb_n\}_{n=1}^{\infty}\subset \cS$ be a sequence satisfying $\lim_{n\rightarrow \infty} \vb_n=\vb^*$. Since $F_{\varepsilon}$ is continuous in $\varepsilon$ and $\vb$, we have
		\begin{align}
			\lim_{n\rightarrow \infty} F_{\varepsilon(n)}(\vb_n)= F(\vb).
		\end{align}
		Therefore, $\{F_{\varepsilon(n)}\}_{n=1}^{\infty}$ $\Gamma$-converges to $\cF$. Proving Lemma \ref{lem.limit} reduces to proving $\ub\in\{0,1\}^K$ for the limiting case, i.e., when $\varepsilon=0$. 
		
		Our proof borrows techniques from \cite{wang2021efficient}. When $\varepsilon=0$, we have
		\begin{align}
			\ub = &\min_{\vb \in \cS} \int_{\Omega} \vb\cdot \fb d\xb + \frac{\lambda}{2}\int_{\Omega} \vb\cdot G_{\sigma} * (1-\vb)d\xb \nonumber\\
			=& \min_{\vb \in \cS} \sum_{k=1}^{K} \left[\int_{\Omega} v_kf_k d\xb + \frac{\lambda}{2}\int_{\Omega} v_k  G_{\sigma} * (1-v_k)d\xb\right]
			\label{eq.potts.tu}
		\end{align}
		which is concave in each $v_k$. In (\ref{eq.potts.4}), we can represent $v_K=1-\sum_{k=1}^{K-1}v_k$. Denote $\widetilde{\vb}=[v_1,...,v_{K-1}]^{\top}$ and $\widetilde{\fb}=[f_1-f_K,...,f_{K-1}-f_K]^{\top}$. Denote the set $\widetilde{\cal S}=\{\widetilde{\vb}: 0\leq \widetilde{v}_k\leq 1 \mbox{ for } k=1,...,K-1, \mbox{ and }  0\leq \sum_{k=1}^{K-1} \widetilde{v}_k\leq 1\}.$ If $\ub$ is a minimizer of (\ref{eq.potts.tu}), then $\widetilde{\ub}$ is a minimizer of
		\begin{align}
			\min_{\widetilde{\vb} \in \widetilde{S}} \sum_{k=1}^{K-1} \left[\int_{\Omega} \widetilde{v}_k(f_k-f_K) d\xb + \frac{\lambda}{2}\int_{\Omega} \widetilde{v}_k  G_{\sigma}* (1-\widetilde{v}_k)-\widetilde{v}_kG_{\sigma}*\left(\sum_{k=1}^{K-1} \widetilde{v}_k\right) d\xb\right].
		\end{align}
		We prove the lemma by contradiction. 
		Assume $\widetilde{\ub}\in \widetilde{\cal S}$ is not binary. There are two cases. 
		
		\noindent {\bf Case 1:} There exist a constant $c>0$  and a set $\cK$ on which 
		\begin{align}
			c< \widetilde{u}_{k^*}< 1-c.
		\end{align}
		for some $1\leq k^*\leq K-1$ and $\widetilde{u}_k=0$ for $k\neq k^*$. Let $\delta$ be the indicator function of $\cK$ and $\bchi=[\chi_1,...,\chi_{K-1}]^{\top}$ such that $\chi_{k^*}=\delta$ and $\chi_k=0$ for $k\neq k^*$. Then $\widetilde{\ub}+t\bchi\in \widetilde{\cal S}$ for $|t|\leq c$.
		We deduce that
		\begin{align*}
			\frac{d (\widetilde{\ub}+t\bchi)}{d t}=\int_{\Omega} \delta(f_{k^*}-f_{K-1})d\xb &+ \frac{\lambda}{2}\int_{\Omega} \delta G_{\sigma}*(1-\widetilde{u}_{k^*}-t\delta)-\delta G_{\sigma}*(\widetilde{u}_{k^*}+t\delta)d\xb\\
			&-\frac{\lambda}{2}\int_{\Omega} \delta G_{\sigma}*\left(t\delta+\sum_{k=1}^{K-1} \widetilde{u}_k\right) + \delta G_{\sigma}*(\widetilde{u}_{k^*}+t\delta)d\xb,
		\end{align*}
		and
		\begin{align*}
			\frac{d^2 (\widetilde{\ub}+t\bchi)}{d t^2}=-2\lambda\int_{\Omega} \delta G_{\sigma} *\delta d\xb<0,
		\end{align*}
		which contradicts that $\widetilde{\ub}$ is a minimizer.
		
		\noindent {\bf Case 2:} There exist a constant $c>0$ and a set $\cK$ on which 
		\begin{align}
			c< \widetilde{u}_{k_1^*}< 1-c, \quad c< \widetilde{u}_{k_2^*}< 1-c
		\end{align}
		for some $1\leq k_1^*,k_2^*\leq K-1$ and $\widetilde{u}_k=0$ for $k\neq k_1^*,k_2^*$.
		Let $\delta$ be the indicator function of $\cK$ and $\bchi=[\chi_1,...,\chi_{K-1}]^{\top}$ such that $\chi_{k_1^*}=\delta$, $\chi_{k_2^*}=-\delta$, and $\chi_k=0$ for $k\neq k_1^*,k_2^*$. Then $\widetilde{\ub}+t\bchi\in \widetilde{\cal S}$ for $|t|\leq c$.
		
		We deduce that
		\begin{align*}
			\frac{d (\widetilde{\ub}+t\bchi)}{d t}=&\int_{\Omega} \delta(f_{k_1^*}-f_{K-1})-\delta(f_{k_2^*}-f_{K-1}) d\xb \\
			&+ \frac{\lambda}{2}\int_{\Omega} \delta G_{\sigma}*(1-\widetilde{u}_{k_1^*}-t\delta)-\delta G_{\sigma}*(\widetilde{u}_{k_1^*}+t\delta)d\xb\\
			&+\frac{\lambda}{2}\int_{\Omega} -\delta G_{\sigma}*(1-\widetilde{u}_{k_2^*}+t\delta)+\delta G_{\sigma}*(\widetilde{u}_{k_2^*}-t\delta)d\xb
		\end{align*}
		and
		\begin{align*}
			\frac{d^2 (\widetilde{\ub}+t\bchi)}{d t^2}=-2\lambda\int_{\Omega} \delta G_{\sigma} *\delta d\xb<0,
		\end{align*}
		which contradicts that $\widetilde{\ub}$ is a minimizer.
		
		In conclusion, $\widetilde{\ub}$ must be binary and thus $\ub$ must be binary.
		
		
	\end{proof} 
	The Euler-Lagrangian equation of (\ref{eq.potts.4}) reads as
	$$ \varepsilon (1 +  \ln \ub)   + \lambda G_\sigma *(1-2\ub ) + f + \partial I_{\cS} (\ub )  \ni  \mathbf{0}, \quad \forall \xb \in \Omega, $$
	where $I_{\cS}(\cdot )$ is the indicator function for set $\cS$ and $\partial I_{\cS}(\ub)$ is its sub-differential at $\ub$. The gradient flow for this problem is
	$$
	\begin{cases}
		\frac {\partial \ub}{\partial t}   + \varepsilon( 1 +  \ln \ub)  + \lambda G_\sigma *(1-2\ub) + \fb +  \partial I_{\cS} (\ub ) \ni {\bf 0}, \ (\xb,t)\in \Omega\times (0,T],\\  
		\ub(\xb,0) = \ub_0(\xb),   \xb \in \Omega . 
	\end{cases}
	$$
	For the two-phase Potts model, i.e. $K=2$, we use $v=v_1, v_2=1-v$, then the minimization problem (\ref{eq.potts.4})  becomes (\ref{eq.potts2.4}).
	
	We want to mention that (\ref{eq.potts.4}) is a regularized softmax in the sense that it reduces to the softmax function when $\lambda=0$ and $\varepsilon=1$. It has been used in \cite{liu2022deep} to embed variational models into traditional neural networks. If we replace the Gaussian kernel $G_\sigma(\xb)$ by $B(\xb/\sigma)$ where $B(x)$ is the indicator function for the unit ball, then the length approximation is still correct, see \cite{wang2009edge} for a proof. It has been used in \cite{Liu2011} in the same way as threshold dynamics for image segmentation. 
	
	\section{Multigrids and image functions over the grids}
	\label{app.multigrid}
	
	In the discrete setting, an image can be viewed as a piecewise constant function on a grid. Images with different resolutions can then be viewed as functions on grids of different sizes. 
	Without loss of generality, we assume the original discrete image has a grid (resolution) $\cT$ of size $m\times n$, and grid step size $h$, with 
	$$m=2^{s_1},\quad n=2^{s_2}$$
	for some $h>0$ and integers $s_1,s_2>0$. The image $f$ has a constant value on each small patch $[\alpha_1h,(\alpha_1+1)h)\times[\alpha_2h,(\alpha_2+1)h)$ for $\alpha_1=1,...,m$ and $\alpha_2=1,...,n$. 
	
	Starting with $\cT^1=\cT$, we consider a sequence of coarse grids $\{\cT^j\}_{j=1}^J$ so that $\cT^j$ has grid size $m_j\times n_j$ and grid step size $h_j$ with
	\begin{align*}
		&	m_j=2^{s_1-j+1},\quad n_j=2^{s_2-j+1}, \quad h_k=2^{j-1}h.
	\end{align*}
	A sequence of grids with $m=n=16,h=1$ are illustrated in the first row of Figure \ref{fig.grids}.
	\begin{figure}
		\centering
		\begin{tabular}{cccc}
			$\cT^1$ & $\cT^2$ & $\cT^3$ & $\cT^4$\\
			\includegraphics[width=0.22\textwidth]{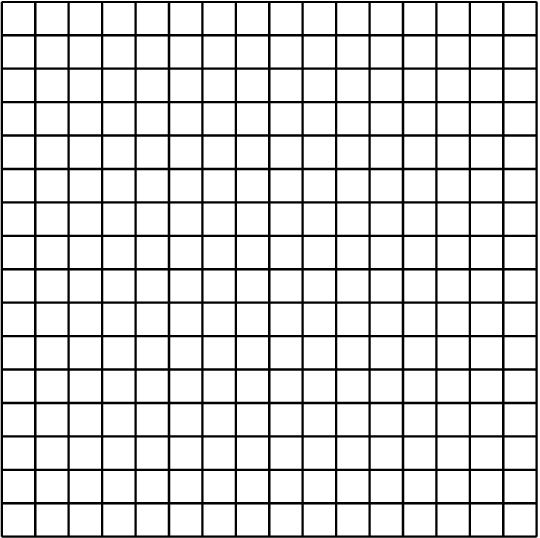} &
			\includegraphics[width=0.22\textwidth]{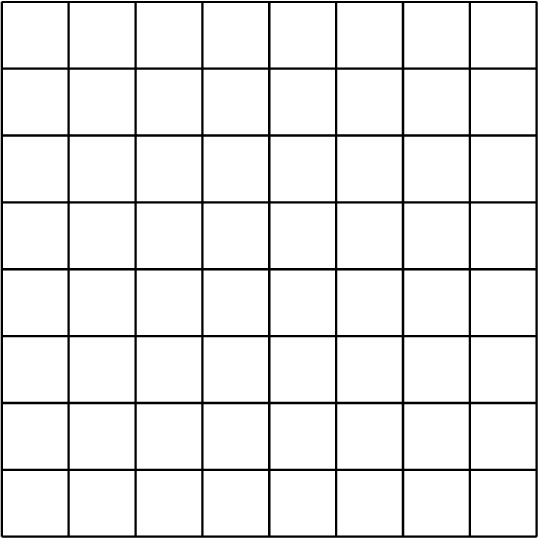} &
			\includegraphics[width=0.22\textwidth]{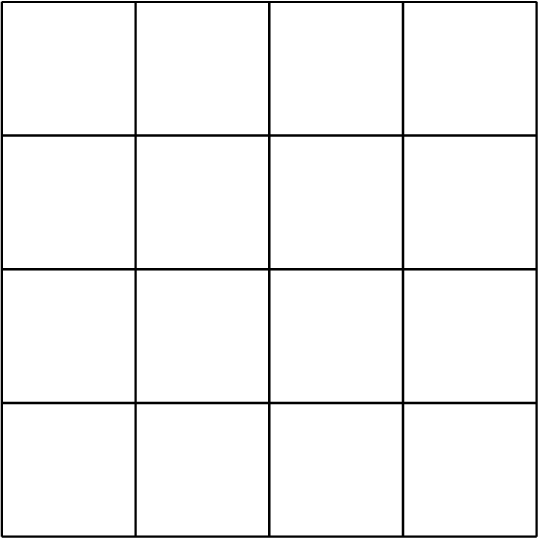} &
			\includegraphics[width=0.22\textwidth]{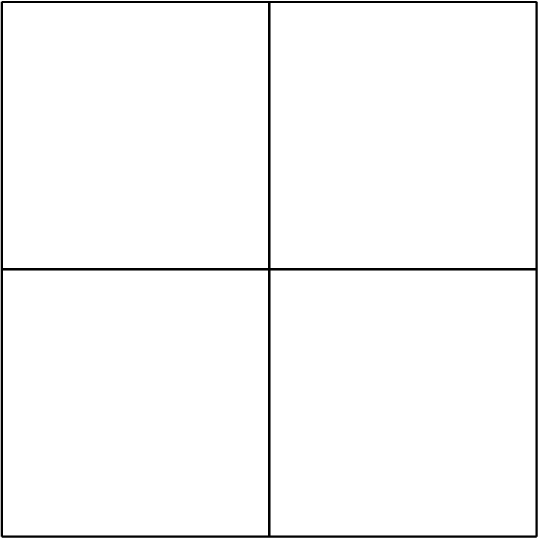}\\
			\includegraphics[trim={2cm 0 1.5cm 0},clip,width=0.22\textwidth]{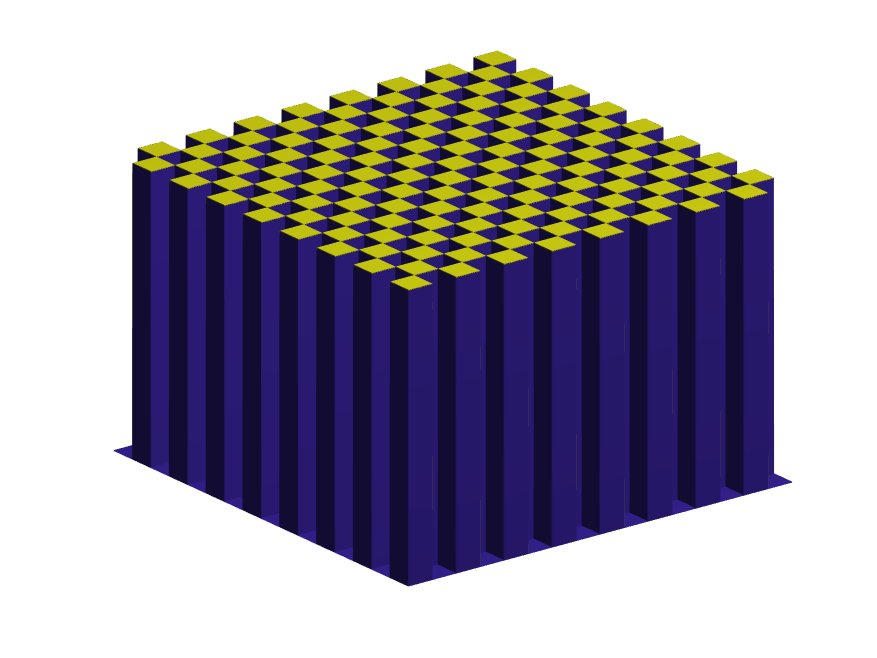} &
			\includegraphics[trim={2cm 0 1.5cm 0},clip,width=0.22\textwidth]{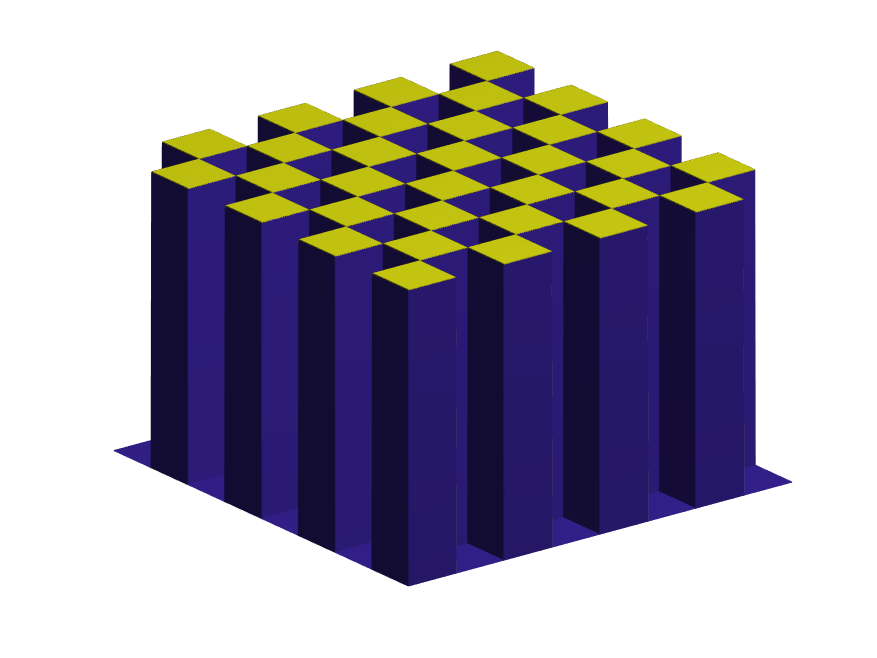} &
			\includegraphics[trim={2cm 0 1.5cm 0},clip,width=0.22\textwidth]{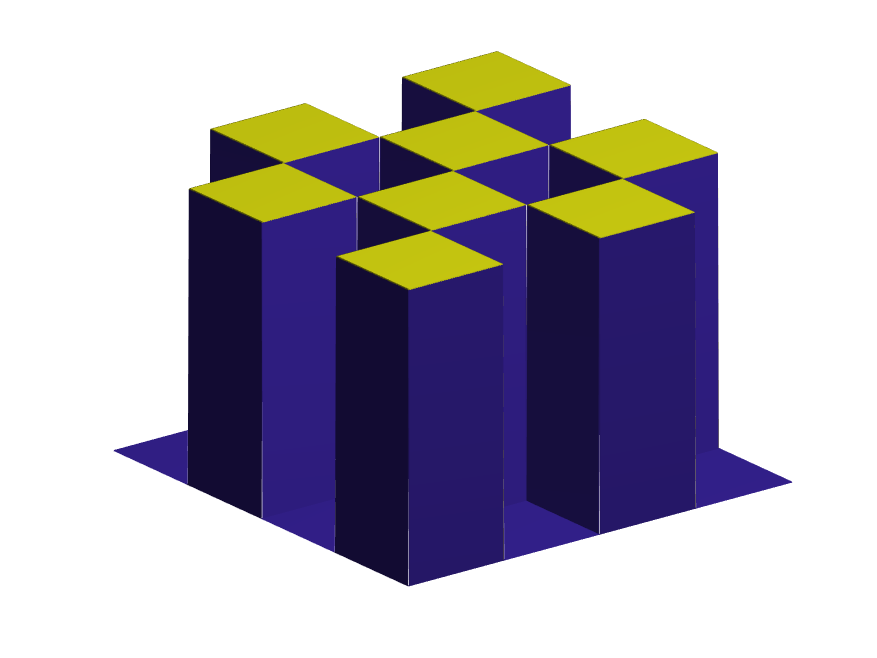} &
			\includegraphics[trim={2cm 0 1.5cm 0},clip,width=0.22\textwidth]{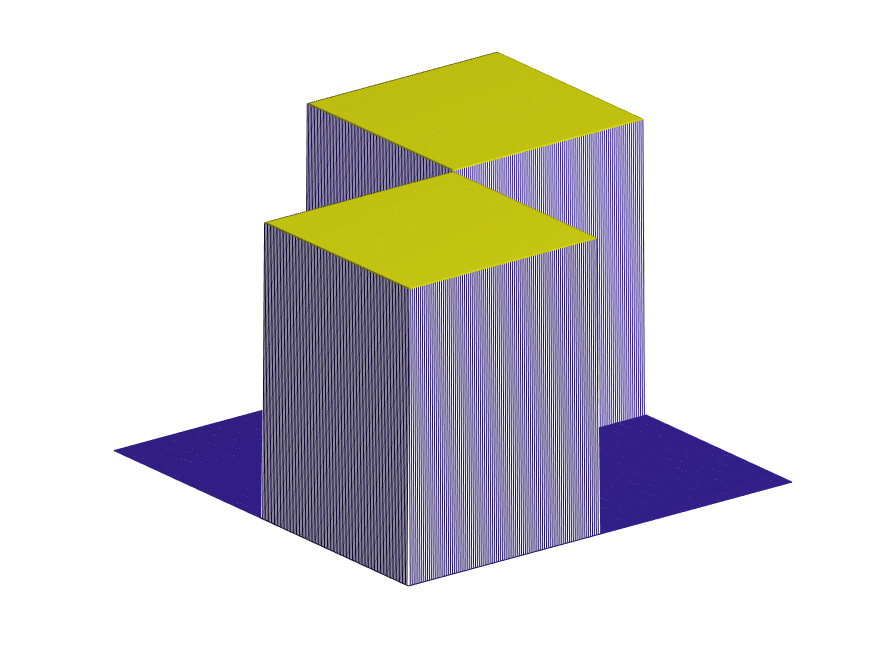}
		\end{tabular}
		\caption{An illustration of a sequence of grids with $J=4$. First row: visualization of grids. Second row: visualization of half of the basis functions for each grid.}
		\label{fig.grids}
	\end{figure}
	Denote 
	$
	\cI^j=\{ \balpha  : \balpha= (\alpha_1,\alpha_2), \alpha_1=1,...,m_j,\ \alpha_2=1,...,n_j\}. 
	$
	For a given grid $\cT^j$, we can define a set of piecewise-constant basis functions 
	$\{\phi^j_{\alpha} \}_{\alpha \in I^j}$ so that
	\begin{align}
		\phi^j_{\balpha}(x,y)=\begin{cases}
			1 & \mbox{ if } (x,y)\in [\alpha_1h_j,(\alpha_1+1)h_j)\times[\alpha_2h_j,(\alpha_2+1)h_j),\\
			0 & \mbox{ otherwise}.
		\end{cases}
		\label{eq.basis}
	\end{align}
	In the second row of Figure \ref{fig.grids}, we illustrate the grids with $m=m=16, J=4$ and half of the basis functions for each grid. Let $\cV^j ={\rm span}(\{\phi^j_{\alpha} \}_{\alpha \in I^j})$ be the linear space containing all the piecewise constant functions over grid $\cT^j$, then 
	we have
	\begin{align}
		\cV^1\supset \cV^2 \supset \cdots \supset \cV^J. 
		\label{eq.fspace.relation}
	\end{align}
	For each $f\in \cV^j$, it can be expressed as
	$f(x,y)=\sum_{\balpha\in \cI^j} f_{\balpha}^j\phi_{\balpha}^j(x,y)$
	with $f_{\balpha}^j=f(\alpha_1h_j,\alpha_2h_j)$.
	
	Let $\cT^j$ and $\cT^{j+1}$ be two grids. We next discuss the downsampling and upsampling operations. Consider $f^{j+1}\in \cV^{j+1}$.  According to (\ref{eq.fspace.relation}), there exists a function $f^{j}\in \cV^{j}$ satisfying $f^{j}=f^{j+1}$. We denote the upsampling operator $\cU^{j+1}:\cV^{j+1} \rightarrow \cV^{j} $ so that
	\begin{align}
		f^j=\cU^{j+1}(f^{j+1}).
	\end{align}
	It is easy to see that for $\balpha\in \cI^j$, we have
	\begin{align}
		(\cU^j(f^j))_{\balpha}=f^{j+1}_{\balpha'} \mbox{ with } \alpha_1',\alpha_2' \mbox{ satisfying } 2\alpha_1'-1\leq \alpha_1 \leq 2\alpha_1', \  2\alpha_2'-1\leq \alpha_2 \leq 2\alpha_2'.
		\label{eq.upsampling}
	\end{align}

	Given a function $f^{j}\in \cV^j$, there are many ways to define a downsampling operator $\cD^j: \cV^{j} \rightarrow \cV^{j+1}$. For example, we can define $\cD^j$ as an averaging downsampling operator:
	\begin{align}
		f^{j+1}=(\cU^j(f^{j}))_{\balpha}= \frac{1}{4}\sum_{\alpha_1'=2\alpha_1-1}^{2\alpha_1} \sum_{\alpha_2'=2\alpha_2-1}^{2\alpha_2} f^j_{\alpha_1',\alpha_2'}.
		\label{eq.downsampling.ave}
	\end{align} 
	Another choice is the max pooling operator which is widely used in deep learning:
	\begin{align}
		f^{j+1}=(\cU^k(f^{j}))_{\balpha}= \max_{\substack{\alpha_1'=2\alpha_1-1,2\alpha_1\\ \alpha_2'=2\alpha_2-1,2\alpha_2}} f^j_{\alpha_1',\alpha_2'}.
		\label{eq.downsampling.max}
	\end{align}
	
	\section{Parallel and sequential splitting scheme}
	\label{sec.splitting}
	Operator splitting methods have been widely used for scientific computing and many other applications. Among the rich literature on it, we could point to the following recent publications and surveys  
	\cite{glowinski1989augmented,glowinski2017splitting,glowinski2016some}. 
	Consider an initial value problem 
	\begin{align}
		\begin{cases}
			u_t+\sum_{m=1}^M (A_m(\xb,t;u) +S_m(\xb,t;u)+f_m(\xb,t))=0 \mbox{ on } \Omega\times [0,T],\\
			u(0)=u_0,
		\end{cases}
		\label{eq.general}
	\end{align}
	where $A_m(\xb,t;u)$ and $S_m(u, \xb, t)$ are operators (linear or nonlinear), and $f_m(\xb,t)$ are functions independent of $u$.  $\Omega$ is a given domain and $T$ is a given time.  In the following, we omit $\xb$ for the simplicity of notation.  We discuss in this section parallel and sequential splitting schemes to solve (\ref{eq.general}). As usual, we denote $\Delta t = T/N, t^n = n \Delta t$ for $n=0,1,\cdots N$ with a given step number $N$.

	\subsection{Parallel splitting schemes}
	
	The authors in  \cite{lu1992parallel} proposed a parallel splitting scheme to solve (\ref{eq.general}):
	\begin{align}
		\begin{cases}
			u^0=u_0,\\
			\mbox{for } n\geq0, \mbox{we compute } u^{n+1} \mbox{ from } u^n \mbox{ by solving first}\\
			\displaystyle \frac{u^{n}_m-u^n}{M\Delta t} + A_m(t^{n+1};u^n)+S_m(t^{n+1};u^{n}_m)+f_m(t^{n+1})=0, \mbox{ for } m=1,...,M,\\
			\displaystyle u^{n+1} \mbox{ is computed by averaging }
			u^{n+1}=\frac{1}{M} \sum_{j=1}^J u^{n}_m.
		\end{cases}
		\label{eq.general.parallel}
	\end{align}
	See also \cite[Sec.2.8]{glowinski2016some} for some more explanations about this scheme. 
	It has been proved that (\ref{eq.general.parallel}) is $O(\Delta t)$ accurate when $A$ and $B$ are linear operators. In (\ref{eq.general.parallel}), all operators are treated implicitly. If the operators $A_m$  are treated explicitly, the parallel splitting scheme becomes
	\begin{align}
		\begin{cases}
			u^0=u_0,\\
			\mbox{for } n\geq0, \mbox{we compute } u^{n+1} \mbox{ from } u^n \mbox{ by solving first}\\
			\displaystyle \frac{u^{n}_m-u^n}{M\Delta t} + A_m(t^{n};u^{n})+S_m(t^{n+1};u^{n}_m)+f_m(t^{n+1})=0, \mbox{ for } m=1,...,M,\\
			\displaystyle u^{n+1} \mbox{ is computed by averaging }
			u^{n+1}=\frac{1}{M} \sum_{m=1}^M u^{n}_m.
		\end{cases}
		\label{eq.general.parallel.1}
	\end{align}
	The structure of (\ref{eq.general.parallel.1}) is illustrated in Figure \ref{fig.parasequen}(a).
	
	\begin{figure}
		\centering
		(a)\\
		\includegraphics[width=0.8\textwidth]{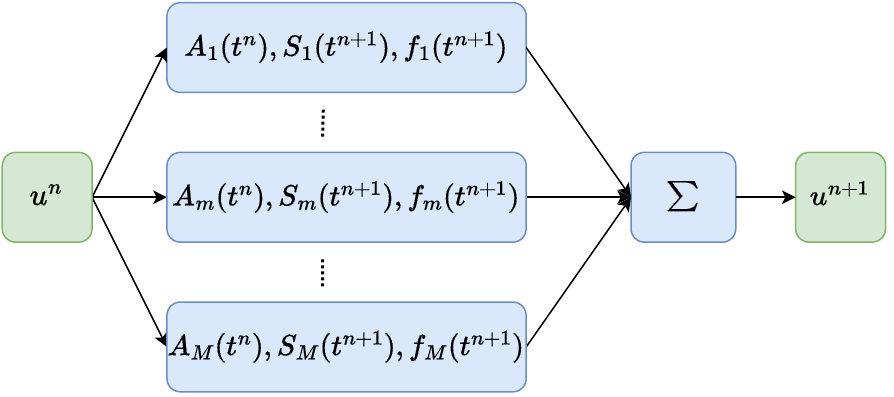}\\
		(b)\\
		\includegraphics[width=0.9\textwidth]{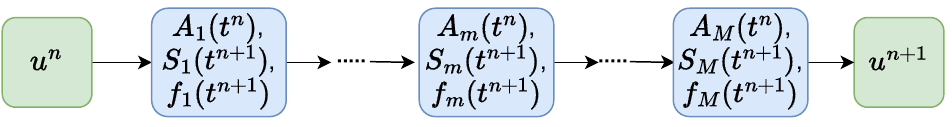}
		\caption{Illustrations of the corresponding network architecture for (a) parallel splitting and (b) sequential splitting.}
		\label{fig.parasequen}
	\end{figure}
	\subsection{Sequential splitting schemes}
	
	As a sequential splitting scheme, the Marchuk-Yanenko scheme \cite{marchuk1990splitting, glowinski2003finite} solving (\ref{eq.general}) reads as:
	\begin{align}
		\begin{cases}
			u^0=u_0,\\
			\mbox{for } n\geq0, \mbox{we compute } u^{n+1} \mbox{ from } u^n \mbox{ by solving }\\
			\displaystyle \frac{u^{n+m/M}-u^{n+(m-1)/M}}{\Delta t} + A_m(t^{n+1};u^{n+m/M})+S_m(t^{n+1};u^{n+m/M})+f_m(t^{n+1})=0, \\
			\mbox{for } m=1,...,M. 
		\end{cases}
		\label{eq.general.sequential}
	\end{align}
	See also \cite[Sec.2.2]{glowinski2016some} for some more explanations about this scheme. 
	One can show that this scheme is also $O(\Delta t)$ accurate. In (\ref{eq.general.sequential}), all operators are treated implicitly. If the operators $A_m$ are treated explicitly, the sequential splitting scheme becomes
	\begin{align}
		\begin{cases}
			u^0=u_0,\\
			\mbox{for } n\geq0, \mbox{we compute } u^{n+1} \mbox{ from } u^n \mbox{ by solving }\\
			\displaystyle \frac{u^{n+m/M}-u^{n+(m-1)/M}}{\Delta t} + A_m(t^{n};u^{n+(m-1)/M})+S_m(t^{n+1};u^{n+m/M})+f_m(t^{n+1})=0, \\
			\mbox{for } m=1,...,M.
		\end{cases}
		\label{eq.general.sequential.1}
	\end{align}
	The structure of (\ref{eq.general.sequential.1}) is illustrated in Figure \ref{fig.parasequen}(b).
	
	\section{A  hybrid splitting scheme to solve initial value problems}
	\label{sec.hybrid}
	Algorithm \ref{alg.V.full} uses a hybrid splitting scheme to split the control problem (\ref{eq.control.full}) into several subproblems. In this section, we introduce the hybrid splitting scheme and give an error analysis.
	
	\subsection{A hybrid splitting scheme}
	Consider the following initial value problem
	\begin{align}
		\begin{cases}
			u_t+\displaystyle\sum_{m=1}^M \left(\sum_{k=1}^{c_m} \sum_{s=1}^{d_{m}} A_{k,s}^m(\xb,t;u) +\sum_{k=1}^{c_m} S_{k}^m(\xb,t;u) +\sum_{k=1}^{c_m}	f_{k}^m(\xb,t)\right)=0 \mbox{ on } \Omega\times [0,T],\\
			u(0)=u_0.
		\end{cases}
		\label{eq.general.hybrid}
	\end{align}
	Above $\{c_m\}_{m=1}^M, \{d_{m}\}_{m=1}^{M}$ are some known positive integers and $A_{k,s}^m, S_k^m$ are operators. As usual, $f_k^m$ is used to denote functions independent of $u$. 
	We propose a hybrid splitting scheme, which is a mixture of the parallel and the sequential splitting schemes, to solve (\ref{eq.general.hybrid}). In our splitting, we split all operators into $M$ sequential substeps, each of which consists $c_m$ parallel splittings. The computation of each parallel splitting uses $d_m$ intermediate results from the previous substep, which requires the condition $d_{m}\leq c_{m-1}$. The algorithm is summarized in Algorithm \ref{alg.hybrid}.  The operators can be treated implicitly or explicitly. In (\ref{eq.hybrid.k}), we purposely treat $A_{k,s}^m$ explicitly and $S_k^m$
	implicitly.
	
	\begin{algorithm}[th!]
		\caption{A hybrid splitting scheme}\label{alg.hybrid}
		\begin{algorithmic}
			\STATE {\bf Data:} The solution $u^n$ at time step $t^n$.
			\STATE {\bf Result:} The computed solution $u^{n+1}$ at time step $t^{n+1}$.
			
			{\bf Set} $d_{1} = 1,u_1^{n} = u^n$. \\
			\FOR {$m=1,...,M$}
			\FOR {$k=1,...,c_{j}$}
			\STATE Compute $u^{n+m/M}_k$ by solving
			\begin{align}
				\frac {u^{n+m/M}_k - u^{n+(m-1)/M}} {c_m \Delta t } =   
				-\sum_{s=1}^{d_m} 
				A_{k,s}^{m}( t^n;u_s^{n+(m-1)/M})  - S_k^m(t^{n+1};u^{n+m/M})-f_k^m(t^n).
				\label{eq.hybrid.k}
			\end{align}
			
			\ENDFOR
			
			Compute $u^{n+m/M}$ as
			\begin{equation}
				u^{n+m/M} = \frac 1 {c_m} \sum_{k=1}^{c_m} u_k^{n+m/M}. 
				\label{eq.hybrid.ave}
			\end{equation}
			\ENDFOR
		\end{algorithmic}
	\end{algorithm}
	
	The following theorem shows that Algorithm \ref{alg.hybrid}  converges with first order at least when the split operators are all linear. 
	\begin{theorem}\label{thm.hybrid}
		For a fixed $T>0$ and a positive integer $N$, set $\Delta t=T/N$. Let $u^{n+1}$ be the numerical solution by Algorithm \ref{alg.hybrid}. Assume $A_{k,s}^m$'s and $S_k^m$'s are Lipschitz with respect to $t,\xb$, and are linear symmetric positive definite operators with respect to $u$. Assume $\Delta t$ is small enough (i.e., $N$ is large enough). We have 
		\begin{align}
			\|u^{n+1}-u(t^{n+1})\|_{\infty}=O(\Delta t)
		\end{align}
		for any $0\leq n\leq N$.
	\end{theorem}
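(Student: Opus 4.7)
The plan is to establish first-order accuracy by the classical consistency $+$ stability $\Rightarrow$ convergence route, exploiting the fact that the hybrid scheme can be viewed as a sequential composition of $M$ substeps, each of which is internally a parallel splitting.

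First I would introduce, for each $m\in\{1,\ldots,M\}$, the auxiliary continuous sub-problem that collects exactly the operators appearing at substep $m$:
\begin{equation*}
w^{(m)}_t = -\sum_{k=1}^{c_m}\sum_{s=1}^{d_m} A^{m}_{k,s}(t;w^{(m)}) -\sum_{k=1}^{c_m} S^{m}_k(t;w^{(m)}) -\sum_{k=1}^{c_m} f^{m}_k(t),
\end{equation*}
so that the original equation \eqref{eq.general.hybrid} is the sum of these flows. For the local truncation error of one parallel block, I would start from the exact solution $u(t^n)$, run \eqref{eq.hybrid.k}--\eqref{eq.hybrid.ave}, and Taylor expand around $t^n$. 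Because the left-hand side of \eqref{eq.hybrid.k} carries the factor $c_m\Delta t$ while the averaging \eqref{eq.hybrid.ave} divides by $c_m$, the combined increment after substep $m$ is exactly $\Delta t$ times the sum of the $c_m$ operators at substep $m$, plus a remainder that is $O(\Delta t^2)$; the Lipschitz assumption on $A^{m}_{k,s},S^{m}_k$ in $(t,\xb)$ absorbs the fact that $A^{m}_{k,s}$ is evaluated at $t^n$ and $S^{m}_k$ at $t^{n+1}$. Composing the $M$ substeps, the standard Marchuk--Yanenko-type argument (cf.\ \cite{marchuk1990splitting,glowinski2003finite} as cited for \eqref{eq.general.sequential}) then gives a one-step consistency error of $O(\Delta t^2)$ for Algorithm \ref{alg.hybrid} applied to \eqref{eq.general.hybrid}.

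Next I would establish stability. For each $k$, \eqref{eq.hybrid.k} reads $(I_{\rm id}+c_m\Delta t\, S^{m}_k)u^{n+m/M}_k = u^{n+(m-1)/M} - c_m\Delta t\bigl(\sum_s A^{m}_{k,s} u^{n+(m-1)/M}_s + f^{m}_k\bigr)$; since $S^{m}_k$ is SPD, the resolvent $(I_{\rm id}+c_m\Delta t\, S^{m}_k)^{-1}$ is a contraction in $\ell^2$, and by norm equivalence in the (finite-dimensional) discretized setting this transfers to an $\ell^\infty$ bound of the form $\|(I_{\rm id}+c_m\Delta t\, S^{m}_k)^{-1}\|_\infty\le 1+C\Delta t$. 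The explicit part is bounded because the $A^{m}_{k,s}$ are linear and bounded; combined with the convex averaging \eqref{eq.hybrid.ave}, one obtains the one-step stability estimate $\|u^{n+m/M}\|_\infty \le (1+C\Delta t)\|u^{n+(m-1)/M}\|_\infty + C\Delta t$ for $\Delta t$ sufficiently small. Chaining the $M$ substeps yields $\|u^{n+1}\|_\infty\le (1+C\Delta t)\|u^n\|_\infty + C\Delta t$.

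Finally, combining consistency and stability in the usual way, I would define the global error $e^n = u^n - u(t^n)$, obtain a recurrence $\|e^{n+1}\|_\infty \le (1+C\Delta t)\|e^n\|_\infty + O(\Delta t^2)$, and apply the discrete Gronwall lemma to conclude $\|e^n\|_\infty = O(\Delta t)$ uniformly for $n\le N$. The main obstacle I anticipate is the stability step in the $\ell^\infty$ norm: SPD-ness naturally delivers an $\ell^2$ contraction for the implicit resolvents, but the statement is in $L^\infty$. I would handle this either by invoking finite-dimensional norm equivalence (at the cost of a mesh-dependent constant that is acceptable once the spatial discretization is fixed) or, more sharply, by arguing that the discretized $S^{m}_k$ give rise to M-matrix-type resolvents for which $\|(I_{\rm id}+c_m\Delta t\, S^{m}_k)^{-1}\|_\infty\le 1$ directly; the smallness condition on $\Delta t$ in the theorem is precisely what is needed so that the explicit $A^{m}_{k,s}$ term does not destroy this contraction.
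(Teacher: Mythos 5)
Your proposal is correct and its core computation coincides with the paper's: both establish that one step of Algorithm \ref{alg.hybrid} reproduces the exact flow of (\ref{eq.general.hybrid}) up to $O(\Delta t^2)$, using exactly the cancellation you identify (the factor $c_m\Delta t$ in (\ref{eq.hybrid.k}) against the division by $c_m$ in (\ref{eq.hybrid.ave})), and both expand the implicit resolvents as $(\cI+\gamma\Delta t S)^{-1}=\cI-\gamma\Delta t S+O(\Delta t^2)$. The paper organizes this as an explicit induction over the substeps $m^*$ and compares against the Duhamel (matrix-exponential) representation of the exact linear flow, whereas you phrase it as a Marchuk--Yanenko composition of sub-flows; these are cosmetically different but mathematically the same consistency estimate. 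Where you genuinely diverge is after consistency: the paper simply asserts ``the local error is $O(\Delta t^2)$, therefore the global error is $O(\Delta t)$,'' with no stability argument, while you supply the missing Lax-type step --- a one-step $(1+C\Delta t)$ bound on the scheme's propagation operator followed by discrete Gronwall. Your worry about passing from the $\ell^2$ contraction furnished by SPD-ness to the $L^\infty$ norm appearing in the theorem statement is a real issue that the paper does not address at all; your two proposed fixes (finite-dimensional norm equivalence with a mesh-dependent constant, or an M-matrix/maximum-principle bound on the resolvents) are both reasonable, and including one of them makes your argument strictly more complete than the paper's.
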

	Theorem \ref{thm.hybrid} is proved in Appendix \ref{sec.thm.hybrid.proof}.
	\subsection{A more general hybrid splitting scheme with relaxation}
	
	When applying operator splitting schemes to neural network constructions, we need to handle a more general dynamical system in the following form:
	\begin{align}
		\begin{cases}
			u_t+\displaystyle \sum_{j=1}^{2J-1} \sum_{m=1}^{M_j} \left(\sum_{k=1}^{c_{j,m}} \sum_{s=1}^{d_{j,m}} A_{k,s}^{j,m}(\xb,t;u) +\sum_{k=1}^{c_{j,m}} S_{k}^{j,m}(\xb,t;u) +\sum_{k=1}^{c_{j,m}}	f_{k}^{j,m}(\xb,t)\right)\\
			\hspace{5cm} +\displaystyle\sum_{s=1}^{d_{2J}} A^*_s(\xb,t;u)+S^*(\xb,t;u)+f^*(\xb,t)=0 \mbox{ on } \Omega\times [0,T], \\
			u(0)=u_0,
		\end{cases}
		\label{eq.general.hybrid.relax}
	\end{align}
	
	The positive integers $M, J_m, c_{j,m}, d_{j,m}$, the operators $A_{k,s}^{j,m}(t), S_k^{j,m}$ and functions $f_k^{j,m}(t)$ are  supposed to be given. Suppose $A_{k,s}^{j,m}(t), S_k^{j,m}(t)$ and $f_k^{j,m}(t)$ are time dependent. We assume $d_{j,m}\leq c_{j,m-1}$ for $1\leq m\leq M_j$, and $d_{2J}\leq c_{2J-1,M_{2J-1}}$, where $c_{j,0}=c_{j-1,M_{j-1}}$ is used. We also assume $c_{j,M_j}=c_{2J-j,M_{2J-j}}$. These conditions are needed to make the algorithm meaningful.  In (\ref{eq.general.hybrid.relax}), the operators $A_s^*,S^*$ and $f^*$ can be absorbed by the second term. Here we write them down explicitly so that the operators in (\ref{eq.general.hybrid.relax}) have the same form as the decomposed control variables in Section \ref{sec.FullAlg}. We will show that Algorithm \ref{alg.V.full} is a special case of the general hybrid splitting scheme discussed in this section.
	
	The number of operations in (\ref{eq.general.hybrid.relax}) is about $2J-1$ times of that in (\ref{eq.general.hybrid}).
	A simple way to solve (\ref{eq.general.hybrid.relax}) is to apply Algorithm \ref{alg.hybrid} for $(2J-1)$ times. In many numerical methods for PDEs, relaxation steps may make the method more stable. We will propose a new hybrid splitting scheme that incorporates relaxation steps. Our general idea is to split all operators into $2J$ sequential parts according to the index $j$. For the first $2J-1$ part, the $j$-th part consists of $M_j$ sequential substeps. Algorithm \ref{alg.hybrid} will be used $2J-1$ times for these parts. The $2J$-th part only has one substep. There are infinitely many ways to conduct the relaxations. In this section, we discuss one way as an example. Other ways can be implemented similarly. In our algorithm, for $j=J+1,...,2J-1$, we will pass the intermediate variable from part $2J-j$ to part $j$. We simply use averaging for these relaxations, which uses the condition  $c_{j,M_j}=c_{2J-j,m_{2J-j}}$. The new scheme is summarized in Algorithm \ref{alg.hybrid.general}.
	
	\begin{algorithm}
		\caption{A general hybrid splitting algorithm}\label{alg.hybrid.general}
		\begin{algorithmic}
			\STATE {\bf Data:} The solution $u^n$ at time $t^n$.
			\STATE {\bf Result:} The computed solution $u^{n+1}$ at time step $t^{n+1}$.
			
			{\bf Set} $d_{1,1} = 1, u_1^n=u^n$. \\
			\FOR {$j = 1,..., J$}
			\STATE Set $u^{n,j,0}=u^{n,j-1,M_{j-1}}$, $u^{n,j,0}_k=u^{n,j-1,M_{j-1}}_k$ for $k=1,...,c_{j-1,M_{j-1}}$.\\
			\FOR {$m=1,...,M_j$}
			\FOR {$k=1,...,c_{j,m}$}
			\STATE Compute $u^{n,j,m}_k$ by solving
			\begin{align}
				\frac {u^{n,j,m}_k - u^{n,j,m-1}} {2^{j-1}c_{j,m} \Delta t } =   
				-\sum_{s=1}^{d_{j,m}} 
				A_{k,s}^{j,m}( t^n;u_s^{n,j,m-1})  - S_k^m(t^{n+1};u^{n,j,m})-f_k^{j,m}(t^n).
				\label{eq.hybrid.k1}
			\end{align}
			
			\ENDFOR
			
			Compute $u^{n,j,m}$ as
			\begin{equation}
				u^{n,j,m} = \frac 1 {c_{j,m}} \sum_{k=1}^{c_{j,m}} u_k^{n,j,m}. 
				\label{eq.hybrid.ave1}
			\end{equation}
			\ENDFOR
			
			\ENDFOR
			
			{\bf Set} $\bar{u}^{n,J,M_J}= u^{n,J,M_J}$ and $ \bar{u}_k^{n,J,M_J} = u_k^{n,J,M_J}$ for $k =1, 2, ..., c_{J,M_J}$.\\
			\FOR {$j = J+1,  \cdots, 2J-1$}
			\STATE Set $u^{n,j,0}=\bar{u}^{n,j-1,M_{j-1}}$ and $u_k^{n,j,0}=\bar{u}_k^{n,j-1,M_{j-1}}$, $k=1,...,c_{j-1,M_{j-1}}$. \\
			\FOR {$m=1,...,M_j$}
			\FOR {$k=1,2,\cdots c_{j,m}$}
			\STATE Compute $u_k^{n,j,m}$ by solving 
			\begin{align}
				\frac {u^{n,j,m}_k - u^{n,j,m-1}} {2^{j}c_{j,m} \Delta t } =   
				-\sum_{s=1}^{d_{j,m}} 
				A_{k,s}^{j,m}(t^n; u_s^{n,j,m-1})  - S_k^m(t^{n+1};u^{n,j,m})-f_k^{j,m}(t^n).
				\label{eq.hybrid.k2}
			\end{align}			
			\ENDFOR
			
			Compute $u^{n,j,m}$ as
			\begin{equation}
				u^{n,j,m} = \frac 1 {c_{j,m}} \sum_{k=1}^{c_{j,m}} u_k^{n,j,m}. 
				\label{eq.hybrid.ave2}
			\end{equation}
			
			\ENDFOR
			
			Compute $\bar{u}^{n,j,M_j}_k, \bar{u}^{n,j,M_j}$ as 
			\begin{align} 
				\bar{u}^{n,j,M_j}_k=\frac{1}{2} u^{n,j,M_j}_k+ \frac{1}{2} u^{n,2J-j,M_{2J-j}}_k, \quad
				\bar{u}^{n,j,M_j}=\frac{1}{c_{j,M_j}} \sum_{k=1}^{c_{j,M_j}} \bar{u}_k^{n,j,L_j}
				\label{eq.hybrid.relax}
			\end{align}
			

			\ENDFOR
			
			Compute $u^{n+1}$ by solving
			\begin{align}
				\frac {u^{n+1} - \bar{u}^{n,2J-1,M_{2J-1}}} {\Delta t } =   
				- \sum_{s=1}^{d_{2J}} 
				A_{s}^{*}(t^n;u_s^{n,2J-1,M_{2J-1}})   -   S^*(t^{n+1};u^{n+1})-f^*(t^n) .
				\label{eq.hybrid.final}
			\end{align}
		\end{algorithmic}
	\end{algorithm}
	
	The following theorem gives an error estimation of Algorithm \ref{alg.hybrid.general} when  $S_k^{j,m}$'s and $S^*$ are linear operators.
	
	\begin{theorem}\label{thm.full}
		Assume $S_{k}^{j,m}$'s and $S^*$ are linear operators, $A_{k,s}^{j,m}, A_s^*, S_k^{j,m},  S^*$ are Lipschitz in $\xb, t$, and are symmetric positive definite with respect to $u$. 
		From an initial condition $u(0)=u_0$, we use Algorithm \ref{alg.hybrid.general} to solve (\ref{eq.general.hybrid.relax}) until time $t=T$ with time step $\Delta t$ so that $T=N\Delta t$ for some integer $N>0$. Denote the numerical solution at $t^n$ by $u^n$. We have
		\begin{align}
			\|u^{n+1}-u(t^{n+1})\|_{\infty}=O(\Delta t)
		\end{align}
		for any $0\leq n\leq N$.
	\end{theorem}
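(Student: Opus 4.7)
The plan is to reduce the analysis of Algorithm \ref{alg.hybrid.general} to repeated applications of Theorem \ref{thm.hybrid}. I view the update from $u^n$ to $u^{n+1}$ as a concatenation of $2J$ computations: the $J$ left-branch phases indexed by $j=1,\ldots,J$, the $J-1$ right-branch phases indexed by $j=J+1,\ldots,2J-1$ (each followed by the relaxation \eqref{eq.hybrid.relax}), and the single final substep \eqref{eq.hybrid.final}. Each phase processes only the operators associated with one value of $j$ in \eqref{eq.general.hybrid.relax}, and, after absorbing the factor $2^{j-1}$ (resp.\ $2^j$) in \eqref{eq.hybrid.k1} (resp.\ \eqref{eq.hybrid.k2}) into a rescaled local time step, is exactly of the form handled by Algorithm \ref{alg.hybrid}. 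Therefore Theorem \ref{thm.hybrid} (or more precisely its one-step local-truncation-error version) applies to each phase and yields a local error of order $O((\Delta t)^2)$ between the numerical phase output and the exact time-$\tau_j$ flow of the subsystem associated with level $j$, where $\tau_j$ is the effective time advance of that phase.

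Next I would handle the relaxation step \eqref{eq.hybrid.relax}, which is the only new ingredient beyond Theorem \ref{thm.hybrid}. By the phase estimate just described, both $u^{n,j,M_j}_k$ and $u^{n,2J-j,M_{2J-j}}_k$ are within $O(\Delta t)$ of the exact continuous solution of \eqref{eq.general.hybrid.relax} at the appropriate intermediate times inside $[t^n,t^{n+1}]$. Because the exact solution is Lipschitz in $t$ (the right-hand side is Lipschitz under the hypotheses) and because those two intermediate times differ by $O(\Delta t)$, the continuous values differ by $O(\Delta t)$ as well. The convex combination $\bar u^{n,j,M_j}_k = \tfrac12 u^{n,j,M_j}_k + \tfrac12 u^{n,2J-j,M_{2J-j}}_k$ is therefore still within $O(\Delta t)$ of the continuous solution at the time the next phase treats as its starting point, so the relaxation preserves first-order consistency. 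The final backward-Euler substep \eqref{eq.hybrid.final} is a standard linearly-implicit step whose local error is $O((\Delta t)^2)$ by an elementary Taylor expansion argument identical to the one at the heart of Theorem \ref{thm.hybrid}. Summing the $2J-1$ phase contributions, the $J-1$ relaxation contributions and the final-step contribution produces a one-step consistency error of order $O((\Delta t)^2)$.

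To pass from one-step consistency to the global bound $\|u^{n+1}-u(t^{n+1})\|_\infty = O(\Delta t)$, I would invoke discrete stability and a Gr\"onwall-type argument. The symmetric positive definiteness assumption on every $S^{j,m}_k$ and on $S^*$ guarantees that each resolvent $(I+2^{j-1}c_{j,m}\Delta t\, S^{j,m}_k)^{-1}$ is a contraction in the appropriate norm; combined with the Lipschitz continuity of the explicitly treated operators $A^{j,m}_{k,s}$ and $A^*_s$, this yields a one-step perturbation estimate of the form $\|u^{n+1}-v^{n+1}\|_\infty \leq (1+C\Delta t)\|u^n-v^n\|_\infty$ for any two numerical trajectories, provided $\Delta t$ is small enough. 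Applying this stability to the accumulated consistency errors via discrete Gr\"onwall gives $\|u^{N}-u(T)\|_\infty \leq e^{CT}\cdot N\cdot O((\Delta t)^2) = O(\Delta t)$, which is the claim.

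The main obstacle will be the bookkeeping in the second paragraph: one must carefully identify which intermediate continuous-time value each of $u^{n,j,M_j}_k$ and $u^{n,2J-j,M_{2J-j}}_k$ is approximating, and then verify that averaging them does not degrade the order of accuracy even though the two iterates are produced by different phases of the splitting. A secondary subtlety is the tacit assumption, in rescaling time by $2^{j-1}$, that Theorem \ref{thm.hybrid} is uniform with respect to that rescaling; this is true because the constants in its proof depend only on the Lipschitz and SPD moduli of the operators, but it should be stated explicitly before invoking the theorem phase by phase.
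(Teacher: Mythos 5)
Your overall architecture (phase-by-phase local truncation error, then stability plus a discrete Gr\"onwall argument) is reasonable, and the final step matches the paper's, but your treatment of the relaxation step \eqref{eq.hybrid.relax} has a genuine gap that breaks the argument. You argue that since $u^{n,j,M_j}_k$ and $u^{n,2J-j,M_{2J-j}}_k$ are each within $O(\Delta t)$ of the continuous solution, their average is too, and hence ``the relaxation preserves first-order consistency.'' But $O(\Delta t)$ closeness of the state is only zeroth-order consistency of the increment: an $O(\Delta t)$ perturbation injected once per time step accumulates over $N=T/\Delta t$ steps to $O(1)$, not $O(\Delta t)$. To get the global first-order bound you need the \emph{one-step} error to be $O(\Delta t^2)$, and the relaxation genuinely changes the state by $\tfrac12\bigl(u^{n,2J-j,M_{2J-j}}_k-u^{n,j,M_j}_k\bigr)=O(\Delta t)$, i.e.\ it halves the $O(\Delta t)$ operator contributions that the right-branch iterate has accumulated but the left-branch iterate has not (and vice versa). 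Whether the scheme is consistent with \eqref{eq.general.hybrid.relax} --- rather than with a different equation in which each level-$j$ operator carries a spurious weight --- therefore depends entirely on the interplay between these successive halvings and the factors $2^{j-1}$ and $2^{j}$ placed in the denominators of \eqref{eq.hybrid.k1} and \eqref{eq.hybrid.k2}. Your proposal never verifies this cancellation; indeed, your argument would go through unchanged if those factors were deleted, in which case the scheme converges to the wrong equation and the theorem is false.

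This is precisely where the paper's proof spends all of its effort: it does not invoke Theorem \ref{thm.hybrid} as a black box per phase, but instead carries out an explicit induction on the grid level $j^*$, computing the first-order Taylor expansion of every intermediate quantity $\bar u^{n,j^*,M_{j^*}}_k$ and $\bar u^{n,j^*,M_{j^*}}$ in terms of $u^n$. The induction shows that the left-branch contribution at level $j$, which initially enters with weight $2^{j-1}\Delta t$, is halved exactly $j-1$ times by the subsequent relaxations, so that in $\bar u^{n,2J-1,M_{2J-1}}$ every operator appears with coefficient exactly $\Delta t$, matching the expansion of the exact propagator $\exp\bigl(-\int_{t^n}^{t^{n+1}}Z(\tau)\,d\tau\bigr)$ up to $O(\Delta t^2)$. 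To repair your proof you would need to replace the qualitative ``within $O(\Delta t)$'' statement about the relaxation with this exact first-order bookkeeping (or an equivalent closed-form accounting of the weights), after which your stability and Gr\"onwall step is fine. A secondary, smaller imprecision: the intermediate iterates are not approximations of the exact solution of \eqref{eq.general.hybrid.relax} at staggered intermediate times in the Marchuk--Yanenko sense, since the parallel branches all start from the same averaged state; the comparison of ``two intermediate times differing by $O(\Delta t)$'' is not the right picture of what the relaxation is averaging.
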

	
	Theorem \ref{thm.full} is proved in Appendix \ref{sec.thm.full.proof} which shows that under certain conditions, Algorithm \ref{alg.hybrid.general} is a first order numerical scheme in solving the control problem (\ref{eq.general.hybrid.relax}).
	
	\section{Proof of theorems}
	\subsection{Proof of Theorem \ref{thm.hybrid}}
	\label{sec.thm.hybrid.proof}
	\begin{proof}[Proof of Theorem \ref{thm.hybrid}]
		In this proof, we focus on the case $d_m=c_{m-1}$. Other cases can be proved similarly. Due to linearity, 
		we denote $A_{k,s}^m(t^n;u), S_k^m(t^n;u)$ by $A_{k,s}^m(t^n)u, S_k^m(t^n) u$.
		Denote $\widetilde{u}^{n+1}$  the solution of (\ref{eq.general.hybrid}) at $t=t^{n+1}$ using $u^n$ as initial condition at $t=t^n$. Denote 
		\begin{align}
			W(t)u=\sum_{m=1}^M \sum_{k=1}^{c_m}\left( \sum_{s=1}^{c_{m-1}} A_{k,s}^m(t) u + S_{k}^m(t) u \right)
		\end{align}
		and $\cI$ as the identity operator (matrix).
        We further denote 
        \begin{align}
            f(t)=\sum_{m=1}^M\sum_{k=1}^{c_m} f_k^m(\xb,t).
            \label{eq.hybrid.f}
        \end{align}
  
		We have
		\begin{align}
			\widetilde{u}^{n+1}=&\exp\left(\int_{t^n}^{t^{n+1}}- W(\tau)d\tau \right)u^n \nonumber \\
			&-\exp\left(\int_{t^n}^{t^{n+1}} -W(\tau)d\tau \right)\int_{t^n}^{t^{n+1}} \exp\left(\int_{t^n}^{\tau} W(\theta)d\theta \right)f(\tau) d\tau 
			\label{eq.hybrid.tildeu}
		\end{align}
		Using Taylor expansion, we have
		\begin{align}
			&\int_{t^n}^{t^{n+1}} \exp\left(\int_{t^n}^{\tau} W(\theta)d\theta \right)f(\tau) d\tau \nonumber\\
			=& \Delta t \exp\left(\int_{t^n}^{t^{n}} W(\theta)d\theta \right)f(t^n)+O(\Delta t^2) \nonumber\\
			=& \Delta t f(t^n)+O(\Delta t^2),
			\label{eq.hybrid.1}
		\end{align}
		and 
		\begin{align}
			\exp\left(\int_{t^n}^{t^{n+1}} -W(t)d\tau \right)= &1-\int_{t^n}^{t^{n+1}} W(t)d\tau +O(\Delta t^2) \nonumber\\
			=&1-\Delta t \widetilde{W}(t^n)+O(\Delta t^2)
			\label{eq.hybrid.2}
		\end{align}
		with 
		\begin{align}
			\widetilde{W}(t^n)=\sum_{m=1}^M \sum_{k=1}^{c_m}\left( \sum_{s=1}^{c_{m-1}}A_{k,s}^m(t^n) + S_{k}^m(t^{n+1})  \right).
   \label{eq.hybrid.wtilde}
		\end{align}
		Substituting (\ref{eq.hybrid.1}) and (\ref{eq.hybrid.2}) into (\ref{eq.hybrid.tildeu}) gives rise to
		\begin{align}
			\widetilde{u}^{n+1}=&u^n-\Delta t \widetilde{W}(t^n)u^n-\Delta t f(t^n)+O(\Delta t^2).
			\label{eq.hybrid.tilde}
		\end{align}

		
		We next focus on $u^{n+1}$. We show that for any $1\leq m^*\leq M$, we have
		\begin{align}
			u^{n+m^*/M}_{k^*}=& u^n-\Delta t \Bigg( \sum_{m=1}^{m^*-1} \sum_{k=1}^{c_{m}}\left( \left( \sum_{s=1}^{c_{m-1}} A^m_{k,s}(t^n)+S_k^m(t^{n+1})\right) u^n+f_k^m (t^{n})\right) \nonumber\\
			&\quad + c_{m^*}\left(\sum_{s=1}^{c_{m^*-1}} \left(A^m_{k^*,s}(t^n)+S_{k^*}^m(t^{n+1}) \right)u^n+f_{k^*}^m(t^{n})\right) \Bigg)+ O(\Delta t^2),
			\label{eq.hybrid.proof.induction1}\\
			u^{n+m^*/M}=& u^n-\Delta t \sum_{m=1}^{m^*} \sum_{k=1}^{c_{m}}\left( \left( \sum_{s=1}^{c_{m-1}} A^m_{k,s}(t^n)+S_k^m(t^{n+1})\right) u^n+f_k^m(t^{n}) \right) + O(\Delta t^2),
			\label{eq.hybrid.proof.induction2}
		\end{align}
		for any $1\leq k^* \leq c_{m^*}$. We will prove (\ref{eq.hybrid.proof.induction1})--(\ref{eq.hybrid.proof.induction2}) by mathematical induction. For $m^*=1$, we have $c_0=1$.
		According to (\ref{eq.hybrid.k}) and (\ref{eq.hybrid.ave}), we have
		\begin{align}
			u^{n+1/M}_k=&(\cI+\Delta t c_1S_k^m(t^{n+1}))^{-1} \left(u^n- \Delta t c_1  A^1_{k,1}(t^n) u^n-\Delta t c_1f_k^1(t^{n})\right) \nonumber\\
			=& (\cI-\Delta t c_1 S_k^m(t^{n+1}))\left(u^n- \Delta t c_1 A^1_{k,1}(t^{n}) u^n-\Delta t c_1 f_k^1(t^{n})\right) +O(\Delta t^2) \nonumber\\
			=&u^n- \Delta t c_1 \left( A^1_{k,1}(t^{n})+S_k^m(t^{n+1})\right)  u^n-\Delta t c_1 f_k^1(t^{n}) +O(\Delta t^2),
			\label{eq.hybrid.proof.induction.1.1}
		\end{align}
		and
		\begin{align}
			u^{n+1/M}=&\frac{1}{c_1} \sum_{k=1}^{c_1} \left( u^n- \Delta t c_1 \left( A^1_{k,1}(t^{n})+S_k^m(t^{n+1})\right) u^n-\Delta t c_1 f_k^1(t^{n}) \right) +O(\Delta t^2) \nonumber\\
			=& u^n-\Delta t \sum_{k=1}^{c_1} \left( \left( A^1_{k,1}(t^{n})+S_k^1(t^{n+1})\right) u^n+f_k^1(t^{n}) \right) + O(\Delta t^2).
			\label{eq.hybrid.proof.induction.1.2}
		\end{align}
		Thus equation (\ref{eq.hybrid.proof.induction1})--(\ref{eq.hybrid.proof.induction2}) hold. 
		
		Assume (\ref{eq.hybrid.proof.induction1})--(\ref{eq.hybrid.proof.induction2}) hold for $m^*=m_1<M$, i.e.,
		\begin{align}
			u^{n+m_1/M}_{k^*}=& u^n-\Delta t \Bigg( \sum_{m=1}^{m_1-1} \sum_{k=1}^{c_{m}}\left( \left( \sum_{s=1}^{c_{m-1}} A^m_{k,s}(t^{n})+S_k^m(t^{n+1})\right) u^n+f_k^m(t^{n}) \right) \nonumber\\
			&\quad + c_{m_1}\left( \sum_{s=1}^{c_{m_1-1}} \left(A^m_{k^*,s}(t^{n})+S_{k^*}^m(t^{n+1}) \right)u^n+f_{k^*}^m (t^{n})\right)\Bigg)+ O(\Delta t^2),
			\label{eq.hybrid.proof.induction.2.1}\\
			u^{n+m_1/M}=& u^n-\Delta t \sum_{m=1}^{m_1} \sum_{k=1}^{c_{m}}\left( \left( \sum_{s=1}^{c_{m-1}} A^m_{k,s}(t^{n})+S_k^m(t^{n+1})\right) u^n+f_k^m(t^{n}) \right) + O(\Delta t^2).
			\label{eq.hybrid.proof.induction.2.2}
		\end{align}
		for any  $1\leq k^* \leq c_{m_1}$. 	When $m^*=m_1+1$, we have
		\begin{align}
			&u^{n+(m_1+1)/M}_{k^*} \nonumber\\
			=&\left(\cI+\Delta t c_{m_1+1}S_{k^*}^{m_1+1}(t^{n+1})\right)^{-1} \Bigg(u^{n+m_1/M}- \Delta t c_{m_1+1}  \sum_{s=1}^{m_1} A^{m_1+1}_{k^*,s} (t^{n})u^{n+m_1/M}_s \nonumber\\
			&\hspace{8cm} -\Delta t c_{m_1+1} f_{k^*}^{m_1+1}(t^{n})\Bigg) \nonumber\\
			=& \left(\cI-\Delta t c_{m_1+1} S_{k^*}^{m_1+1}(t^{n+1})\right)\Bigg(u^{n+m_1/M}- \Delta t c_{m_1+1} \sum_{s=1}^{m_1} A^{m_1+1}_{k^*,s}(t^{n})u^{n+m_1/M}_s \nonumber\\
			&\hspace{8cm}-\Delta t c_{m_1+1} f_{k^*}^{m_1+1}(t^{n})\Bigg) +O(\Delta t^2) \nonumber\\
			=&u^{n+m_1/M}- \Delta t c_{m_1+1} \sum_{s=1}^{c_{m_1}} A^{m_1+1}_{k^*,s} (t^{n})u^{n+m_1/M}_s- \Delta t c_{m_1+1} S_{k^*}^{m_1+1}(t^{n+1})u^{n+m_1/M} \nonumber\\
			&\hspace{8cm}-\Delta t c_{m_1+1} f_{k^*}^{m_1+1}(t^{n}) +O(\Delta t^2).
			\label{eq.hybrid.proof.induction.3}
		\end{align}
		Substituting (\ref{eq.hybrid.proof.induction.2.1})--(\ref{eq.hybrid.proof.induction.2.2}) into (\ref{eq.hybrid.proof.induction.3}) gives rise to
		\begin{align}
			u^{n+(m_1+1)/M}_{k^*}=&u^n-\Delta t \sum_{m=1}^{m_1} \sum_{k=1}^{c_{m}}\left( \left( \sum_{s=1}^{c_{m-1}} A^m_{k,s}(t^{n})+S_k^m(t^{n+1})\right)(t^{n+1}) u^n+f_k^m(t^n) \right) \nonumber \\
			& - \Delta t c_{m_1+1} \sum_{s=1}^{c_{m_1}} A^{m_1+1}_{k^*,s} (t^{n})u^{n+m_1/M}_s - \Delta t c_{m_1+1} S_{k^*}^{m_1+1}(t^{n+1})u^n-\Delta t c_{m_1+1} f_{k^*}^{m_1+1}(t^{n}).
			\label{eq.hybrid.proof.induction.4.1}
		\end{align}
		According to (\ref{eq.hybrid.ave}), we have
		\begin{align}
			u^{n+(m_1+1)/M}=&u^n-\Delta t \sum_{m=1}^{m_1} \sum_{k=1}^{c_{m}}\left( \left( \sum_{s=1}^{c_{m-1}} A^m_{k,s}(t^{n})+S_k^m\right)(t^{n+1}) u^n+f_k^m (t^{n})\right) \nonumber\\
			& - \Delta t \sum_{k=1}^{c_{m_1+1}} \Bigg(\sum_{s=1}^{c_{m_1}} A^{m_1+1}_{k,s}(t^{n}) u^{n+m_1/M}_s -   S_k^{m_1+1} (t^{n+1})u^n \nonumber\\
			&\hspace{5cm} -  f_k^{m_1+1}(t^{n+1})\Bigg) + O(\Delta t^2) \nonumber\\
			=& u^n-\Delta t \sum_{m=1}^{m_1+1} \sum_{k=1}^{c_{m+1}}\left( \left( \sum_{s=1}^{c_{m}} A^m_{k,s}(t^{n})+S_k^m(t^{n+1})\right) u^n+f_k^m (t^{n})\right) + O(\Delta t^2).
			\label{eq.hybrid.proof.induction.4.2}
		\end{align}
		Therefore, equation (\ref{eq.hybrid.proof.induction1})--(\ref{eq.hybrid.proof.induction2}) holds for $m=m_1+1$. Combining (\ref{eq.hybrid.proof.induction.1.1})--(\ref{eq.hybrid.proof.induction.1.2}) and (\ref{eq.hybrid.proof.induction.4.1})--(\ref{eq.hybrid.proof.induction.4.2}), we have that (\ref{eq.hybrid.proof.induction1})--(\ref{eq.hybrid.proof.induction2}) holds for any $1\leq m\leq M$.
		
		Setting $m=M$ gives rise to
		\begin{align}
			u^{n+1}=& u^n-\Delta t \sum_{m=1}^{M} \sum_{k=1}^{c_{m}}\left( \left( \sum_{s=1}^{c_{m-1}} A^m_{k,s}(t^{n})+S_k^m(t^{n+1})\right) u^n+f_k^m (t^{n})\right) + O(\Delta t^2) \nonumber\\
			=&u^n-\Delta t \widetilde{W}(t^n)u^n -\Delta t\sum_{m=1}^{M} \sum_{k=1}^{c_{m}}f_k^m(t^{n})+ O(\Delta t^2),
			\label{eq.hybrid.proof.induction.5}
		\end{align}
  where $\widetilde{W}(t^n)$ is defined in (\ref{eq.hybrid.wtilde}).
  
		Comparing (\ref{eq.hybrid.proof.induction.5}) with (\ref{eq.hybrid.tilde}), we have the local error
		\begin{align}
			u^{n+1}-\tilde{u}^{n+1}=O(\Delta t^2).
		\end{align}
		Therefore, the global error is $O(\Delta t)$.
	\end{proof}
	
	\subsection{Proof of Theorem \ref{thm.full}}
	\label{sec.thm.full.proof}
	\begin{proof}[Proof of Theorem \ref{thm.full}]
		We consider the case $d_{j,m}=c_{j,m-1}$ and $d_{2J}=c_{2J-1,M_{2j-1}}$. Other cases can be proved similarly. We denote $A_{k,s}^{j,m}(t;u), S_k^{j,m}(u)$ by $A_{k,s}^{j,m}(t)u, S_{k}^{j,m}u$.
		
		To simplify the notation, we denote 
		\begin{align*}
			&Z_{j}(t)=\sum_{m=1}^{M_j}\sum_{k=1}^{c_{j,m}} \left(\sum_{s=1}^{c_{j,m-1}}  A_{k,s}^{j,m}(t)+S_{k}^{j,m}(t)\right),\ Z_*(t)=\sum_{s=1}^{c_{2J-1,M_{2J-1}}} A^*_s(t)+S^*(t),\\
			&f_{j}(t)=\sum_{m=1}^{M_j}\sum_{k=1}^{c_{j,m}} f_k^{j,m}(t),\\
			&Z(t)=\sum_{j=1}^{2J-1} Z_{j}(t) +Z^*(t), \ f(t)=\sum_{j=1}^{2J-1} f_{j}(t)+f^*(t).
		\end{align*}
		and
		\begin{align}
			&\widetilde{Z}_{j}(t^n)=\sum_{m=1}^{M_j}\sum_{k=1}^{c_{j,m}} \left(\sum_{s=1}^{c_{j,m-1}}  A_{k,s}^{j,m}(t^n)+S_{k}^{j,m}(t^{n+1})\right),\ \widetilde{Z}^*(t^n)=\sum_{s=1}^{c_{2J-1,M_{2J-1}}} A^*_s(t^n)+S^*(t^{n+1}),\nonumber\\
			&\widetilde{Z}(t^n)=\sum_{j=1}^{2J-1} \widetilde{Z}_{j}(t^n)+\widetilde{Z}^*(t^n), \ f(t^n)=\sum_{j=1}^{2J-1} f_{j}(t^n)+f^*(t^n).
   \label{eq.full.f}
		\end{align}
		
		Denote $\widetilde{u}^{n+1}$ the solution of (\ref{eq.general.hybrid.relax}) at $t=t^{n+1}$ using $u^n$ as initial condition at $t=t^n$.  We have
		\begin{align}
			\widetilde{u}^{n+1}=&\exp\left(\int_{t^n}^{t^{n+1}}- Z(\tau)d\tau \right)u^n \nonumber \\
			&-\exp\left(\int_{t^n}^{t^{n+1}} -Z(\tau)d\tau \right)\int_{t^n}^{t^{n+1}} \exp\left(\int_{t^n}^{\tau} Z(\theta)d\theta \right)f(\tau) d\tau 
			\label{eq.full.tilde}
		\end{align}
		Using Taylor expansion, we have
		\begin{align}
			&\int_{t^n}^{t^{n+1}} \exp\left(\int_{t^n}^{\tau} Z(\theta)d\theta \right)f(\tau) d\tau \nonumber\\
			=& \Delta t \exp\left(\int_{t^n}^{t^{n}} Z(\theta)d\theta \right)f(t^n)+O(\Delta t^2) \nonumber\\
			=& \Delta t f(t^n)+O(\Delta t^2),
			\label{eq.full.1}
		\end{align}
		and 
		\begin{align}
			\exp\left(\int_{t^n}^{t^{n+1}} -Z(t)d\tau \right)= &1-\int_{t^n}^{t^{n+1}} Z(t)d\tau +O(\Delta t^2) \nonumber\\
			=&1-\Delta t \widetilde{Z}(t^n)+O(\Delta t^2).
			\label{eq.full.2}
		\end{align}
		Substituting (\ref{eq.full.1}) and (\ref{eq.full.2}) into (\ref{eq.full.tilde}) gives rise to
		\begin{align}
			\widetilde{u}^{n+1}=&u^n-\Delta t \widetilde{Z}(t^n)u^n-\Delta t f(t^n)+O(\Delta t^2).
			\label{eq.tildeu}
		\end{align}
		
		We then focus on $u^{n+1}$. For any $j\leq J$ in Algorithm \ref{alg.hybrid.general}, it is a standard hybrid splitting. From the proof of Theorem \ref{thm.hybrid}, in particular equation (\ref{eq.hybrid.proof.induction1})--(\ref{eq.hybrid.proof.induction2}), for any $1\leq j^* \leq J, 1\leq k^*\leq c_{j^*,M_{j^*}}$, we have
		\begin{align}
			u^{n,j^*,M_{j^*}}_{k^*}=& u^n-\Delta t  \Bigg( \sum_{j=1}^{j^*-1} 2^{j-1}\left(\sum_{m=1}^{M_{j}} \sum_{k=1}^{c_{j,m}} \left(\sum_{s=1}^{c_{j,m-1}} A_{k,s}^{j,m}(t^n)  + S_k^{j,m}(t^{n+1})\right)u^n+ f_k^{j,m}(t^n)\right)  \nonumber \\
			&\quad +2^{j^*-1}\left(\sum_{m=1}^{M_{j^*}-1} \sum_{k=1}^{c_{j^*,m}} \left(\sum_{s=1}^{c_{j^*,m-1}} A_{k,s}^{j^*,m}(t^n)  + S_k^{j^*,m}(t^{n+1})\right)u^n + f_k^{j^*,m}(t^n)\right) \nonumber \\
			& \quad + 2^{j^*-1}c_{j^*,M_{j^*}}\left(\sum_{s=1}^{c_{j^*,M_{j^*}-1}} A_{k^*,s}^{j^*,M_{j^*}}(t^n)  + S_{k^*}^{j^*,M_{j^*}}(t^{n+1}) \right)u^n+f_{k^*}^{j^*,M_{j^*}}(t^n) \Bigg)+ O(\Delta t^2) \nonumber\\
			=&u^n-\Delta t  \Bigg( \sum_{j=1}^{j^*-1} 2^{j-1}\left(\widetilde{Z}_{1,j}u^n+ f_{j}(t^n)\right)  \nonumber \\
			&\quad -2^{j^*-1}\left(\sum_{m=1}^{M_{j^*}-1} \sum_{k=1}^{c_{j^*,m}} \left(\sum_{s=1}^{c_{j^*,m}} A_{k,s}^{j^*,m}(t^n)  + S_k^{j^*,m}(t^{n+1})\right)u^n + f_k^{j^*,m}(t^n)\right) \nonumber \\
			& \quad - 2^{j^*-1}c_{j^*,M_{j^*}}\left(\sum_{s=1}^{c_{j^*,M_{j^*}-1}} A_{k^*,s}^{j^*,M_{j^*}}(t^n)  + S_{k^*}^{j^*,M_{j^*}}(t^{n+1}) \right)u^n+f_{k^*}^{j^*,M_{j^*}}(t^n) \Bigg)+ O(\Delta t^2)
			\label{eq.full.proof.induction1} \\
			u^{n,j^*,M_{j^*}}=&u^n-\Delta t  \left( \sum_{j=1}^{j^*} 2^{j-1}\left( \sum_{l=1}^{M_{j}} \sum_{k=1}^{c_{j,m}} \left(\sum_{s=1}^{c_{j,m-1}} A_{k,s}^{j,l}(t^n)  + S_k^{j,l}(t^{n+1})\right)u^n \right)\right)  \nonumber \\
			& - \Delta t  \left( \sum_{j=1}^{j^*} 2^{j-1}\left(\sum_{m=1}^{M_{j}} \sum_{k=1}^{c_{j,m}} f_k^{j,m}(t^n) \right)\right) + O(\Delta t^2) \nonumber\\
			=& u^n-\Delta t  \left( \sum_{j=1}^{j^*} 2^{j-1}\left(\widetilde{Z}_{j}(t^n)u^n+ f_j(t^n)\right) \right)+ O(\Delta t^2).
		\end{align}
		
		Setting $j^*=J$ and from Algorithm \ref{alg.hybrid.general}, we have
		\begin{align}
			\bar{u}^{n,J,M_J}_{k^*}=u^{n,J,M_{J}}_{k^*}=& u^n-\Delta t  \Bigg( \sum_{j=1}^{J-1} 2^{j-1}\left(\widetilde{Z}_{j}(t^{n})u^n+ f_j(t^n)\right)  \nonumber \\
			&\quad -2^{J-1}\left(\sum_{m=1}^{M_{J}-1} \sum_{k=1}^{c_{J,m}} \left(\sum_{s=1}^{c_{J,m-1}} A_{k,s}^{J,m}(t^n)  + S_k^{J,m}(t^{n+1})\right)u^n + f_k^{J,m}(t^n)\right) \nonumber \\
			& \quad - 2^{J-1}c_{J,M_J}\left(\sum_{s=1}^{c_{J,M_j-1}} A_{k^*,s}^{J,M_{J}}(t^n)  + S_{k^*}^{J,M_{J}}(t^{n+1}) \right)u^n+f_{k}^{J,M_{J}}(t^n) \Bigg)+ O(\Delta t^2)
			\label{eq.full.induction2.J.1}\\
			\bar{u}^{n,J,M_J}=u^{n,J,M_J}=&u^n-\Delta t  \left( \sum_{j=1}^{J} 2^{j-1}\left(\widetilde{Z}_{j}(t^{n})u^n+ f_j(t^n)\right) \right)+ O(\Delta t^2).
			\label{eq.full.induction2.J.2}
		\end{align}
		For the simplicity of notation, we denote $\widetilde{j}^*=2J-j^*$.
		We next use mathematical induction to show that for any $J+1\leq j^* \leq  2J-1$ and $1\leq k^* \leq c_{j^*,M_{j^*}}$, we have
		\begin{align}
			\bar{u}^{n,j^*,M_{j^*}}_{k^*}=& u^n-\Delta t  \Bigg( \sum_{j=1}^{\widetilde{j}^*-1} 2^{j-1}\left(\widetilde{Z}_{j}(t^n)u^n+ f_{j}(t^n)\right) + \sum_{j=\widetilde{j}^*+1}^{J} 2^{\widetilde{j}^*-1}\left(\widetilde{Z}_{j}(t^n)u^n+ f_j(t^n)\right)\Bigg) \nonumber \\
			&- \Delta t 2^{\widetilde{j}^*-2}\Bigg(\widetilde{Z}_{\widetilde{j}^*}(t^n)u^n+f_{\widetilde{j}^*}(t^n)\Bigg) \nonumber\\
			&-\Delta t 2^{\widetilde{j}^*-2}\sum_{m=1}^{M_{\widetilde{j}^*}-1} \sum_{k=1}^{c_{\widetilde{j}^*,m}} \left(\left(\sum_{s=1}^{c_{\widetilde{j}^*,m-1}} A_{k,s}^{\widetilde{j}^*,m}(t^n)  + S_k^{\widetilde{j}^*,m}(t^{n+1})\right)u^n + f_k^{\widetilde{j}^*,m}(t^n)\right) \nonumber\\
			&- \Delta t 2^{\widetilde{j}^*-2}c_{\widetilde{j}^*,M_{\widetilde{j}^*}}\left(\left(\sum_{s=1}^{c_{\widetilde{j}^*,M_{\widetilde{j}^*}-1}} A_{k^*,s}^{\widetilde{j}^*,M_{\widetilde{j}^*}}(t^n)+ S_{k^*}^{\widetilde{j}^*,M_{J}}(t^{n+1}) \right)u^n+f_{k^*}^{\widetilde{j}^*,M_{\widetilde{j}^*}}(t^n)\right) \nonumber\\
			&-\Delta t 2^{\widetilde{j}^*-1} \Bigg(\left(\sum_{j=J+1}^{j^*-1} \left(\widetilde{Z}_{j}(t^{n})u^n+ f_j(t^n)\right) \right) \nonumber \\
			& +\left(\sum_{m=1}^{M_{j^*}-1} \sum_{k=1}^{c_{j^*,m}} \left(\sum_{s=1}^{c_{j^*,m-1}} A_{k,s}^{j^*,m}(t^n)  + S_k^{j^*,m}(t^{n+1})\right)u^n + f_k^{j^*,m}(t^n)\right) \nonumber \\
			&  + c_{j^*,M_{j^*}}\left(\left(\sum_{s=1}^{c_{j^*,M_{j^*}-1}} A_{k^*,s}^{J,M_{j^*}}(t^n)  + S_{k^*}^{j^*,M_{j^*}}(t^{n+1}) \right)u^n+f_{k^*}^{j^*,M_{j^*}}(t^n)\right) \Bigg)+ O(\Delta t^2),
			\label{eq.full.induction.1}\\
			\bar{u}^{n,j^*,M_{j^*}}=&u^n-\Delta t  \left( \sum_{j=1}^{\widetilde{j}^*-1} 2^{j-1}\left(\widetilde{Z}_{j}(t^{n})u^n+ f_j(t^n)\right) + \sum_{j=\widetilde{j}^*}^{J} 2^{\widetilde{j}^*-1}\left(\widetilde{Z}_{j}(t^{n})u^n+ f_j(t^n)\right) \right) \nonumber \\
			&-\Delta t 2^{\widetilde{j}^*-1} \Bigg(\sum_{j=J+1}^{j^*} \left(\widetilde{Z}_{j}(t^{n})u^n+ f_j(t^n)\right)  \Bigg)+ O(\Delta t^2).
			\label{eq.full.induction.2}
		\end{align}
		We first show that (\ref{eq.full.induction.1})--(\ref{eq.full.induction.2}) holds when $j^*=J+1$.
		Set $u^{n,J+1,0}= \bar{u}^{n,J,M_J}$ and $ u_k^{n,J+1,0} = \bar{u}_k^{n,J,M_J}$ for $k =1, 2, ..., c_{J,M_J}$. 
		For any $1\leq k^*\leq c_{J+1,1}$, we have
		\begin{align}
			&u^{n,J+1,1}_{k^*} \nonumber\\
			=&\left(\cI+\Delta t2^{J-1}c_{J+1,1}S_{k^*}^{J+1,1}(t^{n+1})\right)^{-1} \Bigg(u^{n,J+1,0} \nonumber\\
			&\hspace{3cm}-\Delta t 2^{J-1}c_{J+1,1} \left( \sum_{s=1}^{c_{J+1,0}} A_{k^*,s}^{J+1,1}(t^n)u^{n,J+1,0}_s +f_{k^*}^{J+1,1}(t^{n})\right)\Bigg) \nonumber\\
			=& \left(\cI-\Delta t2^{J-1}c_{J+1,1}  S_{k^*}^{J+1,1}(t^{n+1})\right) \Bigg(u^{n,J+1,0} \nonumber\\
			&\hspace{3cm} -\Delta t 2^{J-1}c_{J+1,1} \left( \sum_{s=1}^{c_{J+1,0}} A_{k^*,s}^{J+1,1}(t^n)u^{n,J+1,0}_s +f_{k^*}^{J+1,1}(t^{n})\right) \Bigg) +(\Delta t^2) \nonumber\\
			=&u^{n,J+1,0} -\Delta t 2^{J-1}c_{J+1,1} \Bigg(\sum_{s=1}^{c_{J+1,0}} A_{k^*,s}^{J+1,1}(t^n)u^{n,J+1,0}_s + S_{k^*}^{J+1,1}(t^{n+1})u^{n,J+1,0} \nonumber\\
			&\hspace{8cm}+ f_{k^*}^{J+1,1}(t^{n})\Bigg) + O(\Delta t^2) \nonumber\\
			=&u^n-\Delta t  \left( \sum_{j=1}^{J} 2^{j-1}\left(\widetilde{Z}_{j}(t^{n})u^n+ f_j(t^n)\right) \right) \nonumber\\
			&-\Delta t 2^{J-1}c_{J+1,1} \left( \left(\sum_{s=1}^{c_{J+1,0}} A_{k^*,s}^{J+1,1}(t^n) + S_{k^*}^{J+1,1}(t^{n+1})\right)u^n + f_{k^*}^{J+1,1}(t^{n})\right) + O(\Delta t^2)
		\end{align}
		and
		\begin{align}
			u^{n,J+1,1}=&\frac{1}{c_{J+1,1}} \sum_{k=1}^{c_{J+1,1}} u^{n,J+1,1}_k \nonumber\\
			=&u^n-\Delta t  \left( \sum_{j=1}^{J} 2^{j-1}\left(\widetilde{Z}_{j}(t^{n})u^n+ f_j(t^n)\right) \right) \nonumber\\
			&-\Delta t 2^{J-1}\sum_{k=1}^{c_{J+1,1}} \left( \left(\sum_{s=1}^{c_{J+1,0}} A_{k,s}^{J+1,1}(t^n) + S_k^{J+1,1}(t^{n+1})\right)u^n + f_k^{J+1,1}(t^{n})\right) + O(\Delta t^2).
		\end{align}
		Repeating the process, we can show that for any $1\leq m^* \leq M_{J+1}$, we have
		\begin{align}
			&u^{n,J+1,m^*}_{k^*} \nonumber\\
			=&u^n-\Delta t  \left( \sum_{j=1}^{J} 2^{j-1}\left(\widetilde{Z}_{j}(t^{n})u^n+ f_j(t^n)\right) \right) \nonumber\\
			&-\Delta t 2^{J-1} \sum_{m=1}^{m^*-1}\sum_{k=1}^{c_{J+1,m}}\left( \left(\sum_{s=1}^{c_{J+1,m-1}} A_{k,s}^{J+1,m}(t^n) + S_k^{J+1,m}(t^{n+1})\right)u^n + f_k^{J+1,m}(t^{n})\right) \nonumber\\
			&-\Delta t 2^{J-1}c_{J+1,m^*} \left( \left(\sum_{s=1}^{c_{J+1,m^*}} A_{k^*,s}^{J+1,m^*}(t^n) + S_{k^*}^{J+1,m^*}(t^{n+1})\right)u^n + f_{k^*}^{J+1,m^*}(t^{n})\right) + O(\Delta t^2).
			\label{eq.full.induction.L0.1}
		\end{align}
		and
		\begin{align}
			u^{n,J+1,m^*}=&\frac{1}{c_{J+1,m^*}} \sum_{k=1}^{c_{J+1,m^*}} u^{n,J+1,m^*} \nonumber\\
			=&u^n-\Delta t  \left( \sum_{j=1}^{J} 2^{j-1}\left(\widetilde{Z}_{j}(t^{n})u^n+ f_j(t^n)\right) \right) \nonumber\\
			&-\Delta t 2^{J-1} \sum_{m=1}^{m^*}\sum_{k=1}^{c_{J+1,m}}\Bigg( \left(\sum_{s=1}^{c_{J+1,m-1}} A_{k,s}^{J+1,m}(t^n) + S_k^{J+1,m}(t^{n+1})\right)u^n \nonumber\\
			& \hspace{5cm}+ f_k^{J+1,l}(t^{n})\Bigg) + O(\Delta t^2).
			\label{eq.full.induction.L0.2}
		\end{align}
		Set $m^*=M_{J+1}$ in (\ref{eq.full.induction.L0.1}) and (\ref{eq.full.induction.L0.2}). We get
		\begin{align}
			&u^{n,J+1,M_{J+1}}_{k^*} \nonumber\\
			=&u^n-\Delta t  \left( \sum_{j=1}^{J} 2^{j-1}\left(\widetilde{Z}_{j}(t^{n})u^n+ f_j(t^n)\right) \right) \nonumber\\
			&-\Delta t 2^{J-1} \sum_{m=1}^{M_{J+1}-1}\sum_{k=1}^{c_{J+1,m}}\left( \left(\sum_{s=1}^{c_{J+1,m-1}} A_{k,s}^{J+1,m}(t^n) + S_k^{J+1,m}(t^{n+1})\right)u^n + f_k^{J+1,m}(t^{n})\right) \nonumber\\
			&-\Delta t 2^{J-1}c_{J+1,M_{J+1}} \Bigg( \left(\sum_{s=1}^{c_{J+1,M_{J+1}-1}} A_{k^*,s}^{J+1,M_{J+1}}(t^n) + S_{k^*}^{J+1,M_{J+1}}(t^{n+1})\right)u^n \nonumber\\
			& \hspace{5cm}+ f_{k^*}^{J+1,M_{J+1}}(t^{n})\Bigg) + O(\Delta t^2).
			\label{eq.full.induction.j0.L.1}
		\end{align}
		and
		\begin{align}
			u^{n,J+1,M_{J+1}}	=&u^n-\Delta t  \left( \sum_{j=1}^{J} 2^{j-1}\left(\widetilde{Z}_{j}(t^{n})u^n+ f_j(t^n)\right) \right) \nonumber\\
			&-\Delta t 2^{J-1} \left(\widetilde{Z}_{J+1}(t^n)+ f_{J+1}(t^n)\right) + O(\Delta t^2).
			\label{eq.full.induction.j0.L.2}
		\end{align}
		For any $1\leq k^* \leq M_{J+1}$, we compute
		\begin{align}
			&\bar{u}^{n,J+1,M_{J+1}}_{k^*} \nonumber\\
			=&\frac{1}{2} u_{k^*}^{n,J+1,M_{J+1}}+ \frac{1}{2} u_{k^*}^{n,J-1,M_{J-1}} \nonumber\\
			=&u^n-\Delta t  \left( \sum_{j=1}^{J-2} 2^{j-1}\left(\widetilde{Z}_{j}(t^{n})u^n+ f_j(t^n)\right) +  2^{J-2}\left(\widetilde{Z}_{J}(t^{n})u^n+ f_J(t^n)\right)  \right) \nonumber\\
			&-\Delta t2^{J-3}\left(\widetilde{Z}_{J-1}(t^n)u^n+f_{J-1}(t^n)\right) \nonumber\\
			& -\Delta t 2^{J-3}\sum_{m=1}^{M_{J-1}-1} \sum_{k=1}^{c_{J-1,m}} \left(\left(\sum_{s=1}^{c_{J-1,m-1}} A_{k,s}^{J-1,m}(t^n)  + S_k^{J-1,m}(t^{n+1})\right)u^n + f_k^{J-1,m}(t^n)\right) \nonumber\\
			&-\Delta t 2^{J-3} c_{J-1,M_{J-1}}\left(\left(\sum_{s=1}^{c_{J-1,M_{J-1}-1}} A_{k^*,s}^{J-1,M_{J-1}}(t^n)+ S_{k^*}^{J-1,M_{J-1}} (t^{n+1})\right)u^n+f_{k^*}^{J-1,M_{J-1}}(t^n)\right) \nonumber \\
			&-\Delta t 2^{J-2} \sum_{m=1}^{M_{J+1}-1}\sum_{k=1}^{c_{J+1,m}}\left( \left(\sum_{s=1}^{c_{J+1,m-1}} A_{k,s}^{J+1,m}(t^n) + S_k^{J+1,m}(t^{n+1})\right)u^n + f_k^{J+1,m}(t^{n})\right) \nonumber\\
			&-\Delta t 2^{J-2}c_{J+1,M_{J+1}} \Bigg( \left(\sum_{s=1}^{c_{J+1,M_{J+1}-1}} A_{k^*,s}^{J+1,M_{J+1}}(t^n) + S_{k^*}^{J+1,M_{J+1}}(t^{n+1})\right)u^n \nonumber\\
			&\hspace{5cm} + f_{k^*}^{J+1,M_{J+1}}(t^{n})\Bigg) + O(\Delta t^2).
		\end{align}
		and
		\begin{align}
			\bar{u}^{n,J+1,M_{J+1}}=&\frac{1}{c_{J+1,M_{J+1}}} \sum_{k=1}^{c_{J+1,M_{J+1}}} \bar{u}_k^{n,J+1,M_{J+1}}  \nonumber\\
			=& u^n-\Delta t  \left( \sum_{j=1}^{J-2} 2^{j-1}\left(\widetilde{Z}_{j}(t^{n})u^n+ f_j(t^n)\right) + \sum_{j=J-1}^{J} 2^{J-2}\left(\widetilde{Z}_{j}(t^{n})u^n+ f_j(t^n)\right) \right) \nonumber \\
			&-\Delta t 2^{J-2} \left(\widetilde{Z}_{J+1}(t^{n})u^n+ f_{J+1}(t^n)\right) + O(\Delta t^2).
		\end{align}
		Therefore, (\ref{eq.full.induction.1})--(\ref{eq.full.induction.2}) hold for $j^*=J-1$. 
		
		Assume (\ref{eq.full.induction.1})--(\ref{eq.full.induction.2}) hold for $j^*=j_1-1\geq J+1$. Set $u^{n,j_1,0}= \bar{u}^{n,j_1-1,M_{j_1-1}}$ and $ u_k^{n,j_1,0} = \bar{u}_k^{n,j_1-1,M_{j_1-1}}$ for $k =1, 2, ..., c_{j_1-1,M_{j_1-1}}$. Denote $\widetilde{j}_1=2J-j_1$. For any $1\leq k^* \leq M_{j_1}$, we compute
		\begin{align}
			&u^{n,j_1,1}_{k^*} \nonumber\\
			=&\left(\cI+\Delta t2^{\widetilde{j}_1}c_{j_1,1}S_{k^*}^{j_1,1}(t^{n+1})\right)^{-1} \left(u^{n,j_1,0} +\Delta t 2^{\widetilde{j}_1}c_{j_1,1} \left( \sum_{s=1}^{c_{j_1,0}} A_{k^*,s}^{j_1,1}(t^n)u^{n,j_1,0}_s +f_{k^*}^{j_1,1}(t^{n})\right)\right) \nonumber\\
			=& \left(\cI-\Delta t2^{\widetilde{j}_1}c_{j_1,1} S_k^{j_1,1}(t^{n+1})\right) \left(u^{n,j_1,0} +\Delta t 2^{\widetilde{j}_1}c_{j_1,1} \left( \sum_{s=1}^{c_{j_1,0}} A_{k^*,s}^{j_1,1}(t^n)u^{n,j_1,0}_s +f_{k^*}^{j_1,1}(t^{n})\right) \right) +(\Delta t^2) \nonumber\\
			=&u^{n,j_1,0} -\Delta t 2^{\widetilde{j}_1}c_{j_1,1} \left( \left(\sum_{s=1}^{c_{j_1,0}} A_{k^*,s}^{j_1,1}(t^n) + S_{k^*}^{j_1,1}(t^{n+1})\right)u^{n,j_1,0}_s + f_{k^*}^{j_1,1}(t^{n})\right) + O(\Delta t^2) \nonumber\\
			=&u^n-\Delta t  \left( \sum_{j=1}^{\widetilde{j}_1} 2^{j-1}\left(\widetilde{Z}_{j}(t^{n})u^n+ f_j(t^n)\right) + \sum_{j=\widetilde{j}_1+1}^{J} 2^{\widetilde{j}_1}\left(\widetilde{Z}_{j}(t^{n})u^n+ f_j(t^n)\right) \right) \nonumber\\
			&-\Delta t 2^{\widetilde{j}_1} \Bigg(\sum_{j=J+1}^{j_1-1} \left(\widetilde{Z}_{j}(t^{n})u^n+ f_j(t^n)\right)  \Bigg) \nonumber\\
			&-\Delta t 2^{\widetilde{j}_1}c_{j_1,1} \left( \left(\sum_{s=1}^{c_{j_1,0}} A_{k^*,s}^{j_1,1}(t^n) + S_{k^*}^{j_1,1}(t^{n+1})\right)u^n + f_{k^*}^{j_1,1}(t^{n})\right) + O(\Delta t^2).
		\end{align}
		and
		\begin{align}
			u^{n,j_1,1}=&\frac{1}{c_{j_1,1}} \sum_{k=1}^{c_{j_1,1}} u_k^{n,j_1,1} \nonumber\\
			=&u^n-\Delta t  \left( \sum_{j=1}^{\widetilde{j}_1} 2^{j-1}\left(\widetilde{Z}_{j}(t^{n})u^n+ f_j(t^n)\right) + \sum_{j=\widetilde{j}_1+1}^{J} 2^{\widetilde{j}_1}\left(\widetilde{Z}_{j}(t^{n})u^n+ f_j(t^n)\right) \right) \nonumber\\
			&-\Delta t 2^{\widetilde{j}_1} \Bigg(\sum_{j=J+1}^{j_1-1} \left(\widetilde{Z}_{j}(t^{n})u^n+ f_j(t^n)\right)  \Bigg) \nonumber\\
			&-\Delta t 2^{\widetilde{j}_1} \sum_{k=1}^{c_{j_1,1}}\left( \left(\sum_{s=1}^{c_{j_1,1}} A_{k,s}^{j_1,1}(t^n) + S_k^{j_1,1}(t^{n+1})\right)u^n + f_k^{j_1,1}(t^{n})\right) + O(\Delta t^2).
		\end{align}
		Repeating the process, we can show that for any $1\leq m^* \leq L_{J-1}$, we have
		\begin{align}
			&u^{n,j_1,m^*}_{k^*} \nonumber\\
			=&u^n-\Delta t  \left( \sum_{j=1}^{\widetilde{j}_1} 2^{j-1}\left(\widetilde{Z}_{j}(t^{n})u^n+ f_j(t^n)\right) + \sum_{j=\widetilde{j}_1+1}^{J} 2^{\widetilde{j}_1}\left(\widetilde{Z}_{j}(t^{n})u^n+ f_j(t^n)\right) \right) \nonumber\\
			&-\Delta t 2^{\widetilde{j}_1} \Bigg(\sum_{j=J+1}^{j_1-1} \left(\widetilde{Z}_{j}(t^{n})u^n+ f_j(t^n)\right)  \Bigg) \nonumber\\
			&- \Delta t 2^{\widetilde{j}_1} \sum_{m=1}^{m^*-1}\sum_{k=1}^{c_{j_1,m}}\left( \left(\sum_{s=1}^{c_{j_1,m-1}} A_{k,s}^{j_1,m-1}(t^n) + S_k^{j_1,m-1}(t^{n+1})\right)u^n + f_k^{j_1,m-1}(t^{n})\right) \nonumber\\
			&-\Delta t 2^{\widetilde{j}_1}c_{j_1,m^*} \left( \left(\sum_{s=1}^{c_{j_1,m^*-1}} A_{k^*,s}^{j_1,m^*}(t^n) + S_{k^*}^{j_1,m^*}(t^{n+1})\right)u^n + f_{k^*}^{j_1,m^*}(t^{n})\right) + O(\Delta t^2)
			\label{eq.full.induction.L.1}
		\end{align}
		and
		\begin{align}
			u^{n,j_1,m^*}=&\frac{1}{c_{j_1,m^*}} \sum_{k=1}^{c_{j_1,m^*}} u^{n,j_1,m^*} \nonumber\\
			=&u^n-\Delta t  \left( \sum_{j=1}^{\widetilde{j}_1} 2^{j-1}\left(\widetilde{Z}_{j}(t^{n})u^n+ f_j(t^n)\right) + \sum_{j=\widetilde{j}_1+1}^{J} 2^{\widetilde{j}_1}\left(\widetilde{Z}_{j}(t^{n})u^n+ f_j(t^n)\right) \right) \nonumber \\
			&-\Delta t 2^{\widetilde{j}_1} \Bigg(\sum_{j=J-1}^{j_1-1} \left(\widetilde{Z}_{j}(t^{n})u^n+ f_j(t^n)\right)  \Bigg) \nonumber\\
			&- \Delta t 2^{\widetilde{j}_1} \sum_{m=1}^{m^*}\sum_{k=1}^{c_{j_1,m}}\left( \left(\sum_{s=1}^{c_{j_1,m-1}} A_{k,s}^{j_1,m}(t^n) + S_k^{j_1,m}(t^{n+1})\right)u^n + f_k^{j_1,m}(t^{n})\right)+ O(\Delta t^2)
			\label{eq.full.induction.L.2}
		\end{align}
		Set $m^*=M_{j_1}$ in (\ref{eq.full.induction.L.1})--(\ref{eq.full.induction.L.2}), we compute
		\begin{align}
			&\bar{u}^{n,j_1,M_{j_1}}_{k^*} \nonumber\\
			=&\frac{1}{2} u_{k^*}^{n,j_1,M_{j_1}}+ \frac{1}{2} u_{k^*}^{n,\widetilde{j}_1,M_{\widetilde{j}_1}} \nonumber\\
			=&u^n-\Delta t  \left( \sum_{j=1}^{\widetilde{j}_1-1} 2^{j-1}\left(\widetilde{Z}_{j}(t^{n})u^n+ f_j(t^n)\right) + \sum_{j=\widetilde{j}_1+1}^J 2^{\widetilde{j}_1-1}\left(\widetilde{Z}_{j}(t^{n})u^n+ f_{j}(t^n)\right)  \right) \nonumber\\
			&-\Delta t2^{\widetilde{j}_1-2}\left(\widetilde{Z}_{\widetilde{j}_1}(t^n)u^n+f_{\widetilde{j}_1}(t^n)\right) \nonumber\\
			& -\Delta t 2^{\widetilde{j}_1-2}\sum_{m=1}^{M_{\widetilde{j}_1}-1} \sum_{k=1}^{c_{\widetilde{j}_1,m}} \left(\left(\sum_{s=1}^{c_{\widetilde{j}_1,m-1}} A_{k,s}^{\widetilde{j}_1,m}(t^n)  + S_k^{\widetilde{j}_1,m}(t^{n+1})\right)u^n + f_k^{\widetilde{j}_1,m}(t^n)\right) \nonumber\\
			&-\Delta t 2^{\widetilde{j}_1-2} c_{\widetilde{j}_1,M_{\widetilde{j}_1}}\left(\left(\sum_{s=1}^{c_{\widetilde{j}_1,M_{\widetilde{j}_1}}} A_{k^*,s}^{\widetilde{j}_1,M_{\widetilde{j}_1}}(t^n)+ S_{k^*}^{\widetilde{j}_1,M_{\widetilde{j}_1}}(t^{n+1}) \right)u^n+f_{k^*}^{\widetilde{j}_1,M_{\widetilde{j}_1}}(t^n)\right) \nonumber \\
			&-\Delta t 2^{\widetilde{j}_1-1} \Bigg(\sum_{j=J+1}^{j_1-1} \left(\widetilde{Z}_{j}(t^{n})u^n+ f_j(t^n)\right)  \Bigg) \nonumber\\
			&-\Delta t 2^{\widetilde{j}_1-1} \sum_{m=1}^{M_{j_1}-1}\sum_{k=1}^{c_{j_1,m}}\left( \left(\sum_{s=1}^{c_{j_1,m-1}} A_{k,s}^{j_1,m}(t^n) + S_k^{j_1,m}(t^{n+1})\right)u^n + f_k^{j_1,m}(t^{n})\right) \nonumber\\
			&-\Delta t 2^{\widetilde{j}_1-1}c_{j_1,M_{j_1}} \left( \left(\sum_{s=1}^{c_{j_1,M_{j_1}-1}} A_{k^*,s}^{j_1,M_{j_1}}(t^n) + S_{k^*}^{j_1,M_{j_1}}(t^{n+1})\right)u^n + f_{k^*}^{j_1,M_{j_1}}(t^{n})\right) + O(\Delta t^2)
		\end{align}
		and
		\begin{align}
			\bar{u}^{n,j_1,M_{j_1}}=&\frac{1}{c_{j_1,M_{j_1}}} \sum_{k=1}^{c_{j_1,M_{j_1}}} \bar{u}_k^{n,j_1,M_{j_1}}  \nonumber\\
			=& u^n-\Delta t  \left( \sum_{j=1}^{\widetilde{j}_1-1} 2^{j-1}\left(\widetilde{Z}_{j}(t^{n})u^n+ f_j(t^n)\right) + \sum_{j=\widetilde{j}_1}^{J} 2^{\widetilde{j}_1-1}\left(\widetilde{Z}_{j}(t^{n})u^n+ f_j(t^n)\right) \right) \nonumber \\
			&-\Delta t 2^{\widetilde{j}_1-1} \sum_{j=J+1}^{j_1}\left(\widetilde{Z}_{j}(t^{n})u^n+ f_{j}(t^n)\right) + O(\Delta t^2).
		\end{align}
		Therefore (\ref{eq.full.induction.1})--(\ref{eq.full.induction.2}) hold for any $J+1\leq j^*\leq 2J-1$.
		
		Set $j^*=2J-1$ in (\ref{eq.full.induction.1})--(\ref{eq.full.induction.2}), we have
		\begin{align}
			&\bar{u}^{n,2J-1,M_{2J-1}}_{k^*} \nonumber\\
			=& u^n-\Delta t  \Bigg( \sum_{j=2}^{J} \left(\widetilde{Z}_{j}(t^{n})u^n+ f_j(t^n)\right)\Bigg) \nonumber \\
			-& \Delta t 2^{-1}\Bigg(\widetilde{Z}_{1}(t^n)u^n+f_{1}(t^n)\Bigg) \nonumber\\
			-&\Delta t 2^{-1}\sum_{m=1}^{M_{1}-1} \sum_{k=1}^{c_{1,m}} \left(\left(\sum_{s=1}^{c_{1,m-1}} A_{k,s}^{1,m}(t^n)  + S_k^{1,m}(t^{n+1})\right)u^n + f_k^{1,m}(t^n)\right) \nonumber\\
			-& \Delta t 2^{-1}c_{1,M_1}\left(\left(\sum_{s=1}^{c_{1,M_1-1}} A_{k^*,s}^{1,M_{1}}(t^n)+ S_{k^*}^{1,M_{1}}(t^{n+1}) \right)u^n+f_{k^*}^{1,M_{1}}(t^n)\right) \nonumber\\
			-&\Delta t \Bigg(\left(\sum_{j=J+1}^{2J-2} \left(\widetilde{Z}_{j}(t^{n})u^n+ f_j(t^n)\right) \right) \nonumber \\
			-&\left(\sum_{m=1}^{M_{2J-1}-1} \sum_{k=1}^{c_{2J-1,m}} \left(\sum_{s=1}^{c_{2J-1,m-1}} A_{k,s}^{2J-1,m}(t^n)  + S_k^{2J-1,m}(t^{n+1})\right)u^n + f_k^{2J-1,m}(t^n)\right) \nonumber \\
			-& c_{2J-1,M_{2J-1}}\left(\sum_{s=1}^{c_{2J-1,M_{2J-1}-1}} \left(A_{k^*,s}^{2J-1,M_{2J-1}}(t^n)  + S_{k^*}^{2J-1,M_{2J-1}}(t^{n+1}) \right)u^n+f_{k}^{2J-1,M_{2J-1}}(t^n)\right) \Bigg) \nonumber\\
			+& O(\Delta t^2),
			\label{eq.full.induction21.1}\\
			&\bar{u}^{n,2J-1,M_{2J-1}}=u^n-\Delta t    \sum_{j=1}^{J} \left(\widetilde{Z}_{j}(t^{n})u^n+ f_j(t^n)\right)  -\Delta t  \sum_{j=J+1}^{2J-1} \left(\widetilde{Z}_{j}(t^{n})u^n+ f_j(t^n)\right)  + O(\Delta t^2).
			\label{eq.full.induction21.2}
		\end{align}
		According to (\ref{eq.hybrid.final}), we have
		\begin{align}
			u^{n+1}=&\left( \cI+\Delta t S^*(t^{n+1}) \right)^{-1}\left(\bar{u}^{n,2J-1,M_{2J-1}}-\Delta t \left(\sum_{s=1}^{c_{2J-1,M_{2J-1}}} A_s^*(t^n)\bar{u}_k^{n,2J-1,M_{2J-1}}+f^*(t^n)\right)\right) \nonumber\\
			=& (\cI-\Delta t S^*(t^{n+1}))\Bigg(\bar{u}^{n,2J-1,M_{2J-1}}-\Delta t \Bigg(\sum_{s=1}^{c_{2J-1,M_{2J-1}}} A_s^*(t^n)\bar{u}_k^{n,2J-1,M_{2J-1}} \nonumber \\ 
			&\hspace{10cm} +f^*(t^n)\Bigg)\Bigg) +O(\Delta t^2) \nonumber\\
			=& \bar{u}^{n,2J-1,M_{2J-1}}-\Delta t S^*(t^{n+1})\bar{u}^{n,2J-1,M_{2J-1}}-\Delta t \Bigg(\sum_{s=1}^{c_{2J-1,M_{2J-1}}} A_s^*(t^n)\bar{u}_k^{n,2J-1,M_{2J-1}} \nonumber\\
			& \hspace{10cm} +f^*(t^n)\Bigg) +O(\Delta t^2) \nonumber\\
			=& u^n-\Delta t    \sum_{j=1}^{J} \left(\widetilde{Z}_{j}(t^{n})u^n+ f_j(t^n)\right)  -\Delta t  \sum_{j=J+1}^{2J-1} \left(\widetilde{Z}_{j}(t^{n})u^n+ f_j(t^n)\right) \nonumber\\
			&- \Delta t \left(\widetilde{Z}^*(t^n)u^n+ f^*(t^n)\right) + O(\Delta t^2) \nonumber\\
			=& u^n-\Delta t \widetilde{Z}(t^{n})u^n- \Delta tf(t^n)+O(\Delta t^2),
			\label{eq.Un.final}
		\end{align}
  where $f(t^n)$ is defined in (\ref{eq.full.f}).
  
		Putting (\ref{eq.Un.final}) and (\ref{eq.tildeu}) together, we have
		\begin{align}
			\|\tilde{u}^n-u^n\|_{\infty}=O(\Delta t^2).
		\end{align}
		The local error is of $O(\Delta t^2)$. Thus the global error is of $O(\Delta t)$.
		
	\end{proof}
	
	\bibliographystyle{abbrv}
	\bibliography{ref}
	
\end{document}